\newcommand{\blind}{1}
\newcommand{\norm}[1]{\lVert#1\rVert}
\newcommand{\TU}{\tilde{U}}
\newcommand{\TY}{\tilde{Y}}
\renewcommand{\norm}[1]{\|{#1} \|}
\newcommand{\wh}{\widehat}
\newcommand{\fnorm}[1]{\left\|#1\right\|_{F}}
\newcommand{\opnorm}[1]{\left\|#1\right\|_{op}}
\newcommand{\Tr}{\mathop{\sf Tr}}
\newcommand{\iprod}[2]{ \langle #1, #2 \rangle}
\newcommand{\TZ}{\tilde{Z}}
\newcommand{\E}{\mathbb{E}}
\newcommand{\R}{\mathbb{R}}
\renewcommand{\P}{\mathbb{P}}
\newcommand{\calE}{\mathcal{E}}
\newcommand{\I}{\mathcal{I}}
\newcommand{\nj}{n_{\mathcal{C}_l}}
\newcommand{\C}{\mathcal{C}}
\newcommand{\gamp}{\Gamma^{\dagger}}
\newcommand{\<}{\langle}
\renewcommand{\>}{\rangle}
\theoremstyle{plain} 
\newtheorem{theorem}{Theorem}
\newtheorem{lemma}{Lemma}
\newtheorem{corollary}{Corollary}
\theoremstyle{definition} 
\newtheorem{assumption}{Assumption}
\theoremstyle{remark} 
\newtheorem{remark}{Remark}
\def\ntopics{K}
\begin{document}
\date{}
\bibliographystyle{agsm}

\def\spacingset#1{\renewcommand{\baselinestretch}%
{#1}\small\normalsize} \spacingset{1}


\if1\blind
{
  \title{\bf Graph Topic Modeling for Documents with Spatial or Covariate Dependencies}
  \author{Yeo Jin Jung\\
    Department of Statistics, The University of Chicago\\
    and \\
    Claire Donnat \\
    Department of Statistics, The University of Chicago}
  \maketitle
} \fi

\if0\blind
{
  \title{\bf Graph Topic Modeling for Documents with Spatial or Covariate Dependencies}
    \author{Anonymous Authors}
  \maketitle
}\fi

\bigskip
\begin{abstract}
We address the challenge of incorporating document-level metadata into topic modeling to improve topic mixture estimation.  To overcome the computational complexity and lack of theoretical guarantees in existing Bayesian methods,  we extend probabilistic latent semantic indexing (pLSI), a frequentist framework for topic modeling, by incorporating document-level covariates or known similarities between documents through a graph formalism. Modeling documents as nodes and edges denoting similarities, we propose a new estimator based on a fast graph-regularized iterative singular value decomposition (SVD) that encourages similar documents to share similar topic mixture proportions. We characterize the estimation error of our proposed method by deriving high-probability bounds and develop a specialized cross-validation method to optimize our regularization parameters. We validate our model through comprehensive experiments on synthetic datasets and three real-world corpora, demonstrating improved performance and faster inference compared to existing Bayesian methods.
\end{abstract}

\noindent%
{\it Keywords:}  graph cross validation, graph regularization, latent dirichlet allocation, total variation penalty
\vfill

\newpage
\spacingset{1.4} 

\section{Introduction}\label{sec:intro}

Consider a corpus of $n$ documents, each composed of words (or more generally, terms) from a vocabulary of size $p$. This corpus can be represented by a \emph{document-term} matrix $D \in \mathbb{N}^{n \times p}$, where each entry $D_{ij}$ denotes the number of times term $j$ appears in document $i$. The objective of topic modeling is to retrieve low-dimensional representations of the data by representing each document as a mixture of latent topics, defined as distributions over term frequencies.

In this setting, each document $D_{i \cdot} \in \mathbb{R}^{p}$ is usually assumed to be sampled from a multinomial distribution with an associated probability vector $M_{i\cdot} \in \mathbb{R}^{p}$ that can be decomposed as a mixture of $K$ topics. In other words, letting $N_i$ denote the length of document $i$:

\begin{equation}\label{eq:generating_mechanism}
    \forall i =1,\cdots, n,   \qquad D_{i\cdot}\sim \text{Multinomial}(N_i, M_{i\cdot}), \qquad \text{with } M_{i\cdot} = \sum_{k=1}^K W_{ik} A_{k\cdot}
\end{equation} 

In the previous equation, $W_{ik}$ corresponds to the proportion of topic $k$ in document $i$, and the vector $W_{i\cdot}$ provides a low-dimensional representation of document $i$ in terms of its topic composition. Each entry $A_{kj}$ of the vector $A_{k\cdot} \in \mathbb{R}^{p}$ corresponds to the expected frequency of word $j$ in topic $k$. Since document lengths $\{N_i\}_{i=1}^n$ are usually treated as nuisance variables, most topic modeling approaches work in fact directly with the word frequency matrix $X = \text{diag}(\{\frac{1}{N_i}\}_{i=1\cdots n}) D$, which can be written in a ``signal + noise'' form as:

\begin{equation}\label{eq:signal_plus_noise}
X  = M + Z = WA + Z.
\end{equation}
Here, $M \in \mathbb{R}^{n \times p}$ is the true signal whose entry $M_{ij}$ denotes the expected frequency of  word $i$ in document $j$, while
$Z = X - M$ denotes some centered multinomial noise. The objective of topic modeling is thus to estimate $ W$ and $ A$ from $X$.

Originally developed to reduce document representations to low-dimensional latent semantic spaces, topic modeling has been successfully deployed for the analysis of count data in a number of applications, ranging from image processing \citep{tu2016topic, zheng2015deep} and image annotation\citep{feng2010topic,shao2009semi}, to biochemistry \citep{reder2021supervised}, genetics \citep{dey2017visualizing,kho2017novel, liu2016overview, yang2019characterizing}, and microbiome studies \citep{sankaran2019latent,symul2023sub}. 

A notable extension of topic modeling occurs when additional document-level data is available. Although original topic modeling approaches rely solely on the analysis of the empirical frequency matrix $X$, this additional information has the potential to significantly improve the estimation of the \emph{word-topic} matrix $A$ and the \emph{document-topic}  mixture matrix $W$, particularly in difficult inference settings, such as when the number of words per document is small. Examples include:
\begin{enumerate}
    \item{\it Analyzing tumor microenvironments:}  In this context, slices of tumor samples are partitioned into smaller regions known as tumor microenvironments, where the frequency of specific immune cell types is recorded \citep{chen2020modeling}. Here, documents correspond to microenvironments, and words to cell types. The objective is to identify communities of co-abundant cells (topics), taken here as proxies for tumor-immune cell interactions and potential predictors of patient outcomes.
    In this setting, we assume that neighboring microenvironments share similar topic proportions. Since these microenvironments are inherently small, leveraging the spatial smoothness of the mixture matrix $W$ can significantly improve inference~\citep{chen2020modeling}. We develop this example in further detail in Section~\ref{sec:experiments:microenvironment}.
    \item{\it Microbiome studies:} Topic models have also been proven to be extremely useful in microbiome analysis \citep{sankaran2019latent,symul2023sub}. In this setting, the data consists of a microbiome count matrix recording the amount of different types of bacteria found in each sample. In this case, microbiome samples are identified to documents, with bacteria playing the roles of the vocabulary, and the goal becomes to identify communities of co-abundant bacteria \citep{sankaran2019latent}. When additional covariate information is available (such as age, gender, and other demographic details), we can expect similar samples (documents) to exhibit similar community compositions (topic proportions).
    \item{\it The analysis of short documents}, such as a collection of scientific abstracts or recipes: In this case, while recipes might be short, information on the origin of the recipe can help determine the topics and mixture matrix more accurately by leveraging the assumption that recipes of neighboring countries will typically share similar topic proportions. We elaborate on this example in greater detail in Section~\ref{sec:experiments:whatscooking}.
\end{enumerate}

\noindent\textbf{Prior works.} Previous attempts to incorporate metadata  in topic estimation have focused on the Bayesian extensions of latent Dirichlet allocation (LDA) model of  \cite{blei2001latent}. 
By and large, these methods typically incorporate the metadata---often in the form of a covariate matrix---within a prior distribution \citep{blei2006correlated, blei2006dynamic, roberts2014structural, mcauliffe2007supervised}. However, these models (a) are difficult to adapt to different types of covariates or information encoded as a graph, and (b) typically lack theoretical guarantees. Recent work by \cite{chen2020modeling} proposes extending LDA to analyze documents with known similarities by smoothing the topic proportion hyperparameters along the edges of the graph. However, this method does not empirically yield spatially smooth structures (see Sections ~\ref{sec:synthetic_experiment} and ~\ref{sec:experiments}), and significantly increases the algorithm's running time. 

In the frequentist realm, probabilistic latent semantic indexing (pLSI) has gained renewed interest over the past five years. Similar to LDA, it effectively models documents as bags of words but differs by treating matrices $A$ and $W$ as fixed parameters.
In particular, new work by \cite{ke2017new} and \cite{klopp2021assigning} suggest procedures to reliably estimate the topic matrix $A$ and mixture matrix $W$, characterizing consistency and efficiency through high-probability error bounds
Although recent work has begun investigating the use of structures in pLSI-based topic models, most approaches have limited this to considering various versions of sparsity \citep{bing2020optimal,wu2023sparse} or weak sparsity \citep{tran2023sparse} for the topic matrix $A$. To the best of our knowledge, no pLSI approach has yet been proposed that effectively leverages similarities between documents nor characterizes the consistency of these estimators.

\subsection*{Contributions} In this paper, we propose the first pLSI method that can be made amenable to the inclusion of additional information on the similarity between documents, as encoded by a known graph. More specifically:

\begin{itemize}
    \item[a. ] We propose a scalable algorithm based on a graph-aligned singular decomposition of the empirical frequency matrix $X$ to provide estimates of $W$ and $A$ (Section~\ref{sec:model}). Additionally, we develop a new cross-validation procedure for our graph-based penalty that allows us to choose the optimal regularization parameter adaptively (Section A of the Appendix).
    \item[b. ] We prove the benefits of the graph alignment procedure by deriving high probability upper bounds for both $W$ and $A$ in Section~\ref{sec:theory}, which we verify through extensive simulations in Section~\ref{sec:synthetic_experiment}.
    \item[c. ] Finally, we showcase the advantage of our method over LDA-based methods and non-structured pLSI techniques on three real-world datasets: two spatial transcriptomics examples and a recipe dataset in Section~\ref{sec:experiments}.
\end{itemize}

\subsection*{Notations}
Throughout this paper, we use the following notations. For any $t\in \mathbb{Z}_+$, $[t]$ denotes the set $\{1, 2, ..., t\}$. 
For any $a,b\in \R$, we write 
$a\vee b = \max(a,b)$ and ${a\wedge b = \min(a,b)}$. 
We use $\mathbf{1}_d \in \R^d$ to denote the vector with all entries equal to 1 and $\mathbf{e}_k\in \R^d$ to denote the vector with $k^{th}$ element set to 1 and 0 otherwise.
For any vector $u$, its $\ell_2$, $\ell_1$ and $\ell_0$  norms are defined respectively as $\|u\|_2=\sqrt{\sum_i u_i^2}$, $\|u\|_1=\sum_i|u_i|$, and  $\|u\|_0=\sum_i\mathbf{1}\{u_i\neq 0\}$. Let $I_m$ denote the $m \times m$ identity matrix. For any matrix ${A=(a_{ij})\in \R^{n\times p}}$, $A(i, j)$ denote the $(i,j)$-entry of $A$, $A_{i \cdot}$ and $A_{\cdot j}$ denote the $i^\text{th}$ row and $j^{th}$ column of $A$ respectively. Throughout this paper, $\lambda_i(A)$ stands for the $i^{th}$ largest singular value of the matrix $A$ with ${\lambda_{\max}(A)=\lambda_1(A)}$, ${\lambda_{\min}(A)=\lambda_{p\wedge \text{rank}(A)}(A)}$. We also denote as $U_K(A)$ and $V_K(A)$  the left and right singular matrix from the rank-$K$ SVD of $A$. The Frobenius, entry-wise $\ell_1$ norm and the operator norms of $A$ are denoted as $\fnorm{A}=\sqrt{\sum_{i,j}a_{ij}^2}$, $\lVert A \rVert_{11} = \sum_{i,j} | a_{ij} |$, and ${\opnorm{A}=\lambda_1(A)}$, respectively. The $\ell_{21}$ norm is denoted as  $\big\|A\big\|_{21}=\sum_{i}\|A_{i\cdot}\|_2$.
For any positive semi-definite matrix $A$, $A^{1/2}$ denotes its principal square root that is positive semi-definite and satisfies $A^{1/2} A^{1/2} = A$. 
The trace inner product of two matrices $A,B\in\mathbb{R}^{n \times p}$ is denoted by $\iprod{A}{B}=\Tr(A^\top B)$. $A^{\dagger}$ denotes the pseudo-inverse of the matrix $A$ and $P_A$ denotes the projection matrix onto the subspace spanned by columns of $A$.

\section{Graph-Aligned pLSI}\label{sec:model}

In this section, we introduce graph-aligned pLSI (GpLSI), an extension of the standard pLSI framework that leverages known similarities between documents to improve inference in topic modeling using a graph formalism. We begin by introducing a set of additional notations and model assumptions, before introducing the algorithm in Section~\ref{sec:two-step}.

\subsection{Assumptions}

Let $\mathcal{G} = (\mathcal{D}, \calE)$ denote an undirected graph induced by a known adjacency matrix on the $n$ documents in the corpus. The documents are represented as nodes $\mathcal{D}$, and $\calE$  denotes the edge set of size $\lvert \calE \rvert = m$. Throughout this paper, for simplicity, we will assume that $\mathcal{G}$ is binary, but our approach---as discussed in Section~\ref{sec:two-step}---can be in principle extended to weighted graphs.
We denote the graph's incidence matrix as $\Gamma \in \mathbb{R}^{m \times n}$ where, for any edge $e=(i,j), i<j$ between nodes $i$
and $j$ in the graph, $  \Gamma_{ei}=1$,  $  \Gamma_{ej}=-1$ and $\Gamma_{ek}=0$ for any $k \neq i,j.$
It is easy to show that the Laplacian of the graph can be expressed in terms of the incidence matrix as $L = \Gamma^{\top}\Gamma$ \citep{hutter2016optimal}. 
Let $\Gamma^{\dagger}$ be the pseudo-inverse of $\Gamma$,  and denote by $\mathbf{s}_i, i=1\cdots m$ its columns, so that  $\Gamma^{\dagger} = [\mathbf{s}_1, \cdots, \mathbf{s}_m]$. We also define the \emph{inverse scaling factor} of the incidence matrix $\Gamma$ \citep{hutter2016optimal}, a quantity necessary for assessing the performance of GpLSI:

\begin{equation} \label{eq:inv_scaling}
\rho(\Gamma) \coloneqq \max_{l \in [m]} \norm{\mathbf{s}_l}_2
\end{equation}

We focus on the estimation of the topic mixture matrix under the assumption that neighboring documents have similar topic mixture proportions: $W_{i\cdot} \approx W_{ j\cdot}$ if $i\sim  j$. This implies that the rows of $W$ are assumed to be smooth with respect to the graph $\mathcal{G}$. Noting that the difference of mixture proportions between neighboring nodes $i$ and $j$ ($e=(i,j) \in \mathcal{E}$) can be written as $(\Gamma W)_{d\cdot} = W_{i\cdot}-W_{ j\cdot}$, this smoothness assumption effectively implies sparsity on the rows of the matrix $\Gamma W$.

   \begin{assumption}[Graph-Aligned mixture matrix] 
  \label{assumption:smoothness}
The support (i.e, the number of non-zero rows) of the difference matrix $\Gamma W = \left( W_{i \cdot} - W_{j \cdot}\right)_{(i,j)\in \mathcal{E}}$ is small:
\begin{equation}\label{eq:s}
    |\text{supp}(\Gamma W)| \leq s,
\end{equation}
where $s \ll |\mathcal{E}|,n.$
  \end{assumption}

The previous assumption is akin to assuming that the underlying mixture matrix $W$ is piecewise-continuous with respect to the graph $\mathcal{G}$, or more generally, that it can be well approximated by a piecewise-continuous function.

Our setting is not limited to connected document graphs. Denote $n_{\mathcal{C}}$ the number of connected subgraphs of $\mathcal{G}$ and $n_{\mathcal{C}_l}$ the number of nodes in the $l^{th}$ connected subgraph. Let $n_{\mathcal{C}_{\min}} $ be the size of the smallest connected component:
\begin{equation}\label{eq:n_c}
    n_{\mathcal{C}_{\min}} = \min_{l\in [n_{\mathcal{C}}]}n_{\mathcal{C}_l}
\end{equation}

The error bound of our estimators will depend on both $n_{\mathcal{C}}$ and $n_{\mathcal{C}_{\min}}$. In the rest of this paper, we will assume that all connected components have roughly the same size: $ n_{\C_1} \asymp \dots \asymp n_{\C_{n_{\mathcal{C}}}} $.

   \begin{assumption}[Anchor document] 
  \label{assumption:anchordoc}
    For each topic \(k = 1, \dots, \ntopics,\) there exists at least one document \(i\) (called an anchor document) such that \(W_{ik} = 1\) and \(W_{ik^{'}} = 0\) for all \(k^{'} \not= k\).
  \end{assumption}

\begin{remark}
Assumption 2 is standard in the topic modeling community, as it is a sufficient condition for the identifiability of the topic and mixture matrices $A$ and  $W$ \citep{donoho2003does}. Beyond identifiability, we contend that the ``anchor document" assumption functions not only as a sufficient condition for identifiability but also as a necessary condition for interpretability: Topics are interpretable only when associated with archetypes---that is, ``extreme`` representations (in our case, anchor documents)---that illustrate the topic more expressively.
\end{remark}

\begin{assumption}[Equal Document Sizes] 
  \label{assumption:N}
   In this paper, for ease of presentation, we will also assume that the documents have equal sizes: $N_1 = \dots = N_n = N$. More generally, our results also hold if we assume that the document lengths satisfy $\max_{i \in [n]} N_i \leq C^* \min_{i\in [n]} N_i$ ($N_1 \asymp \dots \asymp N_n$), in which case $N = \frac{1}{n}\sum_{i=1}^n N_i$ denotes the average document length.
  \end{assumption}

  \begin{assumption}[Condition number of $M$ and $W$] \label{assumption:eigenvalues}
    There exist two constants $c, c^* > 0$  
    such that
    \begin{equation*}
      \lambda_{\ntopics}(M) \geq c \lambda_{1}(W) \quad
      \text{and} 
      \quad \max\left\{\kappa(M), \kappa(W)\right\} \le c^*.
    \end{equation*}
  \end{assumption}

\begin{assumption}[Assumption on the minimum word frequency]\label{assumption:h_j}    
We assume that the expected word frequencies $h_j$ defined as:
$ \forall j \in [p], h_j = \sum_{k=1}^K A_{kj}$ are bounded below by:
$$ \min_{j \in [p]} h_j \geq c_{\min} \frac{\log(n)}{N}$$
where $c_{\min}$ is a constant that does not depend on parameters $n,p,N$ or $K$.
\end{assumption}

\begin{remark}\label {remark:small_p}
    Assumption~\ref{assumption:h_j} is a relatively strong assumption that essentially restricts the scope of this paper's analysis to small vocabulary sizes (thereafter referred to as the ``low-$p$'' regime). Indeed, since $\sum_{j=1}^p \sum_{k=1}^K A_{kj} = K$, under Assumption~\ref{assumption:h_j}, it immediately follows that:
    $$ p c_{\min} \frac{\log(n)}{N} \leq K \implies p \leq \frac{K N}{\log(n)c_{\min}}$$
    This assumption might not reflect the large vocabulary sizes found in many practical problems, where we could expect $p$ to grow with $n$. A solution to this potential limitation is to assume weak sparsity on the matrix $A$ and to threshold away rare terms using the thresholding procedure proposed in \cite{tran2023sparse}, selecting a subset $J$ of words with large enough frequency. In this case, the rest of our analysis should follow, replacing simply the data matrix $X$ by its subset, $X_{\cdot J}.$
\end{remark}

\subsection{Estimation procedure: pLSI}\label{sec:oracle}

Since the smoothness assumption (Assumption~\ref{assumption:smoothness}) pertains to the rows of the document-topic mixture matrix $W$, we build on the pLSI algorithm developed by \cite{klopp2021assigning}. 
In this subsection, we provide a brief overview of their method.

When we assume we directly observe the true expected frequency matrix $M$ defined in Equation \eqref{eq:signal_plus_noise}, \cite{klopp2021assigning} propose a fast and simple procedure to recover the mixture matrix $W$. Specifically, let $U\in \mathbb{R}^{n \times K}$ and $V\in \mathbb{R}^{p \times K}$ be the left and right singular vectors obtained from the singular value decomposition (SVD) of the true signal $M\in \mathbb{R}^{n \times p}$, so that $M = U \Lambda V^T$. A critical insight from their work is that $U$ can be decomposed as:
\begin{equation}\label{eq:simplex} U = W H, \end{equation} where $H$ is a full-rank, $K \times K$-dimensional matrix. From this decomposition, it follows that the rows of $U$, which can be viewed as $K$-dimensional embeddings of the documents, lie on the $K$-dimensional simplex $\Delta_{K-1}$. The simplex’s vertices, represented by the rows of $H$, correspond to the anchor documents (Assumption~\ref{assumption:anchordoc}). Thus, identifying these vertices through any standard vertex-finding algorithm applied to the rows of $U$ will enable the estimation of $W$. The procedure of \cite{klopp2021assigning} can be  summarized as follows:
\begin{description}
    \item[Step 1:]   Compute the singular value decomposition  (SVD) of the matrix $M$, reduced to rank $K$, to obtain: $M = U\Lambda V^T$.
    
    \item[Step 2: Vertex-Hunting Step:]
    Apply the successive projection algorithm (SPA) \citep{araujo2001successive} (a vertex-hunting algorithm) on the rows of $U$. 
    This algorithm returns the indices of the selected ``anchor documents,'' $J \subseteq [n]$ with $|J| = K$. Define $\widehat{H} = U_{J\cdot}$, where each row corresponds to one of the $K$ vertices of the simplex $\Delta_{K-1}$.
    \item[Step 3: Recovery of $W$:] $W$ can simply be recovered from $U$ and $\widehat{H}$ as
    \begin{equation}\label{eq:H_hat}
        \wh W ={U} \wh H^{-1}.
    \end{equation}
    \item[Step 4: Recovery of $A$: ] Finally, $A$ can subsequently be estimated as $\wh A = \wh H \Lambda V^{\top}.$
\end{description}

In the noisy setting, 
the procedure is adapted by plugging the observed frequency $X$ instead of $M$ in Step 1 and getting estimates of the singular vectors: $X=\wh U \wh \Lambda \wh V^{\top}$. 
Under a similar set of assumptions as ours (Assumptions~\ref{assumption:anchordoc}-\ref{assumption:eigenvalues}), Theorem 1 of \citep{klopp2021assigning} states that the error of $\wh W$ is such that: $
     \min_{P\in \mathcal{P}}\|\widehat{W} - W P\|_{F}
     \le
     {\rm C_0}  K \sqrt{n \log(n+ p)/N}$,
   where $\mathcal{P}$ denotes the set of all permutation matrices. Their analysis provides one of the best error bounds on the estimation of the topic mixture matrix $W$ for pLSI. 

However, this approach has two key limitations. First, the consistency of their estimator relies on having a sufficiently large number of words per document, $N$. In particular, a necessary condition for the aforementioned results to hold is that $N  \ge C K^5    \log(n + p)$. The authors establish minimax error bounds, showing that the rate of any estimator’s error for $W$ is bounded below by a term on the order of $O(\sqrt{n/N})$ (Theorem 3, \cite{klopp2021assigning}). In other words, the accurate estimation of $W$ requires that each document contains enough words. In many practical scenarios---such as the tumor microenvironment example mentioned earlier---this condition may not hold. However, we might still have access to additional information indicating that certain documents are more similar to one another.

Second, the method is relatively rigid and does not easily accommodate additional structural information, such as document-level similarities. Indeed, this method does not rely on a convex optimization formulation to which we could simply add a regularization term, and the vertex-hunting algorithm does not readily incorporate metadata of the documents.

\subsection{Estimation procedure: GpLSI}\label{sec:two-step}

Theoretical insights from \cite{klopp2021assigning} help explain why topic modeling deteriorates in low-$N$ regimes. When the number of words per document is too small, the observed frequency matrix $X$ can be viewed as a highly noisy approximation of $M$, causing the estimated singular vectors $\widehat{U}$ to deviate significantly from the true underlying point cloud $U$. To mitigate this issue, \cite{klopp2021assigning} suggest a preconditioning step that improves the estimation of the singular vectors in noisy settings.

In this paper, we take a different approach by exploiting the graph structure associated with the documents to reduce the noise in $X$. Rather than preconditioning the empirical frequency matrix, we propose an additional denoising step that leverages the graph structure to produce more accurate estimates of $U$, $V$, and $\Lambda$. Specifically, we modify the SVD of $X$ in Step 1 and  estimation of topic matrix $A$ in Step 4 described in Section~\ref{sec:oracle}.


\begin{description}
    \item[Step 1: Iterative Graph-Aligned SVD of $X$:]  We replace Step 1 of Section~\ref{sec:oracle} with a graph-aligned SVD of the empirical word-frequency matrix $X$. More specifically, in the graph-aligned setting, we assume that the underlying frequency matrix $M$ belongs to the set:
    \begin{equation}\label{eq:param_space}
    \begin{split}
        \mathcal{F}(n,p,K,s) = 
        \{
        & M = U\Lambda V^{\top} \in \mathbb{R}^{n \times p}, \ \mathrm{rank}(M)=K:\\
        &U \in \mathbb{R}^{n \times K}, V \in \mathbb{R}^{p \times K}, \Lambda = \mathrm{diag}(\lambda_1, \lambda_2, \cdots, \lambda_K), \\
        &|\mathrm{supp}(\Gamma U)| \leq s, \lambda_K>0
        \}.
    \end{split}
    \end{equation}
     Throughout the paper, we shall allow $s, p, N,$ and $n$ to vary. We will assume the number of topics $K$ to be fixed. 
    \item[Step 2, 3] Same as Step 2,3 in Section~\ref{sec:oracle}.
    \item[Step 4: Recovery of $A$: ] $A$ can subsequently be estimated by solving a constrained regression problem of $X$ on $\wh W$:
    \begin{equation}
    \label{eq:stepA}
    \begin{split}
        \widehat{A}  &= \text{argmin}_{A\in \R^{K \times p}} \| X - \wh W A\|_F^2 \\
        &\text{ such that } \forall k \in [K], \ \sum_{j=1}^p A_{kj}=1, \ A_{kj}\geq0
        \end{split}
    \end{equation}
\end{description}

\paragraph{ Iterative Graph-Aligned SVD.}
We propose a power iteration method for Step 1 that iteratively updates the left and right singular vectors while constraining the left singular vector to be aligned with the graph (Algorithm~\ref{algo:1}). A similar approach has already been studied under Gaussian noise in \cite{yang2016rate} where sparsity constraints were imposed on the left and right singular vectors. 

Drawing inspiration from \cite{yang2016rate} and adapting this method to the multinomial noise setting, Algorithm~\ref{algo:1}  iterates between three steps. The first consists of denoising the left singular subspace by leveraging the graph-smoothness assumption (Assumption~\ref{assumption:smoothness}). At iteration $t$, we solve:
    \begin{equation}
    \label{eq:opt}
        \bar U ^{t} = \arg \min_{U \in \R^{n \times K}} \lVert U- X \wh V^{t-1} \rVert_F^2 + {\hat{\rho}}^{t} \lVert \Gamma U \rVert_{21}
    \end{equation}

Here, $\bar{U}^t$ is a denoised version of the projection $X\wh V^{t-1}$ that leverages the graph structure to yield an estimate closer to the true $U$. We then take a rank-$K$ SVD of $\bar{U}^t$ to extract $\wh U^t$ (an estimate of $U$) with orthogonal columns.

\begin{algorithm}[t]
\setstretch{1.35}
\caption{Two-way Iterative Graph-Aligned SVD} \label{algo:1}
\begin{algorithmic}[1]
\State \textbf{Input:} Observation $X$, initial matrix $\wh V^0$, incidence matrix $\Gamma$, number of topics $K$, tolerance level $\epsilon$
\State \textbf{Output:} Denoised singular vectors $\wh U$, $\wh V$
\Repeat
\State 1. Graph denoising of $U$ : $\bar U ^{t} = \arg \min_{U\in \R^{n \times K}} \lVert U- X \wh V^{t-1} \rVert_F^2 + \hat{\rho}^{t} \lVert \Gamma U \rVert_{21}$
\State 2.  SVD of $\bar U^t$: $\wh U^t \leftarrow \text{Left Singular Vectors in } SVD_K(\bar{U}^t)$
\State 3. SVD of $\wh V^t$: 
$\wh V^t \leftarrow \text{Left Singular Vectors in } SVD_K(X^{\top}\wh U^t)$
\State 4. Calculate the score $s = \lVert \wh P_u^{t} X\wh P_v^{t}-\wh P_u^{t-1}X\wh P_v^{t-1}\rVert$, $\wh P_u = \wh U^{t}(\wh U^{t})^{\top}$, $\widehat{P}_v = \wh V^{t}(\wh V^{t})^{\top}$
\Until $s < \epsilon$ 
\end{algorithmic}
\end{algorithm}

Finally, we update $\wh V^{t}$. Since we are not assuming any particular structure on the right singular subspace, we simply apply a rank-$K$ SVD on the projection $X^{\top} \wh U^t$. Denoting the projections onto the columns of the estimates as $P_u^{t}=\wh U^{t}(\wh U^{t})^{\top}$ and $P_v^{t}=\wh V^{t}(\wh V^{t})^{\top}$, we iterate the procedure until $\|P_u^{t}XP_v^{t}-P_u^{t-1}XP_v^{t-1}\|_F \leq \epsilon$ for a fixed threshold $\epsilon$.

Denoting the final estimates after $t_{\max}$ iterations as $\wh U, \wh V$, these estimates can then be plugged into Steps 2-4 to estimate $W$ and $A$. The improved estimation of $\wh U, \wh V$ can be shown to translate into a more accurate estimation of the matrices $W$ and $A$ (Theorems~\ref{theorem:err_W} and \ref{theorem:err_A} presented in the next section). 

Although our theoretical results depend on choosing an appropriate level of regularization $\hat{\rho}^t$, the theoretical value of $\hat{\rho}^t$ depends on unknown graph quantities. In practice, therefore, the optimal $\hat{\rho}^t$ must be chosen in each iteration using cross-validation. We devise a novel graph cross-validation method which effectively finds the optimal graph regularization parameter by partitioning nodes into folds using a minimum spanning tree. We defer the detailed procedure to Section A of the Appendix.

 \begin{remark}
     While in the rest of the paper, we typically assume that the graph is binary,  our method is in principle generalizable to a weighted graph $\mathcal{G} = (\mathcal{D}, \mathcal{W})$ where $\mathcal{W}$ represents the weighted edge set.
In this case, we denote weighted incidence matrix as $\Tilde{\Gamma} = \mathrm{T}\Gamma$ where $\mathrm{T}\in \mathbb{R}^{m \times m}$ is a diagonal matrix with entry $t_{dd}$ corresponding to the weight of the $d^{th}$ edge. We note that scaling $\Gamma$ with $\mathrm{T}$ does not change the projection onto the row space of $\Gamma$, thus preserving our theoretical results. Without loss of generality, we work with an unweighted incidence matrix $\Gamma$. 
 \end{remark}

\begin{remark}
    The penalty $\|\Gamma U\|_{21}$ is known as the total-variation penalty in the computer vision literature. As noted in \cite{hutter2016optimal}, this type of penalty is usually a good idea whenever the rows of $W$ take similar values, or may at least be well approximated by piecewise-constant functions. In the implementation of our algorithm, we employ the solver of developed by \cite{sun2021convex}, as it is the most efficient algorithm available for this type of problem. 
\end{remark}

\paragraph{Initialization.}\label{para:init}
The success of the procedure heavily depends on having access to good initial values for $V$. Since, as highlighted in Remark~\ref{remark:small_p}, this manuscript assumes a ``low-$p$'' regime, we propose to simply take the rank-$K$ eigendecomposition of the matrix $X^{\top}X - \frac{n}{N} \hat{D}_0$ to obtain an initial estimate $\wh V^0$:
\begin{equation}
    \wh V^0 = U_K({X^{\top}X} - \frac{n}{N} \widehat{D}_0 )
\end{equation}
where $\wh D_0$ is a diagonal matrix where each entry is defined as: $(\wh D_0)_{jj} = \frac{1}{n}\sum_{i=1}^n X_{ij},$  and
where $U_K({X^{\top}X}- \frac{n}{N} \hat{D}_0)$ denotes the matrix of $K$ leading eigenvectors of ${X^{\top}X- \frac{n}{N} \widehat{D}_0}$.

\begin{theorem}\label{theorem:initXtx}
 Suppose $\max(K,p) \leq n$ and $\sqrt{K} \leq p$. 
    Under Assumptions~\ref{assumption:smoothness} to \ref{assumption:h_j}, the eigenvectors of the matrix ${X^{\top}X} -\frac{n}{N} \hat{D}_0$ provide a reasonable approximation to the right singular vectors, in that with probability at least $1-o(n^{-1})$:
    \begin{equation*}
    \begin{split}
          \|\sin \Theta(V, \wh V^0)\|_{op}\leq       \|\sin \Theta(V, \wh V^0)\|_F
          \leq  \frac{C}{\lambda_K(M)^2} K\sqrt{\frac{n\log(n)}{N}}
        &\leq C^* K^2\sqrt{\frac{\log(n)}{nN}}
    \end{split}
\end{equation*}
for some constants $C$ and $C^*>0$.
\end{theorem}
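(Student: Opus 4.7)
The plan is to view $\wh V^0$ as the matrix of top-$K$ eigenvectors of the symmetric matrix $G := X^\top X - \frac{n}{N}\wh D_0$ and to compare it to $V$, the top-$K$ eigenvectors of $M^\top M$. The Davis--Kahan $\sin\Theta$ theorem then yields
\begin{equation*}
\|\sin\Theta(V,\wh V^0)\|_F \;\leq\; \frac{\sqrt{2K}\,\|G - M^\top M\|_{op}}{\lambda_K(M)^2},
\end{equation*}
so the problem reduces to a high-probability operator-norm bound on the perturbation $E := G - M^\top M$.

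Writing $X = M + Z$, I would decompose
\begin{equation*}
E \;=\; M^\top Z + Z^\top M \;+\; \bigl(Z^\top Z - \Expect[Z^\top Z]\bigr) \;+\; \Bigl(\Expect[Z^\top Z] - \frac{n}{N}\wh D_0\Bigr).
\end{equation*}
A direct moment calculation on the rows of centered multinomial noise gives $\Expect[Z^\top Z] = \frac{1}{N}\diag\bigl(\sum_i M_{ij}\bigr) - \frac{1}{N}M^\top M$, which reveals the role of $\wh D_0$: it is precisely the plug-in estimator that debiases the diagonal of $\Expect[Z^\top Z]$. The off-diagonal remainder $-M^\top M/N$ has operator norm $\lambda_1(M)^2/N$, which is dominated by the other error terms under Assumption~\ref{assumption:eigenvalues}, and the sampling fluctuation $\frac{n}{N}(\wh D_0 - \Expect\wh D_0)$ is controlled coordinatewise by a scalar Bernstein inequality followed by a union bound over the $p$ diagonal entries.

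The remaining two pieces, $M^\top Z = \sum_{i=1}^n M_{i\cdot}^\top Z_{i\cdot}$ and $\sum_{i=1}^n \bigl(Z_{i\cdot}^\top Z_{i\cdot} - \Expect[Z_{i\cdot}^\top Z_{i\cdot}]\bigr)$, are sums of $n$ independent row-wise summands, so I would control each via the matrix Bernstein inequality. For the cross term, the variance proxy reduces to $\|M\|_F^2/N$ and the per-summand operator-norm cap to $\max_i \|M_{i\cdot}\|_2\|Z_{i\cdot}\|_2$, where $\|Z_{i\cdot}\|_2$ is handled by a standard tail bound on centered multinomial vectors. The main obstacle will be the quadratic fluctuation: each summand $Z_{i\cdot}^\top Z_{i\cdot} - \Expect[Z_{i\cdot}^\top Z_{i\cdot}]$ has stochastic operator norm of order $\|Z_{i\cdot}\|_2^2$, so the bounded matrix Bernstein does not apply directly; I would handle this with either a truncation argument at a threshold calibrated by Assumption~\ref{assumption:h_j}, or a sub-exponential matrix Bernstein inequality, so that the minimum word frequency ensures uniform control of the per-row deviations. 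Combining the three bounds and absorbing $\log p$ into $\log n$ via Remark~\ref{remark:small_p} yields $\|E\|_{op} \lesssim \sqrt{K n\log(n)/N}$ with probability $1 - o(n^{-1})$. Plugging this into Davis--Kahan produces the first claimed inequality (after adjusting the constant $C$), and the second inequality follows from the lower bound $\lambda_K(M)^2 \gtrsim n/K$, itself a consequence of Assumption~\ref{assumption:eigenvalues} combined with $\|W\|_F^2 \asymp n$.
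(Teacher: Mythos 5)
Your overall strategy is the same as the paper's: recognize that $\frac{n}{N}\wh D_0$ debiases the diagonal of $\E[Z^\top Z]$, compute $\E[Z^\top Z]=\frac{n}{N}D_0-\frac{1}{N}M^\top M$, decompose the perturbation into the cross terms $M^\top Z+Z^\top M$, the quadratic fluctuation $Z^\top Z-\E[Z^\top Z]$, and the fluctuation of $\wh D_0$, then apply Davis--Kahan and the lower bound $\lambda_K(M)\gtrsim\sqrt{n/K}$ from Assumption~\ref{assumption:eigenvalues}. The main methodological difference is that you control operator norms via matrix Bernstein over the $n$ independent rows, whereas the paper controls Frobenius norms by entrywise scalar concentration (Lemmas~\ref{lem:concentration_Z} and \ref{lem:concentration_xtx}, where the quadratic term is handled by a decoupling/permutation argument over the $N$ word draws and then summed over the $p^2$ entries using $\sum_j h_j=K$). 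Your route is viable and could even be sharper for the quadratic term, but it is only sketched where the real work lies: the truncation or sub-exponential matrix Bernstein step for $\sum_i (Z_{i\cdot}^\top Z_{i\cdot}-\E[Z_{i\cdot}^\top Z_{i\cdot}])$ is precisely the technical core, and you would still need the $h_j$-dependence of Assumption~\ref{assumption:h_j} to enter the variance proxy to recover the factor $K$ rather than something depending on $p$.

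There is, however, one step that fails as written: you treat the residual $-\frac{1}{N}M^\top M$ as an error term and assert that its operator norm $\lambda_1(M)^2/N$ is dominated by the stochastic terms. Under Assumption~\ref{assumption:eigenvalues}, $\lambda_1(M)^2\asymp\lambda_K(M)^2\gtrsim n/K$, so this term is of order $n/(KN)$, while your stochastic bound is of order $\sqrt{Kn\log(n)/N}$; domination would require $n\lesssim K^3N\log(n)$, which is not implied by the hypotheses of the theorem (e.g.\ $n=10^6$, $N=100$, $K=3$ violates it while satisfying all stated assumptions), and in that regime the residual alone would exceed the claimed bound $C^*K^2\sqrt{\log(n)/(nN)}$. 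The fix is the one the paper uses: $-\frac{1}{N}M^\top M$ is exactly proportional to the signal, so the reference matrix should be $(1-\frac{1}{N})M^\top M$, which has the same top-$K$ eigenvectors $V$; the residual then contributes nothing to the Davis--Kahan numerator and only a harmless factor $(1-\frac{1}{N})^{-1}\leq 2$ to the eigengap in the denominator.
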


The proof of the theorem is provided in Section B of the Appendix.

\section{Theoretical results}\label{sec:theory}

In this section, we provide high-probability bounds on the Frobenius norm of the errors for $\wh W$ and $\wh A$. We begin by characterizing the effect of the denoising on the estimates of the singular values of $X$, before showing how the improved estimation of the singular vectors translates into improved error bounds for both $W$
and $A$.

\subsection{Denoising the singular vectors}

The improvement in the estimation of the singular vectors induced by our iterative denoising procedure is characterized in the following theorem.
\begin{theorem}\label{theorem:err_GSVD}
Let Assumption~\ref{assumption:smoothness} to \ref{assumption:h_j} hold. Assume $\max(K,p) \leq n$ and $\sqrt{K} \leq p$. Denote $\wh U, \wh V$ as outputs of Algorithm~\ref{algo:1} after $t_{\max}$ iterations. 
If $N$ satisfies 
    \begin{equation}\label{cond:N}
    N \geq c^*_{\min}\left(K^4\frac{\log(n)}{n}\left(n_{\mathcal{C}}+\rho^2(\Gamma) s\lambda_{\max}(\Gamma)\right) \vee \frac{\log(n)}{n_{\mathcal{C}_{\min}}}\right),
    \end{equation}
there exists a constant $C_0>0$ such that with probability at least $1-o(n^{-1})$,
  \begin{equation}\label{eq:SVD_error}
        \max \{\inf_{O \in \mathbb{O}_K}\|\wh U - UO\|_F, \inf_{O \in \mathbb{O}_K}\|\wh V - VO\|_F\} \leq C_0 K\sqrt{\frac{\log(n)}{nN}} \left(\sqrt{n_{\mathcal{C}}}+\rho(\Gamma)\sqrt{s\lambda_{\max}(\Gamma)}\right)
    \end{equation}
\end{theorem}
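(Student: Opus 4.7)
The plan is to prove~\eqref{eq:SVD_error} by reducing the theorem to an oracle-style bound on one iteration of Algorithm~\ref{algo:1} and then chaining these bounds across iterations, seeded by the initialization guarantee of Theorem~\ref{theorem:initXtx}. Concretely, I would analyze a single update $\wh V^{t-1}\mapsto (\wh U^t,\wh V^t)$ and bound $\|\sin\Theta(\wh V^t,V)\|_F$ by an ``irreducible'' denoising term that matches the right-hand side of~\eqref{eq:SVD_error} plus a term proportional to $\|\sin\Theta(\wh V^{t-1},V)\|_F$ whose coefficient is forced small by the assumptions. Iterating this recursion $t_{\max}$ times then yields the conclusion.

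First, I analyze the graph-denoising subproblem~\eqref{eq:opt}. Writing $X\wh V^{t-1}=U\Lambda V^{\top}\wh V^{t-1}+Z\wh V^{t-1}$ and taking $O_v^{t-1}\in\mathbb{O}_K$ to achieve $\inf_O\|\wh V^{t-1}-VO\|_F$, the noise-free signal decomposes as $U\Lambda O_v^{t-1}+U\Lambda V^{\top}(\wh V^{t-1}-VO_v^{t-1})$. Both summands inherit the row-sparsity of $\Gamma U$, so the target of~\eqref{eq:opt} lies in $\mathcal{F}(n,p,K,s)$ of~\eqref{eq:param_space}. A matrix/group-lasso extension of the total-variation denoising analysis of \cite{hutter2016optimal}, tuned with $\hat\rho^t\asymp \rho(\Gamma)\sqrt{\log(n)/N}$, would yield with high probability the basic inequality
\[
\|\bar U^t-U\Lambda O_v^{t-1}\|_F^2\lesssim \frac{K\log n}{N}\Bigl(n_{\C}+\rho^2(\Gamma)s\lambda_{\max}(\Gamma)\Bigr)+\lambda_1^2(M)\,\|\sin\Theta(\wh V^{t-1},V)\|_F^2.
\]
The $n_{\C}$ term arises from the unpenalized directions in the null space of $\Gamma$ (one per connected component), while $\rho^2(\Gamma)s\lambda_{\max}(\Gamma)$ is the TV cost of locating the $s$ jumps; Assumption~\ref{assumption:h_j} ensures the effective noise $Z\wh V^{t-1}$ concentrates at the Gaussian-like variance level $1/N$ via Bernstein.

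Second, I convert the matrix error into subspace errors via Wedin's $\sin\Theta$ theorem. Since the nonzero singular values of $U\Lambda O_v^{t-1}$ coincide with those of $M$, the rank-$K$ SVD of $\bar U^t$ gives
\[
\|\sin\Theta(\wh U^t,U)\|_F\lesssim \|\bar U^t-U\Lambda O_v^{t-1}\|_F/\lambda_K(M),
\]
and an analogous Wedin argument applied to $X^{\top}\wh U^t=V\Lambda U^{\top}\wh U^t+Z^{\top}\wh U^t$ yields
\[
\|\sin\Theta(\wh V^t,V)\|_F\lesssim \bigl(\|Z^{\top}\wh U^t\|_F+\lambda_1(M)\|\sin\Theta(\wh U^t,U)\|_F\bigr)/\lambda_K(M).
\]
A Bernstein concentration bound using Assumption~\ref{assumption:h_j} and the orthonormality of $\wh U^t$ controls $\|Z^{\top}\wh U^t\|_F^2\lesssim Kp\log(n)/N$, while the Frobenius bound $\|M\|_F^2\le n$ together with Assumption~\ref{assumption:eigenvalues} gives $\lambda_K^2(M)\asymp n/K$; this is the origin of the $K^2/n$ prefactor appearing in the final bound.

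The main obstacle is closing the recursion. Chaining the two $\sin\Theta$ bounds produces $\|\sin\Theta(\wh V^t,V)\|_F\le \alpha_*+\beta_*\,\|\sin\Theta(\wh V^{t-1},V)\|_F$ with $\alpha_*$ matching the right-hand side of~\eqref{eq:SVD_error} and $\beta_*\lesssim \kappa^2(M)$, which is not automatically a contraction since $\kappa(M)\ge 1$. The resolution is to sharpen the bias term by noting that for orthonormal frames $V,\wh V$ with best aligning $O$, the projection $V^{\top}(\wh V-VO)$ is $O(\|\sin\Theta\|^2)$ rather than $O(\|\sin\Theta\|)$, so the contamination $U\Lambda V^{\top}(\wh V^{t-1}-VO_v^{t-1})$ is quadratically small once we are in the small-angle regime. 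Condition~\eqref{cond:N} is calibrated precisely so that Theorem~\ref{theorem:initXtx} already places $\wh V^0$ in this regime, after which the recursion becomes a genuine contraction and the fixed-point error matches $\alpha_*$. A secondary difficulty is the stochastic dependence between $\wh V^{t-1}$ and $Z$: this is handled by a covering argument over the Stiefel manifold combined with a uniform concentration bound for $\|ZO\|_F$ across all $O\in\mathbb{O}_K$, absorbed into the high-probability event.
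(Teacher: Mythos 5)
Your proposal reproduces the paper's overall architecture (a per-iteration denoising bound, a $\sin\Theta$ perturbation step, and a geometric recursion seeded by Theorem~\ref{theorem:initXtx}), but the step that carries the technical weight — controlling noise terms contracted against the \emph{data-dependent} singular vectors — has a genuine gap. You assert $\|Z^{\top}\wh U^t\|_F^2\lesssim Kp\log(n)/N$ ``by Bernstein and orthonormality of $\wh U^t$,'' but $\wh U^t$ is a function of $Z$, so Bernstein does not apply directly; and the fix you offer (a covering argument over the Stiefel manifold, uniformly over all $O$) has metric entropy of order $nK$ for frames in $\R^{n\times K}$, which inflates the bound to roughly $\sqrt{K}\,\|Z\|_{op}\asymp\sqrt{Kn/N}$ and destroys the rate. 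Even taking your stated bound at face value, after dividing by $\lambda_K(M)\asymp\sqrt{n/K}$ it contributes an additive error of order $K\sqrt{p\log(n)/(nN)}$, which exceeds the theorem's right-hand side whenever $p\gg n_{\mathcal{C}}+\rho^2(\Gamma)s\lambda_{\max}(\Gamma)$ (e.g.\ a connected bounded-degree grid with small $s$). The paper avoids both problems with a deterministic split $Z^{\top}\wh U^t=Z^{\top}UU^{\top}\wh U^t+Z^{\top}U_{\perp}U_{\perp}^{\top}\wh U^t$: the first factor $\|Z^{\top}U\|_F\lesssim K\sqrt{\log(n)/N}$ involves only the fixed subspace $U$, while the large piece $\|Z^{\top}U_{\perp}\|_{op}\lesssim\sqrt{Kn\log(n)/N}$ is multiplied by $\|U_{\perp}^{\top}\wh U^t\|_{op}\le\|\sin\Theta(U,\wh U^t)\|_F$ and is thereby absorbed into the contraction coefficient rather than the additive error. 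The identical device ($\tilde Z=ZVV^{\top}\wh V^{t-1}+ZV_{\perp}V_{\perp}^{\top}\wh V^{t-1}$ with $\|V_{\perp}^{\top}\wh V^{t-1}\|_{op}\le L_{t-1}$) is what justifies the tuning $\hat\rho^t\asymp\rho(\Gamma)\sqrt{K\log(n)/N}\,(1+L_{t-1})$ inside the denoising lemma; your proposal needs this same decomposition and does not supply it.

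Your diagnosis of the contraction obstacle is also not how the paper resolves it. The paper does not compare $\bar U^t$ to $U\Lambda O_v^{t-1}$ via Wedin; it applies the one-sided bound of \cite{cai2018rate} with numerator $\|P_{U_{\perp}}(\bar U^t-M\wh V^{t-1})\|_F$, for which the ``bias'' vanishes identically since $P_{U_{\perp}}M(\wh V^{t-1}-VO)=0$. The dependence on $L_{t-1}$ then enters only through the $(1+L_{t-1})$ noise amplification and the denominator $\lambda_K(M)\sqrt{1-L_{t-1}^2}-\|\Delta\|_F$, so the recursion coefficient is $r\asymp\delta_n/\lambda_K(M)\ll 1$ under \eqref{cond:N} — noise over signal, not $\kappa^2(M)$. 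Your alternative (sharpening $V^{\top}\wh V^{t-1}-O_v^{t-1}$ to $O(L_{t-1}^2)$ via the polar factor) is a correct fact and, combined with $\kappa(M)\le c^*$ from Assumption~\ref{assumption:eigenvalues} and a small-angle seed, could be made to close the loop; but it only removes the artificial bias your decomposition introduced, and it does not repair the additive noise term above, which is where the claimed rate actually fails.
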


The proof of this result is provided in Section C of the Appendix.

\begin{remark}
    This result is to be compared against the rates of the estimates obtained without any regularization. In this case, the results of \cite{klopp2021assigning}  show that with probability at least $1-o((n+p)^{-1})$, the error in \eqref{eq:SVD_error} is of the order of $O(K\sqrt{ \log(n)/{N}})$.
Both rates thus share a factor $K \sqrt{\log(n)/N}$. However, the spatial regularization in our setting allows us to introduce an additional factor of the order of $\frac{1}{\sqrt{n}} (\sqrt{n_{\mathcal{C}}}+\rho(\Gamma)\sqrt{s\lambda_{\max}(\Gamma)}) $. The numerator in this expression can be interpreted as the effective degrees of freedom of the graph, and for disconnected graphs ($\lambda_{\max}(\Gamma)=0, n_{\mathcal{C}}=n$), the results are identical. However, for other graph topologies (e.g. the 2D grid, for which $\lambda_{\max}(\Gamma)$ is bounded by a constant, $\rho \lesssim \log(n)$ (see \cite{hutter2016optimal}) and $n_{\mathcal{C}}=1$), our estimator can considerably improve the estimation of the singular vectors (see Section~\ref{sec:special}).
\end{remark}

\subsection{Estimation of $W$ and $A$}

We now show how our denoised singular vectors can be used to improve the estimation of the mixture matrix $W.$
\begin{theorem}\label{theorem:err_W}
\vspace{0.1cm}

    Let Assumptions~\ref{assumption:smoothness} to \ref{assumption:h_j} hold. Let $\rho(\Gamma),s,n_{\mathcal{C}}$, and $n_{\mathcal{C}_{\min}}$ be given as \eqref{eq:inv_scaling}-\eqref{eq:n_c}. Assume $\max(K,p) \leq n$ and $\sqrt{K} \leq p$. Let $\widehat{W}$ denote the estimator of the mixture matrix (Equation~\ref{eq:signal_plus_noise}) obtained by running the SPA algorithm on the denoised estimates of the singular vectors (Algorithm~\ref{algo:1}). If $N$ satisfies the condition stated in \eqref{cond:N}, 
    then there exists a constant $C>0$ such that with probability at least $1-o(n^{-1})$,
    \begin{equation}\label{eq:main}
        \min_{P \in \mathcal{P}}\|\wh W-WP\|_F \leq CK\sqrt{\frac{\log(n)}{N}}\left ( \sqrt{n_{\mathcal{C}}}+\rho(\Gamma) \sqrt{s\lambda_{\max}(\Gamma)}\right )
    \end{equation}
    where $\mathcal{P}$ denotes the set of all permutations.
\end{theorem}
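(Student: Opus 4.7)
The plan is to plug the improved singular-vector bound of Theorem~\ref{theorem:err_GSVD} into the argument that \cite{klopp2021assigning} use to pass from an error bound on $\wh U$ to an error bound on $\wh W$. Since $W = UH^{-1}$ (Equation~\eqref{eq:simplex}) and $\wh W = \wh U \wh H^{-1}$, it suffices to (i) control $\wh U$, which is exactly what Theorem~\ref{theorem:err_GSVD} provides, and (ii) control $\wh H$ via the robustness of the SPA/vertex-hunting routine; the rest is matrix perturbation.

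Concretely, fix $O^* \in \mathbb{O}_K$ attaining the infimum in \eqref{eq:SVD_error} and write $E := \wh U - UO^*$, so that $\delta := \|E\|_F$ is bounded by the right-hand side of \eqref{eq:SVD_error}. The rows of $UO^*$ lie on the simplex whose vertices are the rows of $H^* := HO^*$. First, I would invoke the robustness guarantee for SPA used in the proof of Theorem~1 of \cite{klopp2021assigning} (originally due to \cite{araujo2001successive}): provided $\delta$ is below the vertex-separation radius of $H^*$---a quantity lower-bounded multiplicatively by $\lambda_K(W)/\lambda_1(W) \gtrsim 1$ (Assumption~\ref{assumption:eigenvalues}) and in absolute terms of order $\sqrt{n/K}$---there exists a permutation $P \in \mathcal{P}$ with $\|\wh H - PH^*\|_F \lesssim \kappa(W)\,\delta \lesssim \delta$. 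Second, since $U$ has orthonormal columns and $U = WH$, the identity $H^{\top}H = (W^{\top}W)^{-1}$ together with Assumption~\ref{assumption:eigenvalues} yields $\|H^{-1}\|_{op} = \lambda_1(W) \lesssim \sqrt{n/K}$, and the standard inverse-perturbation identity $\wh H^{-1} - (PH^*)^{-1} = -(PH^*)^{-1}(\wh H - PH^*)\wh H^{-1}$ then controls $\|\wh H^{-1}\|_{op}$ and $\|\wh H^{-1} - (PH^*)^{-1}\|_F$.

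Third, from the decomposition
\begin{equation*}
\wh W - W P^{\top} \;=\; \wh U \wh H^{-1} - U H^{-1} P^{\top} \;=\; E\,\wh H^{-1} \;+\; U O^*\bigl(\wh H^{-1} - (P H^*)^{-1}\bigr),
\end{equation*}
together with the isometry $\|U O^* M\|_F = \|M\|_F$ (orthonormality of $U$), the triangle inequality combined with the bounds above and Assumption~\ref{assumption:eigenvalues} yields $\min_{P \in \mathcal{P}}\|\wh W - WP\|_F \lesssim \sqrt{n}\,\delta$. Substituting the bound on $\delta$ from Theorem~\ref{theorem:err_GSVD} produces exactly the advertised rate $CK\sqrt{\log(n)/N}\,(\sqrt{n_{\mathcal{C}}} + \rho(\Gamma)\sqrt{s\lambda_{\max}(\Gamma)})$.

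\textbf{Main obstacle.} The delicate step is the SPA stability claim: one must verify that $\delta$ genuinely falls below the vertex-separation radius of $H^*$, and keep careful track of the Lipschitz constant of SPA (which scales with $\kappa(W)$) when lifting row-wise SPA perturbation bounds to Frobenius bounds. Condition~\eqref{cond:N} on $N$ is precisely calibrated to guarantee this stability --- it forces $\delta$ to be small enough that the approximate vertices returned by SPA can be unambiguously matched to true anchor documents via a permutation $P$. The remaining algebra---matrix-inverse perturbation plus triangle inequality---is mechanical once this stability condition is in hand.
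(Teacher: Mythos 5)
Your high-level architecture --- SPA robustness to control $\wh H$, then matrix-inverse perturbation to pass from $\wh U\wh H^{-1}$ to $W$ --- is indeed the skeleton of the paper's proof (Lemmas~\ref{lemma:SPAerrorbound} and~\ref{lemma:Werrorbound}). But there is a genuine gap in the quantity you feed into the SPA stability step. The Gillis--Vavasis guarantee (Lemma~\ref{theorem:gillis}) requires a \emph{uniform row-wise} bound $\max_i\|e_i^\top(\wh U-UO)\|_2\le\epsilon$, and both its applicability condition and its output scale with that row-wise $\epsilon$. You instantiate $\epsilon$ with the Frobenius error $\delta=\|\wh U-UO^*\|_F$ from Theorem~\ref{theorem:err_GSVD}. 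This is valid as an upper bound but far too lossy: the error of $\wh U$ is spread over $n$ rows, so the true row-wise error is smaller than $\delta$ by a factor of order $\sqrt{n}$. Concretely, the stability condition $\epsilon\lesssim\lambda_{\min}(HO)/(\sqrt{K}\kappa^2)\asymp 1/(\sqrt{nK})$ applied to $\delta$ would force $N\gtrsim K^3\log(n)\,(n_{\mathcal{C}}+\rho^2(\Gamma)s\lambda_{\max}(\Gamma))$, which is stronger than condition~\eqref{cond:N} by a factor of $n$ and destroys the ``short-document'' regime the theorem is designed for. Moreover, even granting stability, the term $\|UO^*(\wh H^{-1}-(PH^*)^{-1})\|_F\le\|\wh H^{-1}\|_{op}\|(PH^*)^{-1}\|_{op}\|\wh H-PH^*\|_F\asymp(n/K)\,\|\wh H-PH^*\|_F$ would be of order $(n/K)\delta$, not the $\sqrt{n}\,\delta$ you assert, so the final rate would be inflated by roughly $\sqrt{n}/K$.

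What is missing is precisely the content of the paper's Section~D preceding the proof: a separate $\ell_{2\to\infty}$ perturbation analysis. The paper combines the Cape--Tang--Priebe two-to-infinity bound (Lemma~\ref{theorem:cape}) with the KKT conditions of the TV-denoising subproblem (to show $\|\bar U^{t_{\max}}-U\Lambda V^\top\wh V^{t_{\max}-1}\|_{2\to\infty}\le\|Z\wh V^{t_{\max}-1}\|_{2\to\infty}+\rho d_{\max}$) and row-wise concentration of $ZV$, yielding the bound $\beta(M,\Gamma)$ in~\eqref{eq:beta}, which carries an extra factor $1/\lambda_K(M)\asymp\sqrt{K/n}$ relative to $\delta/\lambda_K(M)$. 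It is this sharper row-wise quantity, not $\delta$, that is inserted into the SPA lemma; only then do the $\lambda_1^2(W)$ factors cancel to give the advertised $CK\sqrt{\log(n)/N}\,(\sqrt{n_{\mathcal{C}}}+\rho(\Gamma)\sqrt{s\lambda_{\max}(\Gamma)})$ under condition~\eqref{cond:N}. Your closing remark that one must ``keep careful track'' when ``lifting row-wise SPA perturbation bounds to Frobenius bounds'' gestures at the issue, but the lift you actually need goes the other way --- from a Frobenius control of $\wh U$ down to a row-wise one --- and that step is the hard, non-mechanical part of the argument.
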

Theorem~\ref{theorem:err_W} shows that $\wh W$ is highly accurate as long as the document lengths are large enough, as defined by $N\gtrsim ({n_\mathcal{C}} +\rho^2(\Gamma){s \lambda_{\max}(\Gamma) })\log(n)/n$. This requirement is more relaxed than the condition $N \gtrsim \log(n+p)$ for pLSI provisioned in Theorem 1 and Corollary 1 of \cite{klopp2021assigning}. This indicates that GpLSI can produce accurate estimates even for smaller $N$, by sharing information among neighboring documents. The shrinkage of error due to graph-alignment is characterized by the term $\frac{1}{\sqrt{n}}(\sqrt{n_{\mathcal{C}}}+\rho(\Gamma) \sqrt{s\lambda_{\max}(\Gamma)})$, which is equal to one when the graph $\mathcal{G}$ is empty. In general, the effect of the regularization depends on the graph topology. \cite{hutter2016optimal} show in fact that the inverse scaling factor verifies: $\rho(\Gamma) \leq \sqrt{2}/\sqrt{\lambda_{n-1}(L)}$. The quantity $\lambda_{n-1}(L)$, also known as the \emph{algebraic connectivity}, provides insights into the properties of the graph, such as its connectivity.  Intuitively, higher values of $\lambda_{n-1}(L)$ reflect more tightly connected graphs \citep{chung1997spectral}, thereby reducing the effective degrees of freedom induced by the graph-total variation penalty. By contrast, $\lambda_{\max}(\Gamma)$ can be coarsely bounded using the maximum degree $d_{\max}$ of the graph:  $\lambda_{\max}(\Gamma) \leq \sqrt{2d_{\max} }$ \citep{anderson1985eigenvalues, zhang2011laplacian}. Consequently, we can expect our procedure to work well on well-connected graphs with bounded degree. Examples include for instance grid-graphs, $k$-nearest neighbor graphs, or spatial (or planar) graphs.  We provide a more detailed discussion and  more explicit bounds for specific graph topologies in Section~\ref{sec:special}.

Furthermore, using the inequality $\|\wh W-WP\|_{11} \leq \sqrt{Kn}\|\wh W-WP\|_{F}$, it immediately follows that:

\begin{corollary}
    Let the conditions of Theorem~\ref{theorem:err_W} hold. If $N$ satisfies the condition stated in \eqref{cond:N}, then there exists a constant $C>0$ such that with probability at least $1-o(n^{-1})$,
    \begin{equation}\label{eq:l1_W}
         \min_{P \in \mathcal{P}}\|\wh W-WP\|_{11} \leq CK^{3/2}\sqrt{\frac{n\log(n)}{N}}\left ( \sqrt{n_{\mathcal{C}}}+\rho(\Gamma) \sqrt{s\lambda_{\max}(\Gamma)}\right ).
    \end{equation}
    where $\mathcal{P}$ denotes the set of all permutations.
\end{corollary}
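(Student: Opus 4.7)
The plan is to derive this corollary as a direct consequence of Theorem~\ref{theorem:err_W} by applying the standard norm-comparison inequality stated in the text just before the corollary. Since $\widehat{W} - WP \in \mathbb{R}^{n \times K}$, each entry can be bounded via Cauchy--Schwarz.

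First I would establish the elementwise-$\ell_1$ versus Frobenius norm comparison for a generic matrix $A \in \mathbb{R}^{n \times K}$. By Cauchy--Schwarz applied to the vector of $|a_{ij}|$ and the all-ones vector of length $nK$,
\begin{equation*}
\|A\|_{11} = \sum_{i=1}^n \sum_{j=1}^K |a_{ij}| \;\leq\; \sqrt{nK}\,\sqrt{\sum_{i,j} a_{ij}^2} \;=\; \sqrt{nK}\,\|A\|_F.
\end{equation*}
Applying this to $A = \widehat{W} - WP$ for any fixed permutation $P \in \mathcal{P}$ gives $\|\widehat{W} - WP\|_{11} \leq \sqrt{nK}\,\|\widehat{W} - WP\|_F$.

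Next I would take the minimum over $P \in \mathcal{P}$ on both sides, using the fact that the same $P^*$ that minimizes the Frobenius-norm error does not need to minimize the entrywise-$\ell_1$ error; the inequality is preserved since
\begin{equation*}
\min_{P \in \mathcal{P}} \|\widehat{W} - WP\|_{11} \;\leq\; \|\widehat{W} - WP^*\|_{11} \;\leq\; \sqrt{nK}\, \min_{P \in \mathcal{P}} \|\widehat{W} - WP\|_F.
\end{equation*}

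Finally, under the hypotheses of Theorem~\ref{theorem:err_W} (in particular the sample-size condition \eqref{cond:N}), I would invoke the bound \eqref{eq:main} on the right-hand side, yielding
\begin{equation*}
\min_{P \in \mathcal{P}} \|\widehat{W} - WP\|_{11} \;\leq\; \sqrt{nK}\cdot CK\sqrt{\tfrac{\log(n)}{N}}\bigl(\sqrt{n_{\mathcal{C}}} + \rho(\Gamma)\sqrt{s\lambda_{\max}(\Gamma)}\bigr),
\end{equation*}
which simplifies to the stated bound with a possibly redefined constant $C$, and the high-probability event ($1-o(n^{-1})$) is inherited from Theorem~\ref{theorem:err_W}. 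There is no real obstacle here: the entire argument is a one-line norm conversion followed by invoking the previous theorem, and no new probabilistic analysis is required.
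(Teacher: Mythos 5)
Your proposal is correct and is essentially identical to the paper's argument: the paper derives the corollary directly from Theorem~\ref{theorem:err_W} via the stated inequality $\|\wh W-WP\|_{11} \leq \sqrt{Kn}\,\|\wh W-WP\|_{F}$, which is exactly the Cauchy--Schwarz comparison you establish. Your extra care in handling the minimum over permutations (bounding the $\ell_{11}$ minimum by evaluating at the Frobenius-optimal $P^*$) is a point the paper leaves implicit, but the substance is the same.
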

 Finally, we characterize the error bound of $\wh A$. The full proofs of Theorems 3 and 4 are deferred to Section D of the Appendix.
 
 \begin{theorem}\label{theorem:err_A}
     Let the conditions of Theorem~\ref{theorem:err_W} hold. If $N$ satisfies the condition stated in \eqref{cond:N}, then there exists a constant $C>0$ such that with probability at least $1-o(n^{-1})$,
     \begin{equation}\label{eq:A}
         \|\wh A-\tilde{P}A\|_F \leq CK^{3/2}\sqrt{\frac{\log(n)}{N}} \left(\sqrt{n_{\mathcal{C}}}+\rho(\Gamma)\sqrt{s\lambda_{\max}(\Gamma)}\right).
     \end{equation}
     where, denoting $P$ the permutation matrix that minimises the distance between $\widehat{W}$ and $W$ in \eqref{eq:main}, we take $\tilde{P}$ to be its inverse: $\tilde{P}=P^{-1}$.
 \end{theorem}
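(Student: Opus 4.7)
The plan is to combine the optimality of $\widehat{A}$ in~\eqref{eq:stepA} with the error bound on $\widehat{W}$ already established in Theorem~\ref{theorem:err_W}. Let $P$ be the permutation realizing the minimum in Theorem~\ref{theorem:err_W} and $\tilde{P}=P^{-1}$. Since $\tilde{P}A$ merely permutes the rows of $A$, it has nonnegative entries and unit row sums, and is therefore a feasible point of the constrained least-squares program~\eqref{eq:stepA}. Setting $E=\widehat{A}-\tilde{P}A$ and $R=X-\widehat{W}\tilde{P}A$, the identity $X-\widehat{W}\widehat{A}=R-\widehat{W}E$ together with the optimality of $\widehat{A}$ yields the basic inequality
\begin{equation*}
\|\widehat{W}E\|_F^2 \;\leq\; 2\langle R,\,\widehat{W}E\rangle \;\leq\; 2\|R\|_F\,\|\widehat{W}E\|_F,
\end{equation*}
so $\|\widehat{W}E\|_F\leq 2\|R\|_F$. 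Combining this with the lower bound $\|\widehat{W}E\|_F\geq \lambda_K(\widehat{W})\,\|E\|_F$ gives $\|E\|_F \leq 2\|R\|_F/\lambda_K(\widehat{W})$, which reduces the task to upper-bounding $\|R\|_F$ and lower-bounding $\lambda_K(\widehat{W})$.

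For $\|R\|_F$, noting that $WA=WP\tilde{P}A$, I would write $R = -(\widehat{W}-WP)\tilde{P}A + Z$ with $\Delta_W := \widehat{W}-WP$. The triangle inequality combined with $\|\tilde{P}A\|_{op}\leq \|A\|_F \leq \sqrt{K}$ (the rows of $A$ lie on the $(p-1)$-simplex, so $\|A_{k\cdot}\|_2^2 \leq \sum_j A_{kj}=1$) produces $\|R\|_F \leq \sqrt{K}\,\|\Delta_W\|_F + \|Z\|_F$. Theorem~\ref{theorem:err_W} bounds $\|\Delta_W\|_F$ directly; the multinomial noise term is handled via the variance calculation $\mathbb{E}\|Z\|_F^2\leq n/N$ (since $\mathrm{Var}(X_{ij})\leq M_{ij}/N$ and $\sum_{ij}M_{ij}=n$) combined with a Bernstein-type tail bound, yielding $\|Z\|_F \lesssim \sqrt{n\log(n)/N}$ with probability at least $1-o(n^{-1})$. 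For the denominator, Weyl's inequality gives $|\lambda_K(\widehat{W})-\lambda_K(W)|\leq\|\Delta_W\|_{op}\leq \|\Delta_W\|_F$; Assumption~\ref{assumption:eigenvalues} together with the simplex-row Cauchy--Schwarz bound $\|W\|_F^2\geq n/K$ gives $\lambda_1(W)\geq \sqrt{n}/K$ and hence $\lambda_K(W)\gtrsim \sqrt{n}/K$, and the sample-size condition~\eqref{cond:N} ensures $\|\Delta_W\|_F$ is negligible against this, so $\lambda_K(\widehat{W})\gtrsim \sqrt{n}/K$.

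Assembling these estimates yields
\begin{equation*}
\|E\|_F \;\lesssim\; \frac{K}{\sqrt{n}}\left(\sqrt{\frac{n\log n}{N}} + K^{3/2}\sqrt{\frac{\log n}{N}}\bigl(\sqrt{n_{\mathcal{C}}}+\rho(\Gamma)\sqrt{s\lambda_{\max}(\Gamma)}\bigr)\right),
\end{equation*}
which collapses to the claimed rate $K^{3/2}\sqrt{\log(n)/N}\bigl(\sqrt{n_{\mathcal{C}}}+\rho(\Gamma)\sqrt{s\lambda_{\max}(\Gamma)}\bigr)$ by using $\sqrt{n_{\mathcal{C}}}\geq 1$ to absorb the first summand and the implicit bound $K^2\lesssim n$ (which follows from~\eqref{cond:N}) to absorb the remaining $K/\sqrt{n}$ factor on the second term.

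The main obstacle I expect is the tight control of $\lambda_K(\widehat{W})$, together with the matching lower bound on $\lambda_K(W)$. Extracting $\lambda_K(W)\gtrsim \sqrt{n}/K$ requires combining Assumption~\ref{assumption:eigenvalues} with the non-trivial lower bound on $\|W\|_F$, and verifying that~\eqref{cond:N} is precisely the right regime for the Weyl perturbation $\|\Delta_W\|_F$ to be small enough to retain a constant fraction of this singular-value gap. Every factor of $\sqrt{K}$ is load-bearing: the $\sqrt{K}$ from $\|A\|_{op}$ is exactly what turns the rate $K$ of Theorem~\ref{theorem:err_W} into the $K^{3/2}$ of the present statement, so any looseness in bounding $\|A\|_{op}$ or $\lambda_K(\widehat{W})$ would immediately inflate the final dependence on $K$.
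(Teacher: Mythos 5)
Your proposal is correct and follows the same skeleton as the paper's proof: feasibility of $\tilde{P}A$ in the constrained program \eqref{eq:stepA} gives the basic inequality, one factor of $\|\widehat{W}E\|_F$ is cancelled, the residual is split into a $(\widehat{W}-WP)\tilde{P}A$ term and a noise term, $\|A\|_{op}\le\|A\|_F\le\sqrt{K}$ converts the rate of Theorem~\ref{theorem:err_W} into the $K^{3/2}$ rate, and Weyl's inequality controls the smallest singular value of $\widehat{W}$.

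The one place where you genuinely diverge is the treatment of the noise contribution, and your route is arguably the more careful one. The paper keeps the inner product $\langle \widehat{W}E, Z\rangle$ and bounds it by $\|\widehat{W}E\|_F\cdot\max_{\|U\|_F=1}\langle U,Z\rangle$, then establishes $|\langle U,Z\rangle|\lesssim\sqrt{\log(n)/N}$ by a Bernstein argument \emph{for a fixed unit-Frobenius-norm $U$}; since the supremum over $U$ equals $\|Z\|_F$ and the relevant direction $\widehat{W}E/\|\widehat{W}E\|_F$ is random, this step is delicate as written. Correspondingly, the paper only needs a constant lower bound on $\lambda_{\min}(\widehat{W})$. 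You instead absorb the noise into $\|R\|_F$ via the crude but unimpeachable bound $\|Z\|_F\lesssim\sqrt{n\log(n)/N}$ and pay for it with the sharper lower bound $\lambda_K(\widehat{W})\gtrsim\sqrt{n}/K$; the $\sqrt{n}$ factors cancel and the noise term is dominated by the main term since $\sqrt{n_{\mathcal{C}}}\ge 1$. Two small remarks: the paper's Lemma~\ref{lemma:singular_value_H} together with Assumption~\ref{assumption:eigenvalues} gives the slightly sharper $\lambda_K(W)\gtrsim\sqrt{n/K}$, under which your final absorption only requires $K\le n$ (which is assumed) rather than $K^2\lesssim n$; and $K^2\lesssim n$ does not actually follow from \eqref{cond:N}, which is a lower bound on $N$ — it is harmless here only because $K$ is taken to be fixed. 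Neither point affects the validity of your argument.
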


\begin{remark}\label{remark:A}
    The previous error bound of $A$ indicates that the accuracy of $\wh A$ primarily relies on the accuracy of $\wh W$, which is to be expected, since  $A$ is estimated by regressing $X$ on the estimator $\wh W$. While the error rate may not achieve the minimax-optimal rate $C\sqrt{p/(nN)
    }$ derived in \cite{ke2017new}, we found that this procedure is more accurate than the estimator $\wh A$ proposed in \cite{klopp2021assigning}, as confirmed by synthetic experiments in Section~\ref{sec:synthetic_experiment}.
\end{remark}

\subsection{Refinements for special graph structures}\label{sec:special}
We now analyze the behavior of the error bound provided in Theorem \ref{theorem:err_W} for different graph structures, further expliciting the dependency of our bounds on graph properties.

\paragraph{Erd{\"o}s-R{\'e}nyi random graphs.}
We first assume that the graph $\mathcal{G}$ is an Erd{\"o}s-R{\'e}nyi random graph where each pair of nodes is connected with probability $p=p_0\frac{\log (n)}{n}$ for a constant $p_0>1$.  In this case, \cite{hutter2016optimal} show that with high probability, the algebraic connectivity $\rho(\Gamma)$ is of order $O(\frac{1}{\log (n)})$. Moreover, the maximal degree is of order $\log(n)$ and the graph is almost surely connected \citep{van2024random}. Under this setting, the error associated to our estimator $\widehat{W}$ becomes:

\begin{equation}\label{eq:main2}
        \min_{P \in \mathcal{P}}\|\wh W-WP\|_F \leq C_1K\sqrt{\frac{\log(n)}{N}}\left(1+ s^{\frac12} \log(n)^{-\frac34}\right).
\end{equation}


\paragraph{Grid graphs.} We also derive error bounds for grid graphs, which are commonly occurrences in the analysis of spatial data and for applications in image processing:
\begin{description}[leftmargin=1cm]

\item [2D grid graph:] Let $\mathcal{G}$ be a 2D grid graph on $n $ vertices.
\cite{hutter2016optimal} show that, in that case, the inverse scaling factor is such that $\rho(\Gamma)\lesssim \sqrt{\log (n)}$ . The error of our estimator thus becomes:
\begin{equation}\label{eq:main_2d}
    \min_{P \in \mathcal{P}}\|\wh W-WP\|_F \leq CK\sqrt{\frac{\log(n)}{N}}\left(1+\sqrt{s\log(n)}\right)\leq C_3K\log(n)\sqrt{\frac{s}{N}}.
\end{equation}
\item[$K$-grid graph, $k \geq 3$:] In this case, \cite{hutter2016optimal} show that the inverse scaling factor is bounded by a constant $C(k)$, that depends on the dimension $k$ but is independent of $n$. In this case, the error of our estimator  becomes:
\begin{equation}\label{eq:main_dd}
    \min_{P \in \mathcal{P}}\|\wh W-WP\|_F \leq CK\sqrt{\frac{\log(n)}{N}}\left(1+\sqrt{s}\right)\leq C_3K\sqrt{\frac{s\log(n)}{N}}.
\end{equation}
\end{description}

\subsection{Synthetic Experiments}\label{sec:synthetic_experiment}

We evaluate the performance of our method using synthetic datasets where $W$ is aligned with respect to a known graph. 

\paragraph{ Experimental Protocol} To generate documents, we sample $n$ points uniformly on unit square $[0,1]^2$, and cluster them into ${n_{grp}}=30$ groups using a simple k-means algorithm. For each group, we generate the topic mixture as $\mathbf{\alpha} \sim \mathrm{Dirichlet}(\mathbf{u})$ where $u_k \sim \mathrm{Unif}(0.1, 0.5)$ ($k \in [K]$). Small random noise  $N(0, 0.03)$  is added to $\mathbf{\alpha}$ for each document in the group, and we permute it so that the biggest element of $\alpha$ is assigned to the group's predominant topic. $A$ is generated by sampling each entry $A_{kj}$ from  a uniform distribution, normalizing each row to ensure that $A$ is a stochastic matrix. A detailed description of the data generating process is provided in Section F of the Appendix. 

\begin{figure}[h]
    \centering
    \includegraphics[width=0.9\textwidth]{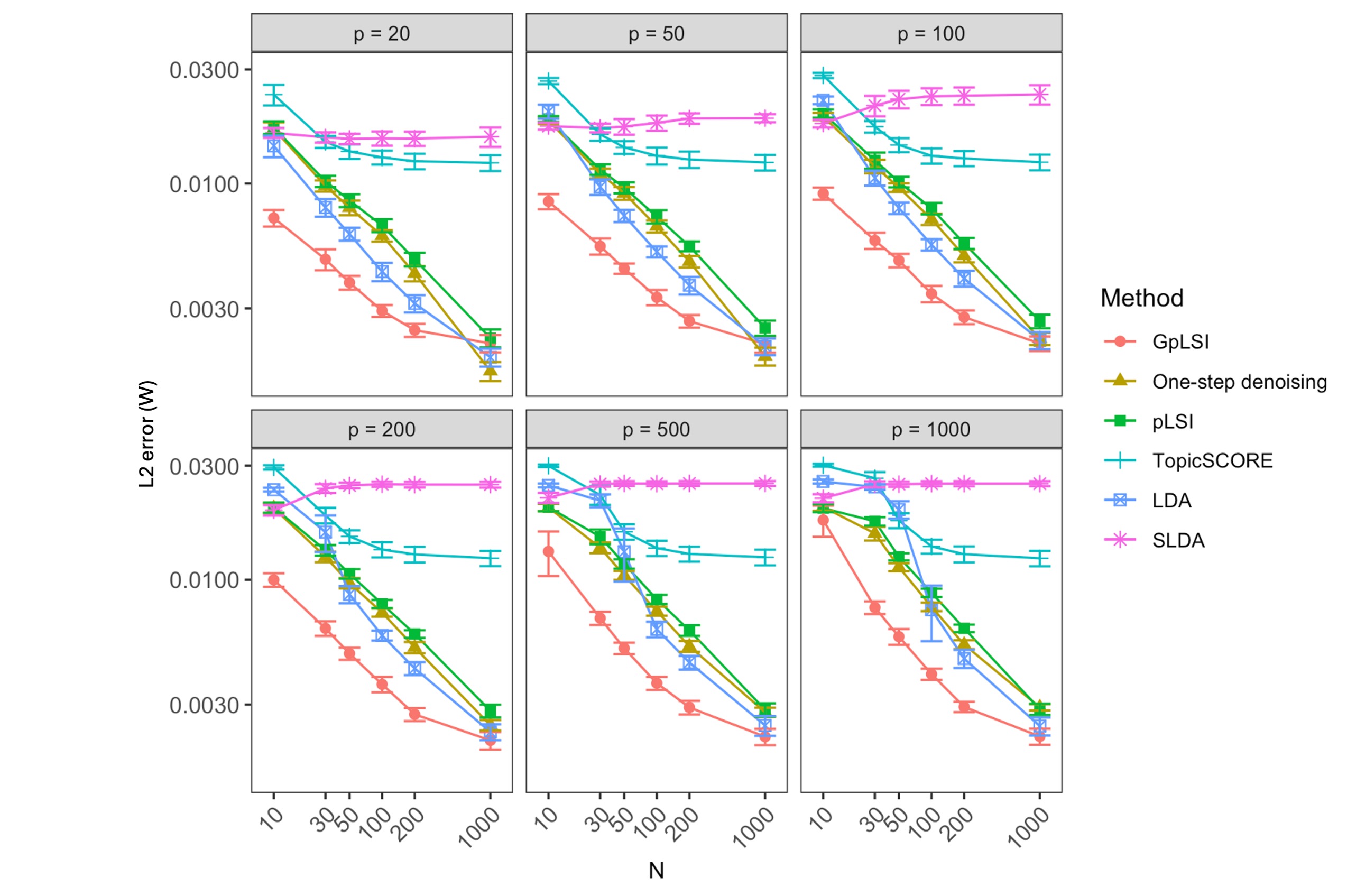}
    \caption{$\ell_2$ error for the estimator $\wh W$ (defined as $\text{min}_{P \in \mathcal{P}}\frac{1}{n}\| \wh W - WP\|_{F}$)  for different combinations of document length $N$ and vocabulary size $p$. Here, $n=1000$ and $K=3$.}
    \label{fig:Wl2pN}
\end{figure}


To assess the performance of GpLSI, we compare it against several established methods, including the original pLSI algorithm proposed by \cite{klopp2021assigning}, TopicSCORE \citep{ke2017new}, LDA \citep{blei2001latent}, and the Spatial LDA (SLDA) approach of \cite{chen2020modeling}. In addition, to highlight the efficiency of our iterative algorithm, we present results from a baseline variant that employs only a single denoising step. This one-step method is described in greater detail in Section C of the Appendix. To implement these algorithms, we use the R package \texttt{TopicScore} and the LDA implementation of the Python library \texttt{sklearn}. For SLDA, we use of the Python package \texttt{spatial-lda} with the default settings of the algorithm. We run 50 simulations and record the $\ell_1$ error, $\ell_2$ error of $W$ and $A$, and the computation time across various parameter settings $(p, N, n, K)$, reporting medians and interquartile ranges. To evaluate the performance of methods in difficult scenarios where document length $N$ is small compared to vocabulary size $p$, we check the errors for different combinations of $N=10, 30, 50, 100, 200, 1000$ and $p=20, 30, 50, 100, 200, 500$.

\paragraph{ Results}  Figure~\ref{fig:Wl2pN} demonstrates that GpLSI achieves the lowest $\ell_2$ error for $W$, even in scenarios with very small $N$. This shows that sharing information across similar documents on a graph improves the estimation of topic mixture matrix. Notably, while LDA and pLSI exhibit modest performance, they fail in regimes where $N<100$ and $p  \geq 200$. We also confirm that the one-step denoising variant of our method achieves a lower error estimation error than pLSI and LDA in settings where $N << p$.

\begin{figure}[h]
    \centering
    \includegraphics[width=0.9\textwidth]{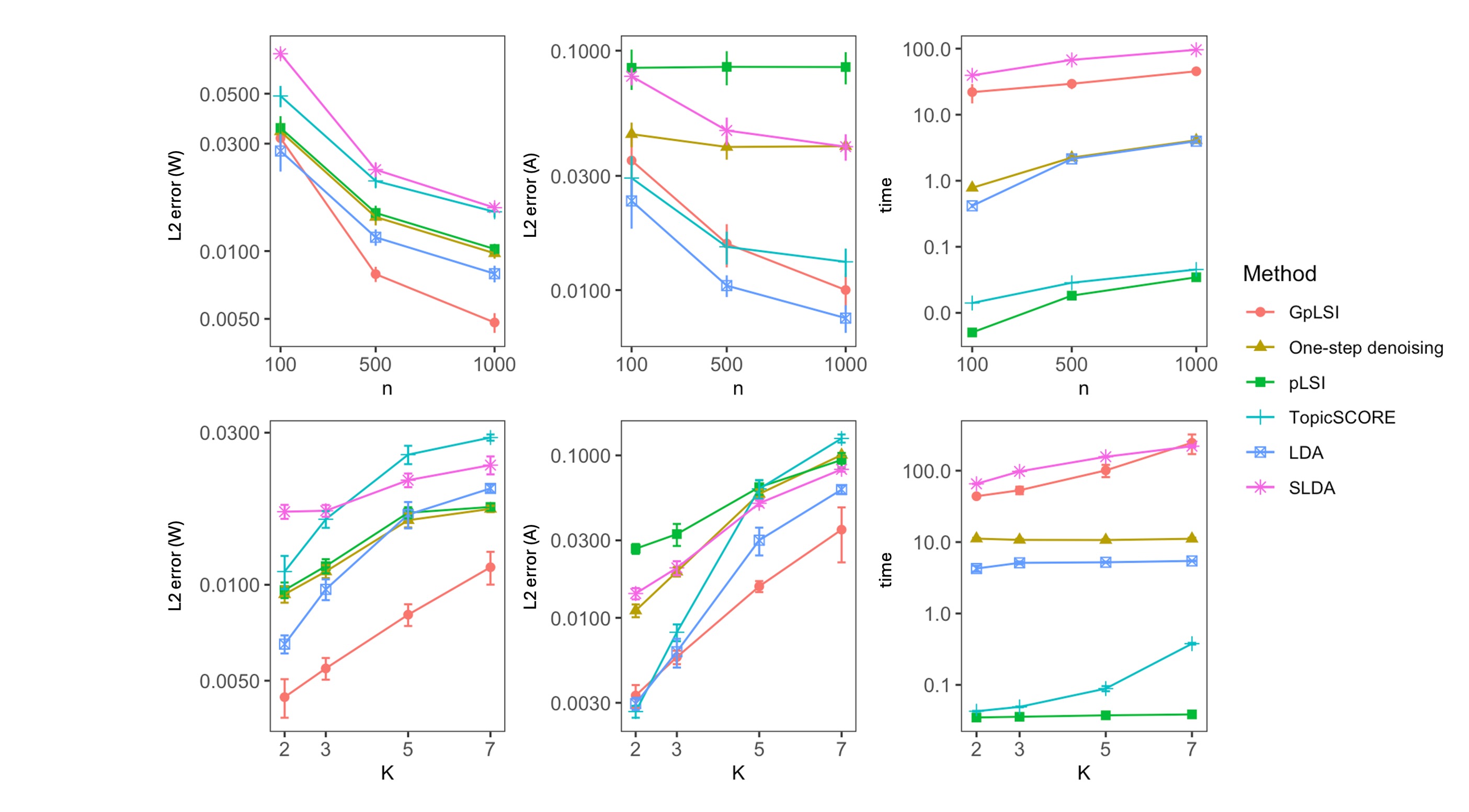}
    \caption{$\ell_2$ error of $W$ (left) and $A$ (middle) and computation time (right) for different corpus size $n$ and number of topics $K$. Here, $N=30$ and $p=30$. Errors are normalzied by $n$.}
    \label{fig:nK2}
\end{figure}

We also examine how the estimation errors scale with the corpus size $n$ and the number of topics $K$, as shown in Figure~\ref{fig:nK2}. 
We observe that GpLSI substantially outperforms other methods, particularly for the estimation of $W$. GpLSI achieves the lowest error for $A$, as we show in Section F of the Appendix. Similar patterns hold for $\ell_1$ errors of $A$ and $W$ also provided in Section F of the Appendix.

\section{Real-World Experiments}\label{sec:experiments}

To highlight the applicability of our method, we deploy it on three real-life examples. \if1\blind
{All the code for the experiments is openly accessible at \url{https://github.com/yeojin-jung/GpLSI}.}\fi
\if0\blind
{All the code for the experiments will be openly accessible through Github}.}\fi

\subsection{Tumor Microenvironment discovery: the Stanford Colorectal Cancer dataset}
\label{sec:experiments:microenvironment}

We first consider the analysis of CODEX data, which allows the identification of individual cells in tissue samples, holding valuable insights into cell interaction profiles, particularly in cancer, where these interactions are crucial for immune response. Since cellular interactions are hypothesized to be local, these patterns are often referred to as “tumor microenvironments”. In the context of topic modeling, we can regard a tumor microenvironment as a \textit{document}, immune cell types as \textit{words}, and latent characteristics of a microenvironment as a \textit{topic}. However, due to the small number of words per document, the recovery of the topic mixture matrix and the topics themselves can prove challenging. \cite{chen2020modeling} propose using the adjacency of documents to assign similar topic proportions to neighboring tumor cells. Similarly, we construct a spatial graph based on proximity of tumor microenvironments to uncover novel tumor-immune cell interaction patterns.
 


The first CODEX dataset is a collection of 292 tissue samples from 161 colorectal cancer patients collected at Stanford University \citep{wu2022spacegm}. 
The locations of the cells were retrieved using a Voronoi partitioning of the sample, and the corresponding spatial graphs were constructed encoding the distance between microenvironments. 
More specifically, we define a tumor microenvironment as the 3-hop neighborhood of each cell, following the definition originally used by \cite{wu2022spacegm}. Each microenvironment contains 10 to 30 immune cells of 8 possible types. This aligns with the setting where the document length $N<30$ is small compared to the vocabulary size $p=8$. 

 \begin{figure}[h]
    \centering
    \includegraphics[width=1.0\textwidth]{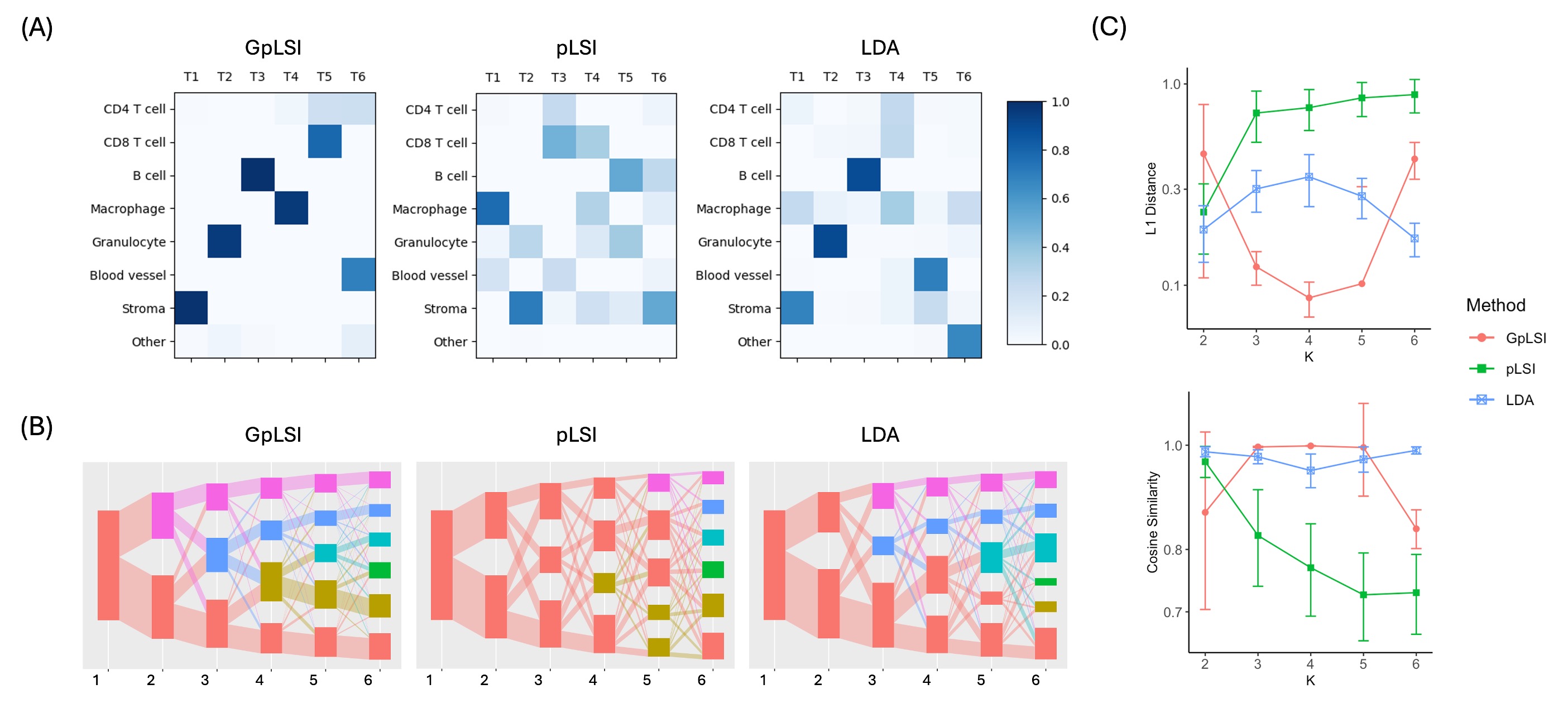}
    \caption{(A) Estimated tumor-immune topic weights of GpLSI, pLSI, and LDA. Topic weights are aligned across methods using cosine similarity. (B) Topic alignment paths of GpLSI, pLSI, and LDA using R package \texttt{alto}. (C) Pairwise $\ell_1$ distance and cosine similarity of topic weights from different batches of patients.}
    \label{fig:4}
\end{figure}

We aggregate frequency matrices and tumor cellular graphs of all samples and fit three methods: our proposed GpLSI, the original pLSI approach of \cite{klopp2021assigning}, and LDA \citep{blei2001latent} to estimate tumor-immune topics. The estimated topic weights of $K=6$ are illustrated in Figure~\ref{fig:4}(A). After aligning topic weights across methods, we observe similar immune topics (Topic 1, 2, 3) in GpLSI and LDA which are not found in the estimated topics of pLSI. 

To determine the optimal number of topics $K$, we use the method proposed by \cite{fukuyama2023multiscale}. In this work, the authors construct “topic paths” to track how individual topics evolve, split or merge, as the number of topics $K$, increases. 
We observe in Figure~\ref{fig:4}(B) that while the GpLSI path has non-overlapping topics up to $K=6$, other methods fail to provide consistent and well-separated topics.

To evaluate the quality and stability of the recovered topics, we also measure the coherence of the estimated topic weights of batches of samples, as suggested in \cite{tran2023sparse}. We divide 292 samples into five batches and estimate the topic weights $A^b$ for $b \in [5]$. For every pair of batches $(b, b^{'})$, we align $A^b$ and $A^{b^{'}}$ (we permute $A^{b^{'}}$ with $P$ where $P = \arg\min_{P \in \mathcal{P}}\|A^b-PA^{b^{'}}\|$) and measure the entry-wise $\ell_1$ distance and cosine similarity.
We repeat this procedure five times and plot the scores in Figure~\ref{fig:4}(C). We notice that GpLSI provides the most coherent topics across batches for $K=3,4,5$. 
Combining with the metrics of LDA, we can choose the optimal $K$ as 5 or 6. 

\begin{figure}[h]
    \centering
    \includegraphics[width=0.9\textwidth]{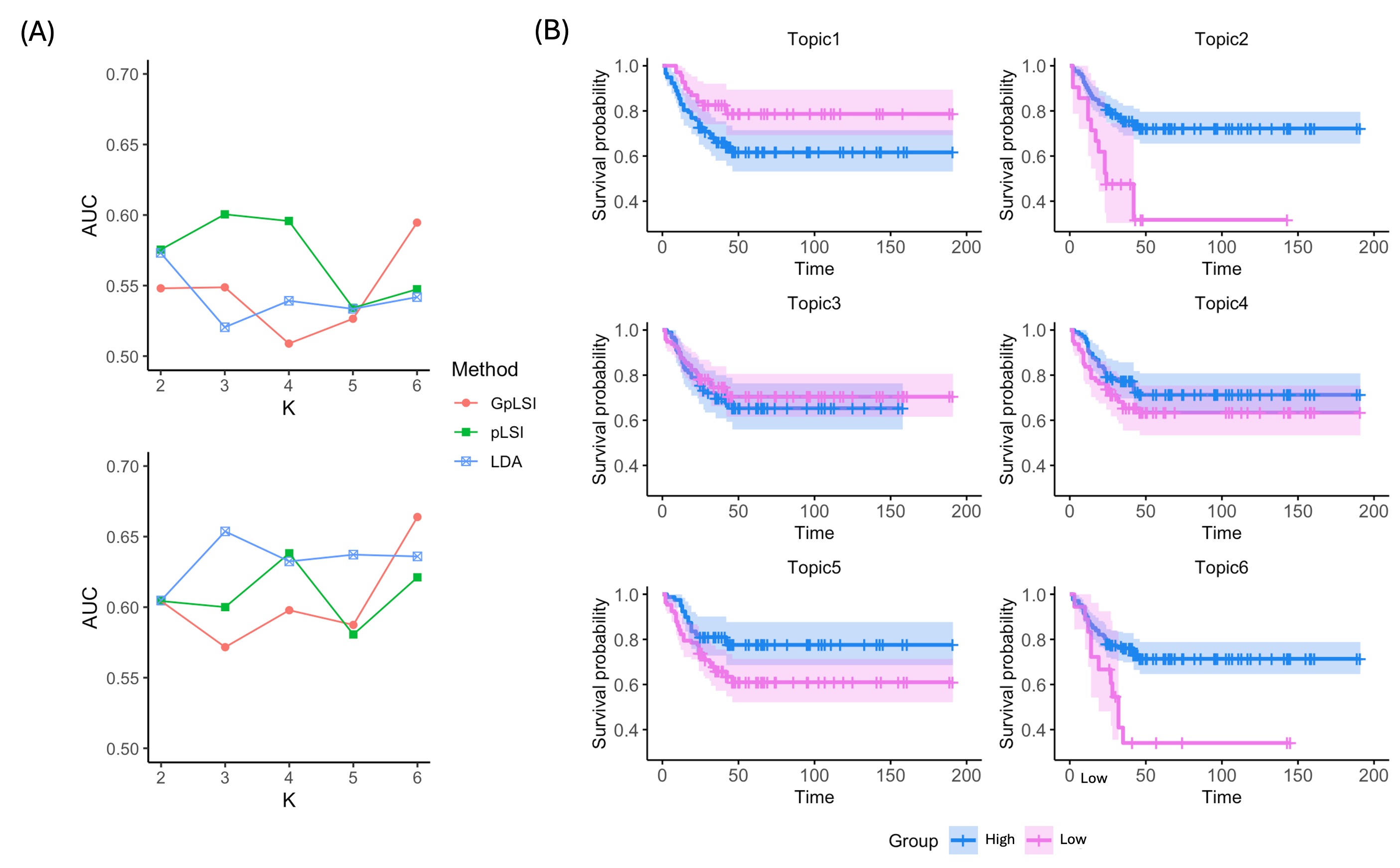}
    \caption{(A) AUC for predicting cancer recurrence using isometric log-ratio transformed topic proportions (top) and dichotomized topic proportions (bottom) as covariates. (B) Kaplan-Meier curves based on dichotomized topic proportions using GpLSI.}
    \label{fig:5}
\end{figure}

Next, we conduct survival analysis to identify the immune topics associated with higher risk of cancer recurrence. We consider two logistic models with different covariates to predict cancer recurrence and calculate the area under the curve (AUC) of the receiver-operating characteristic (ROC) curves to evaluate model performance. 

In the first model, we use the proportion of each microenvironment topic as covariates for each sample. Since the $K$ covariates sum up to one, we apply isometric log-ratio transformation to represent it with $K-1$ orthonormal basis vectors 
In the second model, we dichotomize each topic proportion to low and high proportion groups. The cutoffs are determined using the maximally selected rank statistics. 

The AUC for each number of topics is shown in Figure~\ref{fig:5}(A). GpLSI achieves the highest area under the curve (AUC) at $K=6$ in both models. We also plot Kaplan Meier curves for each topic using the same dichotomized topic proportions. The result for GpLSI is illustrated in Figure~\ref{fig:5}(B). We observe that Topic 2, which is characterized by a high prevalence of granulocytes, and Topic 6, a mixture of CD4 T cells and blood vessels, are associated with lower cancer recurrence. Positive effect of granulocytes on cancer prognosis was also reported by \cite{wu2022spacegm}, who found out that a microenvironment with clustered granulocyte and tumor cells is associated with better patient outcomes. We observe the same association of granulocyte with lower risk in LDA Figure 16 of the Appendix.

\subsection{Understanding Structure in Mouse Spleen Samples}

\begin{figure}[t]
    \centering
    \includegraphics[width=1.0\textwidth]{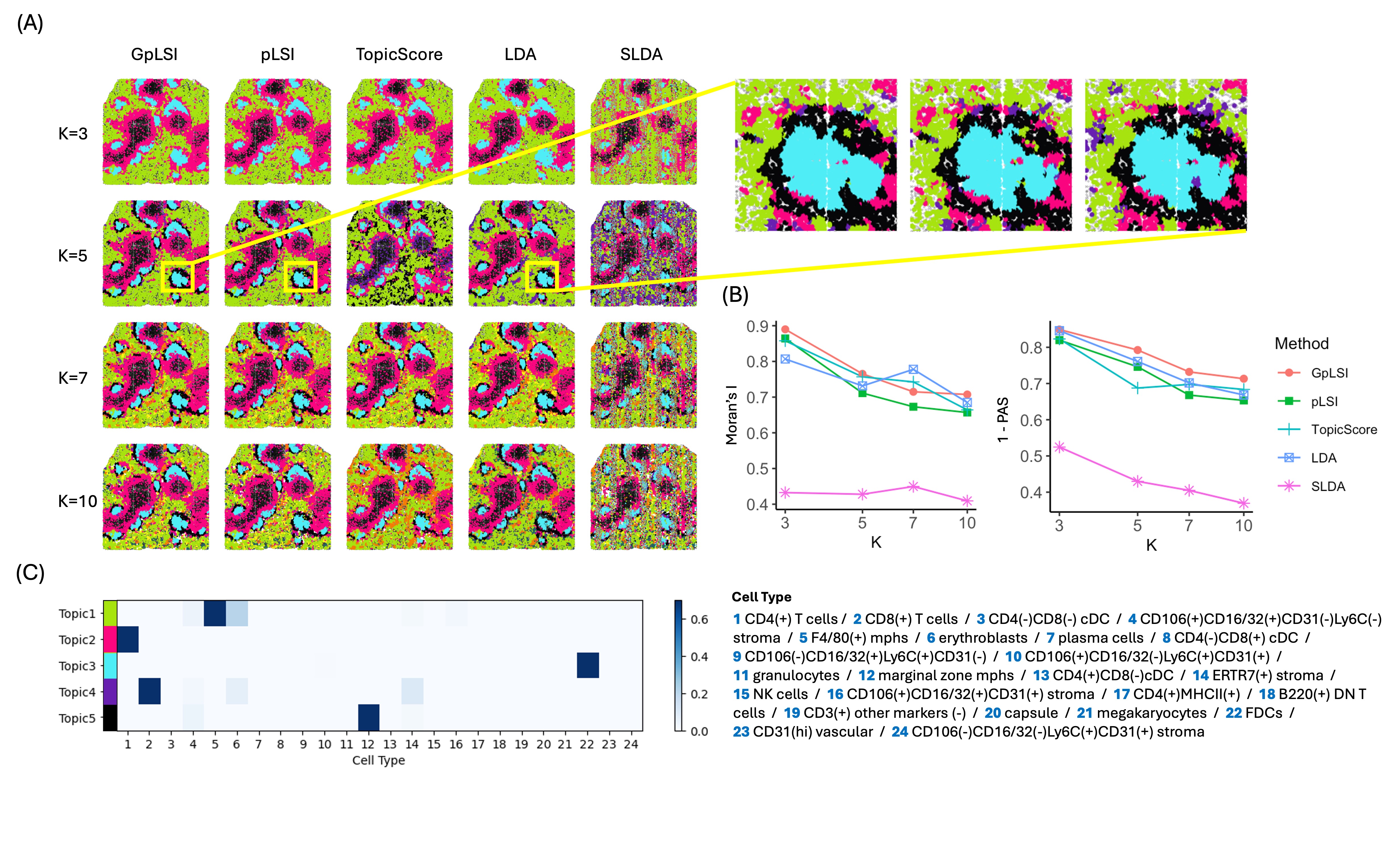}
    \caption{(A) Visualization of estimated B cell microenvironment topics for $K=3,5,7,10$. (B) Comparison of clustering performance using Moran's I and PAS score. We plot 1-PAS for better interpretation. (C) Estimated B cell microenvironment topic weights for $K=5$ using GpLSI.} 
    \label{fig:6}
\end{figure}

We also apply our method to identify immune topics in mouse spleen. In this setting, each document is anchored to a B cell \citep{chen2020modeling}. A previous study has processed the original CODEX images from \cite{goltsev2018deep} to obtain the frequencies of non-B cells in the 100 pixel neighborhood of each B cell \citep{chen2020modeling}. The final input for the topic models consists of a 35,271 B cell microenvironments by 24 cell types frequency matrix, along with the positional data of B cells.

In this example, we evaluate GpLSI by examining whether the introduction of our graph-based regularization term in the estimation of topic mixture matrices enhances document clustering. Figure~\ref{fig:6}(A) presents the estimated topics for all models at $K=3,5,7,10$. Notably, the topics derived from GpLSI, pLSI, and LDA more clearly demarcate distinct B cell microenvironment domains compared to those estimated by TopicSCORE and LDA. Among these three methods, GpLSI yields the least noisy cellular clustering, as evidenced by the magnified view of a selected subdomain. 

We also evaluate the quality of clusters with two metrics, Moran's I and the percentage of abnormal spots (PAS) \citep{shang2022spatially}. Moran's I is a classical measure of spatial autocorrelation that assesses the degree to how values are clustered or dispersed across a spatial domain. PAS score measures the percentage of B cells for which more than 60\% of its neighboring B cells have different topics. Higher Moran's I and lower PAS score indicate more spatial smoothness of the estimated topics. From Figure~\ref{fig:6}(B), we conclude that GpLSI has the highest Moran I, and the lowest PAS scores, demonstrating improved spatial smoothness of the topics. 

We observe that the B cell microenvironment topics identified with GpLSI align well with their biological context (Figure~\ref{fig:6}(C)). By referencing the manual annotations of B cells from the original study by \cite{goltsev2018deep}, we infer that Topic 1, Topic 2, Topic 3, and Topic 5 correspond to the red pulp, periarteriolar lymphoid sheath (PALS), B-follicles, and the marginal zone. This interpretation is further supported by high expression of CD4+T cells in Topic 2 (PALS) and high expression of marginal zone macrophages in Topic 5 (marginal zone).

\subsection{Analysis of the ``What's Cooking'' dataset}
\label{sec:experiments:whatscooking}

This dataset contains recipes from 20 different cuisines across Europe, Asia, and South America. Each recipe is a list of ingredients which allow us to convert to a count matrix with 13,597 recipes (documents) and 1,019 unique ingredients (words). Under the assumption that neighboring countries would have similar cuisine styles, we construct a graph of recipes based on the geographical proximity of the countries. Specifically, for each recipe, we select the five closest recipes from neighboring countries (including its own country) based on the $\ell_1$ distance of the ingredient count vectors and define them as neighboring nodes on the graph. Through this, we aim to identify general cooking styles that are prevalent across various countries worldwide. 



 \begin{figure}[h]
    \centering
    \includegraphics[width=1.0\textwidth]{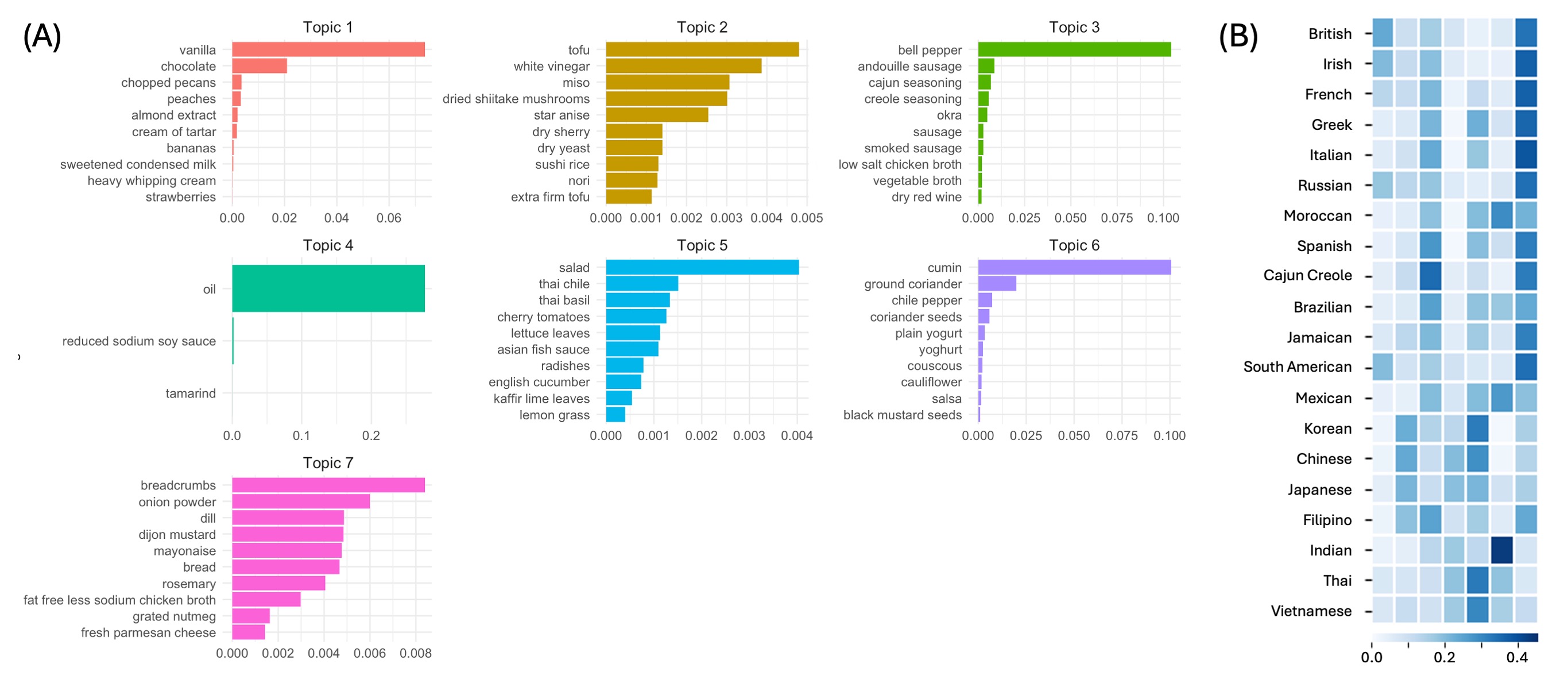}
    \caption{(A) Estimated anchor ingredients for each topic using GpLSI. (B) Proportion of topics for each cuisine. Each recipe was assigned to a topic with the highest document-topic mixture weight. For each cuisine, we count the number of recipes for each topic and divide by the total number of recipes in the cuisine.}
    \label{fig:anchor_gplsi}
\end{figure}

We run GpLSI, pLSI, and LDA with $K=5, 7, 10, 15, 20$ topics. We illustrate the estimated topics of GpLSI for $K=7$ in Figure~\ref{fig:anchor_gplsi}. The results for pLSI and LDA are provided in Section G of the Appendix.
With this approach, Topic 1 is clearly a baking topic and Topic 6 is defined by strong spices and sauces common in Mexican or parts of Southeast Asian cuisines. We also observe a general topic for Asian cuisines (Topic 2) and another for Western countries (Topic 7). To evaluate the estimated topics, we compare each topic's characteristics with the cuisine-by-topic proportion (Figure~\ref{fig:anchor_gplsi}(C)). Indeed, the style of each topic defined by the anchor ingredients aligns with the cuisines that have a high proportion of that topic. For example, the baking topic (Topic 1) is prevalent in British, Irish, French, Russian, and South American cuisines. 

In contrast, for pLSI, it is difficult to analyze the characteristics for each topic because Topics 1-4 have one or no identified anchor ingredients. 
Comparing the cuisine-by-topic proportions of GpLSI and LDA, we observe that GpLSI reveals many cuisines as mixtures of different cooking styles (Figure~\ref{fig:anchor_gplsi}(B)). In contrast, for LDA, many cuisines such as Moroccan, Mexican, Korean, Chinese, Thai have their recipes predominantly classified to a single topic (Figure 18(B) of the Appendix). GpLSI provides estimates of topic mixture and topic weights that are more relevant to our goal of discovering global cooking styles.


\section{Conclusion}
\label{sec:conc}

In this paper, we present Graph-Aligned pLSI (GpLSI), a topic model that leverages document-level metadata to improve estimation of the topic mixture matrix. We incorporate metadata by translating it into document similarity, which is then represented as edges connecting two documents on a graph. GpLSI is a powerful tool that integrates two complementary sources of information: word frequencies that traditional topic models use, and the document graph induced from metadata, which encodes which documents should share similar topic mixture proportions. To the best of our knowledge, this is the first framework to incorporate document-level metadata into topic models with theoretical guarantees.

At the core of GpLSI is an iterative graph-aligned singular value decomposition applied to the observed document-word matrix $X$. This procedure projects word frequencies to low-dimensional topic spaces, while ensuring that neighboring documents on the graph share similar topic mixtures. Our SVD approach can also be applied to other works that require dimension reduction with structural constraints on the samples. Additionally, we propose a novel cross validation technique to optimize the level of graph regularization by using the hierarchy of minimum spanning trees to define folds. 

Our theoretical analysis and synthetic experiments confirm that GpLSI outperforms existing methods, particularly in ``short-document'' cases, where the scarcity of words is mitigated by smoothing mixture proportions along neighboring documents. Overall, GpLSI is a fast, highly effective topic model when there is a known structure in the relationship of documents. 

We believe that our work offers valuable insights into structural topic models and opens up several avenues for further exploration. A promising direction is to incorporate structure to the topic matrix $A$ while jointly optimizing structural constraints on $W$. While our work focuses on low-$p$ regime, real world applications, such as genomics data with large $p$, could benefit from introducing sparsity to the word composition of each topic. 

\clearpage
\appendix  

\begin{center}
    {\large \bf Appendix for ``Graph Topic Modeling for Documents with Spatial or Covariate Dependencies''}\\[10pt]
    {\normalsize Yeo Jin Jung and Claire Donnat}\\[5pt]
    {\small Department of Statistics, The University of Chicago}
\end{center}

\spacingset{1.4} 

\section{Optimizing graph regularization parameter $\rho$}

In this section, we propose a novel graph cross-validation method which effectively finds the optimal graph regularization parameter by partitioning nodes into folds based on a natural hierarchy derived from a minimum spanning tree. The procedure is summarized in the following algorithm.

\begin{algorithm}
\setstretch{1.35}
\caption{Cross Validation using Minimum spanning tree at iteration $t$} \label{algo:2}
\begin{algorithmic}[1]
\State \textbf{Input:} Observation $X$, incidence matrix $\Gamma$, minimum spanning tree $\mathcal{T}$ of $\mathcal{G}$, previous estimate $\wh V^{t-1}$
\State \textbf{Output:} $\wh \rho^{t}$
\State 1. Randomly choose the source document $d_s$.
\State 2. Divide documents into $b$ folds : $d_i \in \mathcal{I}_k$ if $\emph{d}_{\mathcal{T}}(d_i, d_s) \mod b=k-1$, for $i\in [n]$ and $k \in [b]$.
\For{ \text{each leave-out fold} $\mathcal{I}_k, k \in [b]$}
\State Interpolation of $X^k$ with average of neighbors: $X^{k}_{i\cdot} = \frac{1}{\lvert \mathcal{N}(i) \rvert }\sum_{j \in \mathcal{N}(i) \setminus \I_k} X_{j}$ for $i \in \mathcal{I}_k$
\For{$\rho \in \{\rho_1, \rho_2, \cdots, \rho_r\}$}
\State $\mathrm{CVERR}_k(\rho) = \lVert X_{\mathcal{I}_k\cdot} - \wh U^{\rho, k}(\wh V^{t-1})^{\top}  \rVert_F^2$ where
\State $\wh U^{\rho, k} =  \arg \min_{U} \lVert U- X^k \wh V^{t-1}\rVert_F + \rho \lVert \Gamma U\rVert_{21}$ 
\EndFor
\EndFor
\State 4. Choose optimal $\rho$: $\hat{\rho}^{t} = \arg \min_{\rho} \sum_{k} \mathrm{CVERR}_k(\rho)$
\end{algorithmic}
\end{algorithm}

Conventional cross validation techniques sample either nodes or edges to divide the dataset into folds. However, these approaches can disrupt the graph structure and underestimate the strength of the connectivity of the graph. We instead devise a new rule for dividing documents into folds using a minimum spanning tree. This technique is an extension of the cross-validation procedure proposed by \cite{tibshirani2012degrees} for the line graph.  
Given a minimum spanning tree $\mathcal{T}$ of $\mathcal{G}$, we randomly choose a source document $d_s$. For each document $d_i$, we calculate the shortest path distance $\mathrm{d}_{\mathcal{T}}(d_i, d_s)$. Note that this distance is always an integer. We divide the documents into $b$ folds based on the modulus of their distance from the source node: $\mathrm{d}_{T}(d_i, d_s)\mod b$. Through this construction of folds, we can ensure that for every document, at least one of its 1-hop neighbors is in a different fold.

Let $X_{i\cdot}$ be the $i^{th}$ row of $X$. For each leave-out fold $\mathcal{I}_k$, $k\in[b]$, we interpolate the corresponding documents $X_{i\cdot}$$\forall i \in \mathcal{I}_k$, filling the missing document information with the average of corresponding neighbors in $\mathcal{I}_k^{C}$. 
This prevents us from using any information from the leave-out fold in training when calculating the cross-validation error. 

\begin{figure}[h]
    \centering
    \includegraphics[width=0.7\textwidth]{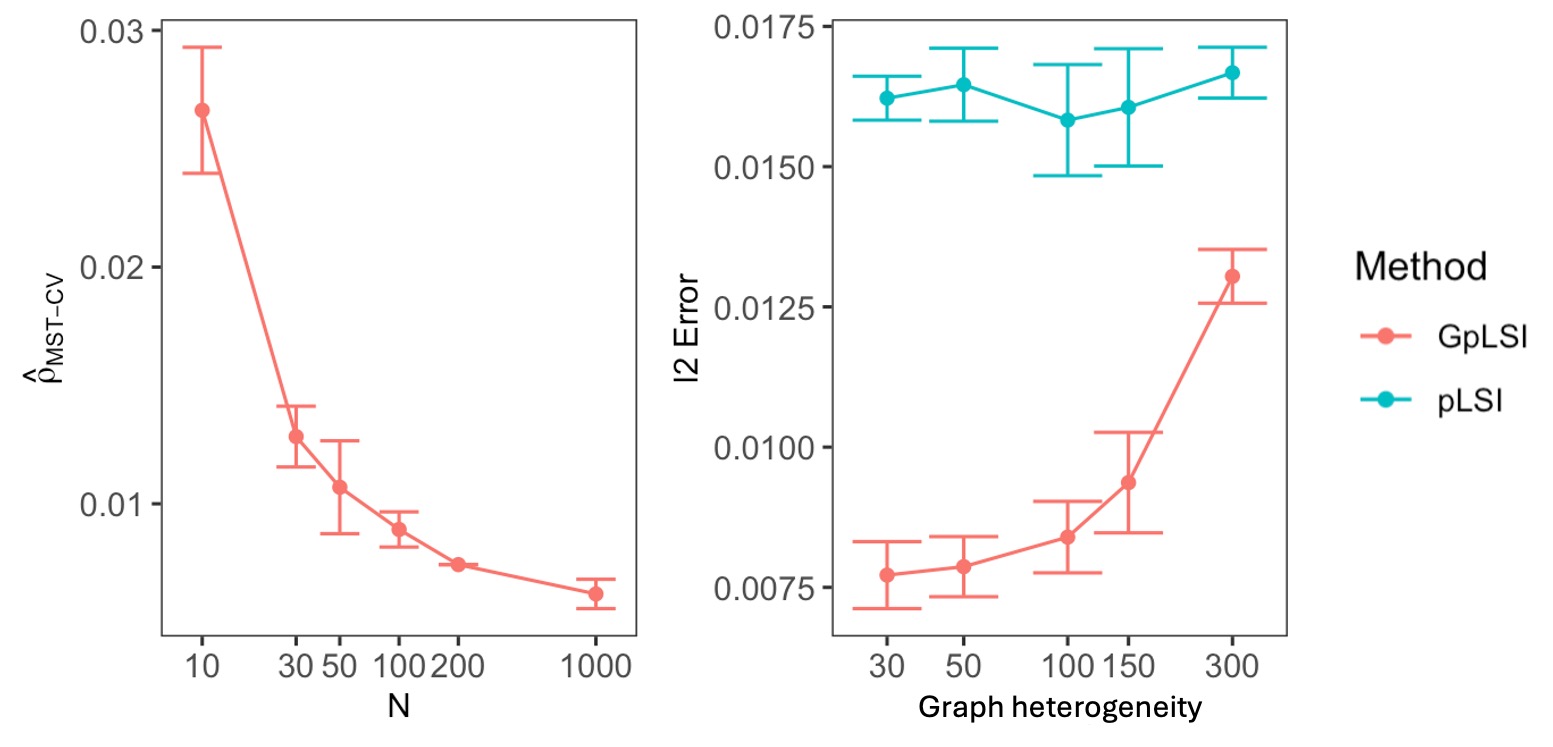}
    \caption{Behavior of $\hat{\rho}_{MST-CV}$ as $N$ increases (left). Behavior of $\ell_2$ error over graph heterogeneity (right). Graph heterogeneity is characterized by $n_{grp}$, the number of patches of documents across the unit square. Each patch is assigned similar topic mixture weights.}
    \label{fig:graph}
\end{figure}

Figure~\ref{fig:graph} demonstrates how GpLSI leverages the graph information. As $N$ increases, GpLSI chooses smaller graph regularization parameter $\hat{\rho}_{MST-CV}$, since the need to share information across neighbors diminishes as documents become longer and more informative. Additionally, when $W$ is more heterogeneous over the graph---meaning neighboring documents exhibit more heterogeneous topic mixture weights---the $\ell_2$ error of $W$ increases. Here, graph heterogeneity is characterized by our simulation parameter $n_{grp}$ (the number of patches that we create). As $n_{grp}$ increases, the unit square is divided into finer patches, and the generated documents within the same topic become more dispersed. Our result indicates that GpLSI works well in settings where the mixture weights are smoother over the document graph and the performance of GpLSI and pLSI become similar as neighboring documents become more heterogeneous.

\section{Analysis of the initialization step}\label{app:theory_init}

In this section, we show that the initialization step of GpLSI provides reasonable estimators of $U$ and $V$.

\subsection*{Proof of Theorem 1}
\begin{proof}
Let $D_0$ denote the diagonal matrix where each entry $(D_0)_{jj} $  is defined as: $(D_0)_{jj} = \frac{1}{n}\sum_{i=1}^n M_{ij}.$
Let $\wh D_0$ denote its empirical counterpart, that is, the diagonal matrix defined as: $\wh D_0 = \text{diag}( \frac{1}{n}\{\sum_{i=1}^n X_{ij}\}_{j\in [p]} ) $, so that $\E[\wh D_0] = D_0$. 
    We have, by definition of the initialization procedure: $$ \wh V_0 = U_K(X^\top X-\frac{n}{N}\wh D_0),$$
where the notation $U_K(A)$ denotes the first $K$ left singular vectors of the matrix $A$.

We write $X = M + Z$, where $Z$ denotes some multinomial noise. E
    We have:
    \begin{equation}
        \begin{split}
        {Z^\top Z} &=  \sum_{i=1}^n Z_{i\cdot}^\top Z_{i\cdot}\\
          \implies  \E[{Z^\top Z}] &=\sum_{i=1}^n \text{Cov}(Z_{i\cdot})=  \sum_{i=1}^n \text{Cov}(X_{i\cdot})
        \end{split}
    \end{equation}
    as $Z$ is a centered version of $X$ ($Z = X-M$). Since each row $X_{i\cdot}$ is distributed as a Multinomial$(1, M_{i\cdot})$:
$$( \text{Cov}(X_{i\cdot}))_{jj'} = \begin{cases}
    \frac{ M_{ij} (1-M_{ij}) }{N} \qquad \text{ if } j= j'\\
  -   \frac{ M_{ij}M_{ij'} }{N} \qquad \text{ if } j\neq j'\\
\end{cases} \implies \sum_{i=1}^n ( \text{Cov}(X_{i\cdot}))_{jj'} =  \begin{cases}
   \sum_{i} \frac{ M_{ij} (1-M_{ij}) }{N} \qquad \text{ if } j= j'\\
   -   \sum_{i=1}^n\frac{ M_{ij}M_{ij'} }{N} \qquad \text{ if } j\neq j'\\
   \end{cases}$$

Thus:
 \begin{equation}
        \begin{split}
            \E[{Z^\top Z}] &= \frac{n}{N} D_0 - \frac{M^\top M}{N} \\
            &= \frac{n}{N} D_0 - \frac{V^\top \Lambda^2 V}{N}.
        \end{split}
    \end{equation}

Therefore:
    \begin{equation}
    \begin{split}
      {X^\top X} - \frac{n}{N} \wh D_0-(1-\frac{1}{N}){M^{\top}M}  &=  Z^{\top}Z + Z^{\top}M + M^{\top}Z   - \frac{n}{N}\wh D_0 -\E[Z^{\top}Z] + \frac{n}{N}\E[\wh D_0]\\
    \end{split}
\end{equation}

Thus $ \E[{X^\top X} - \frac{n}{N} \wh D_0] =(1-\frac{1}{N}) M^{\top} M. $
We further note that  $(1-\frac{1}{N}) M^{\top} M =  V \tilde{\Lambda}^2V$ with $\tilde{\Lambda} = \sqrt{1-\frac{1}{N}} \Lambda,$ so the eigenvectors of the matrix ${X^\top X} - \frac{n}{N} \wh D_0$ can be considered as estimators of those of the matrix $M^\top M.$

    By the Davis-Kahan theorem \citep{giraud2021introduction}:
    \begin{equation}
        \begin{split}
            \|\sin \Theta(V, \wh V^0)\|_F &\leq 2\frac{\|  {X^\top X} - \frac{n\wh D_0}{N} -(1-\frac{1}{N}){M^{\top}M} \|_F}{(1-\frac{1}{N})\lambda_K(M)^2}\\
            &\leq 2\frac{\| Z^{\top}Z -\E[Z^{\top}Z]\|_F+ \frac{n}{N}\| \wh D_0 - \E[\wh D_0] \|_F+\|  Z^{\top}M\|_F + \|M^{\top}Z   \|_F}{ (1-\frac{1}{N})\lambda_K(M)^2}\\
        \end{split}
    \end{equation}

By Lemma~\ref{lem:concentration_xtx}, we have with probability at least $1-o(n^{-1})$:
$$ \| Z^{\top}Z -\E[Z^{\top}Z]\|_F \leq C_1 K\sqrt{\frac{n \log(n)}{N}},$$
$$\|  Z^{\top}M\|_F  = \|  M^{\top}Z\|_F   \leq C_2K \sqrt{\frac{n\log(n)}{N}},  $$
and
$$\frac{n}{N} \| \wh D_0- \E[\wh D_0] \|_F \leq  \frac{C_3}{N} \sqrt{\frac{Kn \log(n)}{N}} .$$

Thus, assuming $N>1$, so $ \frac{1}{1 - \frac{1}{N}} <2$ and $\frac{1}{N} \leq \frac{1}{2}:$
\begin{equation*}
    \begin{split}
        \|\sin \Theta(V, \wh V^0)\|_F 
        &\leq \frac{4C}{\lambda_K(M)^2} K\sqrt{\frac{n\log(n)}{N}}
    \end{split}
\end{equation*}
with $C = C_1 \vee C_2 \vee C_3.$ 
Under Assumption 4, we have $\lambda_K(M) \geq c \lambda_1(W) \geq c \sqrt{n/K}$ (see Lemma~\ref{lemma:singular_value_H}), therefore:
\begin{equation*}
    \begin{split}
       \|\sin \Theta(V, \wh V^0)\|_F 
       &\leq  \frac{4C}{c^2}K^2\sqrt{\frac{\log(n)}{nN}}\\
   \end{split}
\end{equation*}

The condition on $N$ assumed in Theorem 2 ensures that $ \|\sin \Theta(V, \wh V^0)\|_F<\frac{1}{2}$.
\end{proof}

\section{Analysis of iterative graph-aligned denoising}\label{app:theory_algo}

Our proof is organized along the following outline:
\begin{enumerate}
    \item We begin by showing that our graph-total variation penalty yields better estimates of the left and right singular vectors. To this end, we must show that, provided that the initialization is good enough, the estimation error of the singular vectors decreases with the number of iterations.
    \item We show that, by a simple readaptation of the proof by \cite{klopp2021assigning}, our estimator---which simply plugs in our singular vector estimates in their procedure ---yields a better estimate of the mixture matrix $W$.
    \item Finally, we show that our estimator of the topic matrix $A$ yields better error.
\end{enumerate}

\subsection{Analysis of the graph-regularized SVD procedure}
In this section, we derive high-probability  error bounds for the estimates $\wh U$ and $\wh V$ that we obtain in Algorithm 1. For each $t>0$, we define the error $L_t$ at iteration $t$ as:
    \begin{equation}\label{eq:L_t}
    L_t = \max\{ \| \sin\Theta(V, \widehat{V}^t)\|_{F}, \|\sin\Theta(U, \widehat{U}^t)\|_{F}\}.
    \end{equation}

 Our proof operates by recursion. We explicit the dependency of $L_t$ on the error at the previous iteration $L_{t-1}$, and show that $\{L_t\}_{t=1,\cdots, t_{\max}}$ forms a geometric series. To this end, we begin by analyzing the error of the denoised matrix $\bar{U}^t$, of which we later take an SVD to extract $\wh U^t$.

 At each iteration $t$, the first step of Algorithm 1 is to consider the following optimization problem:
    \begin{equation}\label{eq:objective}
        \bar{U}^{t}
          \in \arg \min_{ U \in \R^{n \times K} }\norm{U-X\wh V^{t-1}}_F^2+\rho\norm{\Gamma U}_{21} \\
    \end{equation}
    
   Fix $t>0$. To simplify notations, we let 
    \begin{equation}\label{eq:def_low_rank}
        \TY = X\wh V^{t-1}, \qquad \TU = M \wh V^{t-1}, \qquad \TZ = Z \wh V^{t-1}
    \end{equation} Note that with these notations,  $\TY$ can be written as:
    \begin{equation}\label{eq:TY}
        \TY = \TU  + \TZ
    \end{equation}

\begin{lemma}[Error bound of Graph-aligned Denoising] \label{lemma:denoising}
Let Assumption 1 to 5 hold and let $L_t$, $\bar{U}^{t}$, $\TY$, $\TU$, $\rho$ be given as \eqref{eq:L_t}-\eqref{eq:TY}. Assume $\max(K,p) \leq n$ and $\sqrt{K} \leq p$.
Then, for a choice of $\rho = 4C^*\rho(\Gamma)\sqrt{\frac{Kp_n}{N}}(1+L_{t-1})$ with a constant $C^*>0$, there exists a constant $C>0$ such that with probability at least $1-o(n^{-1})$, for any $t > 0$,
    \begin{equation}\label{eq:err_denoising}
        \|\bar U^t-\tilde{U}\|_F  
        \leq C\sqrt{\frac{K\log(n)}{N}}\left(\sqrt{n_{\C}}+\rho(\Gamma)\sqrt{s}\sqrt{\lambda_{\max}(L)}(1+L_{t-1})\right)
    \end{equation}
    where $L$ denotes the graph Laplacian.
\end{lemma}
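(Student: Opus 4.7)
The plan is to set up the standard basic inequality for $\ell_{21}$-regularized estimators, then exploit two structural features: the decomposability of the total variation penalty on the support $S=\mathrm{supp}(\Gamma\tilde U)$, and the decomposition $\Delta=P_N\Delta+\Gamma^{\dagger}\Gamma\Delta$ where $N=\ker(\Gamma)$. First, since $\bar U^t$ is a minimizer of \eqref{eq:objective} and $\tilde U$ is feasible, the usual expansion of $\|\cdot-\tilde Y\|_F^2$ yields
\begin{equation*}
\|\Delta\|_F^2 \;\le\; 2\langle \tilde Z,\Delta\rangle \;+\; \rho\bigl(\|\Gamma\tilde U\|_{21}-\|\Gamma\bar U^t\|_{21}\bigr),\qquad \Delta=\bar U^t-\tilde U.
\end{equation*}
Because $\tilde U=U\Lambda V^\top\wh V^{t-1}$, the row support of $\Gamma\tilde U$ is contained in $S=\mathrm{supp}(\Gamma U)$ with $|S|\le s$ by Assumption~1, so the triangle inequality applied row by row gives the decomposability bound $\|\Gamma\tilde U\|_{21}-\|\Gamma\bar U^t\|_{21}\le \|(\Gamma\Delta)_S\|_{21}-\|(\Gamma\Delta)_{S^c}\|_{21}$.

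Next I would control the noise term $\langle\tilde Z,\Delta\rangle$ by splitting along the null space of $\Gamma$. Since $\Gamma^{\dagger}\Gamma=I-P_N$, we have
\begin{equation*}
\langle\tilde Z,\Delta\rangle \;=\; \langle P_N\tilde Z,P_N\Delta\rangle \;+\; \bigl\langle (\Gamma^{\dagger})^\top \tilde Z,\Gamma\Delta\bigr\rangle,
\end{equation*}
which by Cauchy-Schwarz and the $\ell_{2,\infty}$/$\ell_{2,1}$ duality yields
\begin{equation*}
|\langle\tilde Z,\Delta\rangle| \;\le\; \|P_N\tilde Z\|_F\,\|\Delta\|_F \;+\; \|(\Gamma^{\dagger})^\top\tilde Z\|_{2,\infty}\,\|\Gamma\Delta\|_{21}.
\end{equation*}
Splitting $\|\Gamma\Delta\|_{21}=\|(\Gamma\Delta)_S\|_{21}+\|(\Gamma\Delta)_{S^c}\|_{21}$ and choosing $\rho\ge 4\|(\Gamma^{\dagger})^\top\tilde Z\|_{2,\infty}$ makes the $S^c$ contribution cancel, leaving
\begin{equation*}
\|\Delta\|_F^2 \;\le\; 2\|P_N\tilde Z\|_F\,\|\Delta\|_F \;+\; \tfrac{3}{2}\rho\,\|(\Gamma\Delta)_S\|_{21}.
\end{equation*}
A group-sparsity bound $\|(\Gamma\Delta)_S\|_{21}\le\sqrt{s}\,\|\Gamma\Delta\|_F\le \sqrt{s\,\lambda_{\max}(L)}\,\|\Delta\|_F$ reduces the whole inequality to a linear inequality in $\|\Delta\|_F$, giving $\|\Delta\|_F\le 2\|P_N\tilde Z\|_F+\tfrac{3}{2}\rho\sqrt{s\,\lambda_{\max}(L)}$.

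It remains to prove, with probability $1-o(n^{-1})$, the two noise bounds
\begin{equation*}
\|P_N\tilde Z\|_F \;\lesssim\; \sqrt{\tfrac{K\,n_{\mathcal{C}}\log n}{N}},
\qquad
\|(\Gamma^{\dagger})^\top\tilde Z\|_{2,\infty} \;\lesssim\; \rho(\Gamma)\sqrt{\tfrac{K\log n}{N}}\,(1+L_{t-1}).
\end{equation*}
For the second I would write $\wh V^{t-1}=VO+E$ with $O$ the orthogonal matrix realizing $\inf_O\|\wh V^{t-1}-VO\|_F=L_{t-1}$, so that $\tilde Z=ZVO+ZE$. Each row $e$ of $(\Gamma^{\dagger})^\top\tilde Z$ equals $\mathbf{s}_e^\top Z\wh V^{t-1}$ with $\|\mathbf{s}_e\|_2\le \rho(\Gamma)$. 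Row-wise Bernstein concentration for centered multinomials, applied to $\mathbf{s}_e^\top ZVO$, delivers $\lesssim\rho(\Gamma)\sqrt{K\log n/N}$; the perturbation term is bounded crudely by $\|\mathbf{s}_e\|_2\|Z\|_{\mathrm{op}}\|E\|_F$, contributing the $L_{t-1}$ factor after taking a union bound over the $m\le n^2$ edges. For $\|P_N\tilde Z\|_F$ I would let $Q\in\mathbb{R}^{n\times n_{\mathcal{C}}}$ be an orthonormal basis for $N$, rewrite $\|P_N\tilde Z\|_F=\|Q^\top Z\wh V^{t-1}\|_F$, and again split along $VO+E$: the deterministic projection $Q^\top ZV$ has $n_{\mathcal C}K$ entries each of variance $O(1/N)$, yielding $\sqrt{Kn_{\mathcal C}\log n/N}$ by Bernstein and a union bound, while the residual $Q^\top ZE$ is absorbed into the other term through the choice of $\rho$. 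Substituting these two bounds into the linear inequality gives exactly \eqref{eq:err_denoising}.

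The main obstacle I foresee is the data dependence introduced by $\wh V^{t-1}$: because $\tilde Z=Z\wh V^{t-1}$ is not a sum of independent summands in $Z$, all concentration steps must go through the $VO+E$ decomposition, and one must verify that the $\|Z\|_{\mathrm{op}}\cdot L_{t-1}$ residuals are dominated by the $\rho(\Gamma)\sqrt{s\lambda_{\max}(L)}(1+L_{t-1})$ term used in $\rho$. This is where the condition \eqref{cond:N} on $N$ is ultimately used, ensuring that the recursion $L_t\mapsto L_{t+1}$ implicit in the $(1+L_{t-1})$ factor is contractive so that the $t_{\max}$-step iteration of Algorithm~\ref{algo:1} converges.
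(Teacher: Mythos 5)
Your overall architecture is exactly the paper's: the basic inequality for the $\ell_{21}$-penalized problem, decomposability of the penalty on $S=\supp(\Gamma U)$, the orthogonal split $I_n=\Pi\oplus^{\perp}\Gamma^{\dagger}\Gamma$ of the noise term, the $\ell_{2,\infty}/\ell_{2,1}$ duality with the event $\rho\ge 4\max_e\|((\Gamma^{\dagger})^\top\tilde Z)_{e\cdot}\|_2$, the group-sparsity step $\|(\Gamma\Delta)_S\|_{21}\le\sqrt{s}\,\|\Gamma\Delta\|_F\le\sqrt{s\lambda_{\max}(L)}\|\Delta\|_F$, and Bernstein-type concentration for the two noise functionals. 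However, there is one concrete step that does not deliver the claimed bound: your control of the data-dependent part of $(\Gamma^{\dagger})^\top\tilde Z$. You propose to bound the perturbation term by $\|\mathbf{s}_e\|_2\,\|Z\|_{\mathrm{op}}\,\|E\|_F$. Since $\|Z\|_{\mathrm{op}}$ for multinomial noise is of order $\sqrt{Kn\log(n)/N}$ (this is essentially the paper's bound on $\|Z^\top U_{\perp}\|_{op}$), this route gives a contribution of order $\rho(\Gamma)\sqrt{Kn\log(n)/N}\,L_{t-1}$, i.e.\ it is larger by a factor of $\sqrt{n}$ than the $\rho(\Gamma)\sqrt{K\log(n)/N}\,L_{t-1}$ term you need. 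With that loss the choice of $\rho$ and the final bound would carry an extra $\sqrt{n}$ multiplying $L_{t-1}$, which does not match \eqref{eq:err_denoising} and would also damage the contraction constant $r$ in the downstream recursion of Theorem~2.

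The fix is to keep the row structure rather than passing to $\|Z\|_{\mathrm{op}}$: write $\tilde Z = ZVV^\top\wh V^{t-1}+ZV_{\perp}V_{\perp}^\top\wh V^{t-1}$ and bound each row as
\begin{equation*}
\bigl\|\mathbf{s}_e^\top Z V_{\perp}V_{\perp}^\top\wh V^{t-1}\bigr\|_2\;\le\;\bigl\|\mathbf{s}_e^\top Z\bigr\|_2\,\bigl\|V_{\perp}^\top\wh V^{t-1}\bigr\|_{op}\;\le\;\bigl\|((\Gamma^{\dagger})^\top Z)_{e\cdot}\bigr\|_2\,L_{t-1},
\end{equation*}
so that the same concentration bound $\max_e\|((\Gamma^{\dagger})^\top Z)_{e\cdot}\|_2\lesssim\rho(\Gamma)\sqrt{K\log(n)/N}$ (the paper's Lemma~\ref{lem:concentration_Gamma_dagger_Z}, applied to the deterministic $Z$, not to $Z\wh V^{t-1}$) serves both pieces and yields the factor $(1+L_{t-1})$ exactly. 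A similar remark applies to your treatment of $\|P_N\tilde Z\|_F$: saying the residual $Q^\top ZE$ is ``absorbed into the other term through the choice of $\rho$'' is not a proof; the clean route is again $\|\Pi Z\wh V^{t-1}\|_F\le\|\Pi ZV\|_F+\|\Pi ZV_{\perp}\|_F\,L_{t-1}$ or a direct concentration bound for $\|\Pi Z V\|_F$ as in Lemma~\ref{lemma:concentration_frob_norm_pitildeZ}. Finally, note that the condition \eqref{cond:N} is not needed for this lemma itself (it enters only in Theorem~2 to make the recursion contractive), so it cannot be invoked here to rescue the lossy $\|Z\|_{\mathrm{op}}$ bound.
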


\begin{proof}
        By the KKT conditions, the solution $\bar{U}^t$ of \eqref{eq:objective} verifies :
        \begin{equation*}
            \begin{split}
              2 (\bar{U}^t - \TY) +\rho \Gamma^{\top} D \Gamma \bar{U}^t &=0 \qquad \text{with } D = \text{diag}(\{\frac{1}{\|(\Gamma U^t)_{e\cdot}\|_{2}}\}_{e \in \mathcal{E}})   
            \end{split}
        \end{equation*}
        This implies:
        \begin{equation*}
            \begin{split}
            \< \TY - \bar{U}^t, \bar{U}^t\>
               &=  \frac{\rho}{2} \< \Gamma\bar{U}^t, D \Gamma \bar{U}^t\>  = \frac{\rho}{2}  \|\Gamma \bar{U}^t\|_{21}\\
               \text{and} \quad \forall U \in \R^{ n \times K}, \qquad  \< \TY - \bar{U}^t , U\>
               &= \frac{\rho}{2} \< \Gamma U, D \Gamma \bar{U}^t\> \leq \frac{\rho}{2}  \|\Gamma {U}\|_{21}\\
            \end{split}
        \end{equation*}
        Therefore:
        \begin{equation*}
            \begin{split}
            \< \TY - \bar{U}^t, U - \bar{U}^t\>
               &\leq   \frac{\rho}{2} (\|\Gamma {U}\|_{21} -  \|\Gamma \bar{U}^t\|_{21})\\
               \<\tilde{U} - \bar{U}^t, U - \bar{U}^t\>
               &\leq \<\tilde{Z}, \bar{U}^t-U\> +  \frac{\rho}{2} (\|\Gamma {U}\|_{21} -  \|\Gamma \bar{U}^t\|_{21})\\
            \end{split}
        \end{equation*}
       Using the polarization inequality:
        \begin{equation*}
            \begin{split}
            \|U - \bar{U}^t\|_F^2  + \| \tilde{U} - \bar{U}^t \|_F^2 &\leq \| \tilde{U} - {U} \|_F^2 + 2\<\tilde{Z}, \bar{U}^t-U\> +  \rho (\|\Gamma {U}\|_{21} -  \|\Gamma \bar{U}^t\|_{21})\\
            \end{split}
        \end{equation*} and, choosing $U = \tilde{U}:$
        \begin{equation*}\label{eq:basic1}
            \| \tilde{U} - \bar{U}^t \|_F^2 \leq \<\tilde{Z}, \bar{U}^t-\tilde{U}\> + \frac{\rho}{2} (\|\Gamma \tilde{U}\|_{21} -  \|\Gamma \bar{U}^t\|_{21})
        \end{equation*}
    
    Let $\Delta = \TU-\bar{U}^t$. By the triangle inequality, the right-most term in the above inequality can be rewritten as:
        \begin{equation*}
            \begin{split}
                \|\Gamma \tilde{U}\|_{21} -  \|\Gamma \bar{U}^t\|_{21} 
        &= \|(\Gamma \tilde{U})_{\mathcal{S}\cdot}\|_{21}  +  \|(\Gamma \tilde{U})_{\mathcal{S}^C\cdot}\|_{21}-\|(\Gamma \tilde{U}+\Gamma \Delta)_{\mathcal{S}\cdot}\|_{21} - \|(\Gamma \tilde{U}+\Gamma \Delta)_{\mathcal{S}^C\cdot}\|_{21}\\
        &\leq \|(\Gamma \Delta)_{\mathcal{S}\cdot}\|_{21} - \|(\Gamma \Delta)_{\mathcal{S}^C\cdot}\|_{21},\\
        \end{split}
        \end{equation*}
    since by assumption, $\|(\Gamma \tilde{U})_{\mathcal{S}^C\cdot}\|_{21}=0.$

        We turn to the control of the error term $\<\tilde{Z}, \bar{U}^t-\tilde{U}\>$. Using the decomposition of $\R^n$ induced by the projection  $\Gamma^{\dagger} \Gamma$ as $I_n = \Pi \oplus^{\perp} \Gamma^{\dagger} \Gamma$, we have:
        \begin{equation}\label{eq:bi_2}
            \begin{split}
                \<\tilde{Z}, \bar{U}^t-\tilde{U}\> 
                &=  \<\tilde{Z}, \Pi (\bar{U}^t-\tilde{U})\> + \<\tilde{Z}, \gamp \Gamma (\bar{U}^t-\tilde{U}
                )\>\\
                 &= \underbrace{\< \Pi\tilde{Z}, \Pi \Delta\>}_{(A)} + \underbrace{ \< (\gamp)^\top\tilde{Z}, \Gamma\Delta\>}_{(B)}.
            \end{split}
        \end{equation}

\paragraph{ Bound on (A) in Equation \eqref{eq:bi_2}} 
    By Cauchy-Schwarz:
    \begin{equation*}
        \begin{split}
            \< \Pi\tilde{Z}, \Pi \Delta\> &\leq \| \Pi \tilde{Z}\|_F \|\Pi \Delta \|_F
        \end{split}
    \end{equation*}

    By Lemma \ref{lemma:concentration_frob_norm_pitildeZ}, with probability at least $1-o(n^{-1})$:
    \begin{equation*}
        \|  \Pi \tilde{Z}\|_F^2 \leq C_1n_{\C}K\frac{\log(n)}{N}
    \end{equation*}

       \paragraph{ Bound on (B) in Equation \eqref{eq:bi_2}.}
    \begin{equation*}
        \begin{split}
            \< (\Gamma^{\dagger})^\top\tilde{Z}, \Gamma \Delta\> 
            &= \sum_{e \in [m]} \< ((\Gamma^{\dagger})^{\top}\tilde{Z})_{e\cdot}, (\Gamma \Delta)_{e\cdot} \>\\
            &\leq  \sum_{e \in [m]} \| ((\Gamma^{\dagger})^{\top}\tilde{Z})_{e\cdot}\|_2 \|(\Gamma \Delta)_{e\cdot}\|_{2}\qquad \text{by Cauchy-Schwarz}\\
            &\leq  \max_{e\in [m]}  \|(\Gamma^{\dagger})^{\top}\tilde{Z})_{e\cdot}\|_{2} \sum_{e \in [m]} \| (\Gamma \Delta)_{e\cdot}\|_2 \\
               &=  \max_{e\in [m]}  \|((\Gamma^{\dagger})^{\top}\tilde{Z})_{e\cdot}\|_{2} \| \Gamma \Delta\|_{21}\\
        \end{split}
    \end{equation*}
    Thus, on the event  $\mathcal{A} = \{ \rho \geq  4 \max_{e\in [m]}  \|(\gamp)^\top\tilde{Z})_{e\cdot}\|_{2}\}$, we have:
    \begin{equation*}
        \begin{split}
            \< (\Gamma^{\dagger})^\top\tilde{Z}, \Gamma \Delta\> &\leq  \frac{\rho}{4} \| \Gamma \Delta\|_{21}.\\
        \end{split}
    \end{equation*}

    To derive $\P(\mathcal{A})$, we first establish the relationship between $\tilde{Z}$ and $L_{t-1}$,
    \begin{equation*}
    \tilde{Z} = Z (P_{V}+P_{V_{\perp}})\wh V^{t-1} = ZVV^{\top}\wh V^{t-1}+ZV_{\perp}V_{\perp}^{\top}\wh V^{t-1}
    \end{equation*}
Then,
    \begin{equation*}
    \begin{split}
        \max_{e\in [m]}  \|(\gamp)^\top\tilde{Z})_{e\cdot}\|_{2} 
        &=  \max_{e\in [m]}  \|(\gamp)^\top(ZVV^{\top}\wh V^{t-1}+ZV_{\perp}V_{\perp}^{\top}\wh V^{t-1}))_{e\cdot}\|_{2}\\
        &\leq \max_{e\in [m]}  \|((\gamp)^\top Z)_{e\cdot}\|_{2}\| V^\top \wh V^{t-1} \|_{op}+\max_{e\in [m]}  \|((\gamp)^\top Z)_{e\cdot}\|_{2}\|V_{\perp}^{\top}\wh V^{t-1}\|_{op} \\
        &\leq \max_{e\in [m]}  \|((\gamp)^\top Z)_{e\cdot}\|_{2}(1+L_{t-1})
    \end{split}
    \end{equation*}
    
    where we used the fact $\|\sin \Theta (V, \wh V^{t-1})\|_F = \|V_{\perp}^{\top}\wh V^{t-1}\|_F \geq \|V_{\perp}^{\top}\wh V^{t-1}\|_{op} $. From Lemma~\ref{lem:concentration_Gamma_dagger_Z}, for a choice of $\rho = 4C^*\rho(\Gamma)\sqrt{\frac{K\log(n)}{N}}(1+L_{t-1})$, then $\mathbb{P}\left[\mathcal{A}\right] \geq 1-o(n^{-1})$.

    Therefore:
    \begin{equation}\label{eq:basic1_1}
            \begin{split}
        \|\Delta\|_F^2  
        &\leq \|\Pi\tilde{Z}\|_F \| \Delta\|_F + \frac{3\rho}{4} \|\Gamma \Delta\|_{21} \\
        &\leq  \|\Pi\tilde{Z}\|_F \| \Delta\|_F + \frac{3\rho}{4} \sqrt{s}\|\Gamma \Delta\|_{F} \\
        &\leq  \|\Pi\tilde{Z}\|_F \| \Delta\|_F + \frac{3\rho}{4} \sqrt{s} {\lambda_{\max}(\Gamma)}\|\Delta\|_{F} \\
            \end{split}
        \end{equation}

    and thus:
    \begin{equation*}
    \label{eq:bound1}
    \begin{split}
        \|\tilde{U} - \bar{U}^t\|_F 
        &\leq  C_1\sqrt{\frac{n_{\C} K\log(n)}{N}}+3C_2\rho(\Gamma)\sqrt{s}{\lambda_{\max}(\Gamma)}\sqrt{\frac{K\log(n)}{N}}(1+L_{t-1})\\
       & \leq  C\sqrt{\frac{K\log(n)}{N}} (\sqrt{n_{\C}}+\rho(\Gamma)\sqrt{s}{\lambda_{\max}(\Gamma)}(1+L_{t-1}))\\
    \end{split}
    \end{equation*}  
    The result follows by noting that the Laplacian of the graph $L$ is linked to $\Gamma$ by $L=\Gamma^\top \Gamma $.
\end{proof}





\subsection*{Proof of Theorem 2}
\label{proof:theorem:GSVD}
\begin{proof}
Recall $L_t$, the error at each iteration $t$:
\begin{equation}
    L_t = \max\{ \| \sin\Theta(V, \widehat{V}^t)\|_{F}, \|\sin\Theta(U, \widehat{U}^t)\|_{F}\}.
\end{equation}
  \paragraph{ Bound on $\|\sin(\Theta(U, \widehat{U}^t)\|_{F}$.}   We start by deriving a bound for $\|\sin \Theta (U, \wh U^t)\|_{F}$.  Let $U_{\perp}$ denote the orthogonal complement of $U,$ so that:
  $$ I_{n} = UU^{\top} + U_{\perp}U_{\perp}^{\top}.$$
    
Noting that $\widehat{U}^t$ is the matrix corresponding to the top $K$ left singular vectors of the matrix $\tilde{U}^t = (\tilde{U}^t - MV ) + MV=(\tilde{U}^t - M\wh{V}^{t} + M\wh{V}^{t} - MV ) + MV,$ by Theorem 1 of \cite{cai2018rate} (which we rewrote in Lemma ~\ref{lemma:perturbation} of this manuscript to make it self-contained):
    \begin{equation*}
    \begin{split}
        \|\sin \Theta (U, \wh U^t)\|_{F} 
        &\leq \frac{\| P_{U_{\perp}} (\tilde{U}^t - M\wh{V}^{t} + M\wh{V}^{t} - MV)\|_{F}}{\lambda_{\min} (U^\top \tilde{U}^t)  } \\
        &= \frac{ \| P_{U_{\perp}} (\tilde{U}^t - M\wh{V}^{t})\|_{F}}{\lambda_{\min} (U^\top \tilde{U}^t)}
    \end{split}
    \end{equation*}
    where the second line follows from noting that $P_{U_{\perp}} (M\wh{V}^{t} - MV)=0. $

Since $\Lambda$ is a diagonal matrix, we have:               \begin{equation*}
        \begin{split}
\lambda_{\min} (U^\top M \wh{V}^{t-1} ) = \lambda_{\min} (\Lambda V^{\top} \wh{V}^{t-1} )  &= \min_{u \in \R^{K},v \in \R^{p}: \|u\|=\|v\|=1} u^\top\Lambda V^{\top} \wh{V}^{t-1} v\\
&= \lambda_{K}(M) \min_{u \in \R^{K},v \in \R^{p}: \|u\|=\|v\|=1}  u^\top V^{\top} \wh{V}^{t-1} v \\
&=  \lambda_{K}(M) \lambda_{\min}( V^{\top} \wh{V}^{t-1}) 
        \end{split}
    \end{equation*}

Thus, by Weyl's inequality:
       \begin{equation*}
        \begin{split}
            \lambda_{\min} (U^\top \tilde{U}^t)&=   \lambda_{\min} (U^\top (\tilde{U}^t - M \wh{V}^{t-1} + M \wh{V}^{t-1} ))\\
            &\geq  - \lambda_{\max} (U^\top (\tilde{U}^t - M \wh{V}^{t-1})) +\lambda_{\min} (U^\top  M \wh{V}^{t-1} )\\
            &\geq \lambda_{\min} (\Lambda V^{\top} \wh{V}^{t-1})- \|\tilde{U}^t - M \wh{V}^{t-1}\|_{F} = \lambda_K(M)  \sqrt{1-L_{t-1}^2} - \|\Delta\|_{F}\\
        \end{split}
    \end{equation*}
where $\Delta =\tilde{U}^t - M \wh{V}^{t-1}. $
By Lemma~\ref{lemma:denoising}, we know that:
$$\|\Delta\|_{F} \leq C\sqrt{\frac{K\log(n)}{N}}\left(\sqrt{n_{\C}}+\rho(\Gamma)\sqrt{s}\sqrt{\lambda_{\max}}(1+L_{t-1})\right) = \eta_n + \delta_n L_{t-1}$$
with $\eta_n=C\sqrt{\frac{K\log(n)}{N}}\left(\sqrt{n_{\C}}+\rho(\Gamma)\sqrt{s}\sqrt{\lambda_{\max}(\Gamma)}\right)$ and $\delta_n =C\rho(\Gamma)\sqrt{s\lambda_{\max}(\Gamma)\frac{K\log(n)}{N}}. $
Thus:
     \begin{equation*}
    \begin{split}
    \|\sin \Theta (U, \wh U^t)\|_{F}  &\leq \frac{ \| \Delta\|_{F}}{\lambda_K(M)  \sqrt{1-L_{t-1}^2} - \|\Delta\|_{F} } \\
     &\leq \frac{ \eta_n + \delta_n L_{t-1}}{\lambda_K(M)  \sqrt{1-L_{t-1}^2} - (\eta_n + \delta_n L_{t-1}) } \\
     &\leq \frac{ \eta_n + \delta_n L_{t-1}}{\lambda_K(M)/2 - (\eta_n + \delta_n L_{t-1}) } \\
    \end{split}
\end{equation*}
   where the last line follows by assuming that $L_{t-1} \leq \frac{1}{2} \quad \forall t \geq 0$ (we will show that this indeed holds).
By using a first-order Taylor expansion around $0$ for the function $f(x) = \frac{a+bx}{c-a-bx}$ for $x\in (0,1/2)$, we obtain:
$$ f(x) < \frac{a}{c -a} + \frac{bc}{(c - a - b/2)^2}x, \quad \text{for } x \in (0, 1/2).$$

        Therefore, seeing that 
        we have $\eta_n \geq \delta_n$ and letting $u = \frac{ \eta_n}{ \lambda_K(M)/2 - \eta_n}= \frac{ 2\eta_n}{ \lambda_K(M) - 2\eta_n}$ and $r=\frac{\lambda_K(M)/2 \delta_n }{(\lambda_K(M)/2 - \eta_n -\delta_n/2)^2}=\frac{2\lambda_K(M) \delta_n }{(\lambda_K(M) -2 \eta_n -\delta_n)^2}\leq \frac{2\lambda_K(M) \eta_n }{(\lambda_K(M) -3\eta_n)^2}$, we have:
    \begin{equation*}
    \begin{split}
      \|\sin \Theta (U, \wh U^t)\|_{F}  &\leq  u + r L_{t-1} \\
    \end{split}
    \end{equation*}

By Assumption 4, we have $\lambda_K(M) \geq c\sqrt{\frac{n}{K}}$.
Therefore, $\lambda_K(M) \geq 10\eta_n$ as soon as:
\begin{equation}\label{cond:N_1}
    \begin{split}
     n &\geq \frac{100C^2}{c^2} {\frac{K^2\log(n)}{N}}\left({n_{\C}}+\rho^2(\Gamma)s\lambda_{\max}(\Gamma)\right)\\
     \implies N&\geq \frac{100C^2}{c^2}\frac{K^2\log(n)}{n}\left({n_{\C}}+\rho^2(\Gamma)s\lambda_{\max}(\Gamma)\right)
    \end{split}
\end{equation}
which is satisfied under the condition (12) of $N$ in Theorem 2. Thus, in this setting:
\begin{equation}\label{eq:r}
    r \leq \frac{2\lambda_K(M) \eta_n }{(\lambda_K(M) -3\eta_n)^2} \leq \frac{2\lambda_K(M) \eta_n}{(\frac{7}{10}\lambda_K(M))^2} \leq \frac{200/49\eta_n}{\lambda_K(M)} \leq \frac{20}{49}  \leq \frac{1}{2}.
\end{equation}
and
\begin{equation*}\label{eq:u}
    u \leq \frac{2\eta_n}{\lambda_K(M)-2\eta_n} \leq \frac{5/2\eta_n}{\lambda_K(M)} \leq \frac{5}{20} = \frac{1}{4}
\end{equation*}

Also given that $L_{t-1} \leq \frac{1}{2}$,
\begin{equation*}\label{eq:L_t_bound}
    \|\sin \Theta (U, \wh U^t)\|_{F} \leq u + rL_{t-1} \leq \frac{5/2\eta_n+100/49\eta_n}{\lambda_K(M)} \leq \frac{1}{2}.
\end{equation*}

\paragraph{ Bound on $ \|\sin \Theta (V, \wh V^t)\|_{F} $}

By definition of the second step:
$$ \wh V^t = U_K (X^\top \wh U^t).$$



    By Theorem 1 of \cite{cai2018rate} (summarized for our use case in Lemma~\ref{lemma:perturbation}):
        \begin{equation*} 
    \begin{split}
        \|\sin \Theta (V, \wh V^t)\|_{F} 
        &\leq \frac{  \| P_{V_{\perp}} (M^\top(\widehat{U}^t - U) + Z^\top \widehat{U}^t )\|_{F}}{\lambda_{\min} (V^\top X^\top\widehat{U}^t)} \\
        &= \frac{ \| P_{V_{\perp}} (Z^\top \widehat{U}^t )\|_{F}}{\lambda_{\min} (V^\top X^\top\widehat{U}^t)}  \qquad \text{ since } P_{V_{\perp}} M^\top(\widehat{U}^t - U)  = 0
    \end{split}
    \end{equation*}

    We have:
\begin{equation*}
    \begin{split}
        \lambda_{\min} (V^\top X^\top\widehat{U}^t) &=   \lambda_{\min} (V^\top M^\top\widehat{U}^t + V^\top Z^\top\widehat{U}^t)  \\
        &\geq \lambda_{\min} (\Lambda U^{\top}\widehat{U}^t) -\lambda_{\max} ( V^\top Z^\top\widehat{U}^t)  \qquad \text{ (Weyl's inequality)} \\
        &= \lambda_{K}(M) \underbrace{\lambda_{\min}( U^{\top}\widehat{U}^t)}_{= \sqrt{1-L_t^2}} - \|V^\top Z^\top\widehat{U}^t\|_{F}    \\
        &\geq \lambda_{K}(M) \sqrt{1-L_t^2} -\|V^\top Z^\top\widehat{U}^t\|_{F}
    \end{split}
\end{equation*}

Thus, assuming that $L_t \leq \frac{1}{2}, \forall t$:
        \begin{equation*}
    \begin{split}
        \|\sin \Theta (V, \wh V^t)\|_{F} &\leq \frac{\| V_{\perp}^{\top} Z^{\top} \wh U^t\|_{F}}{\frac{1}{2}\lambda_{K}(M)-\| V^{\top} Z^\top\widehat{U}^t \|_{F}}.
    \end{split}
    \end{equation*}
Furthermore:
\begin{equation*}
    \begin{split}
        \| V^{\top}  Z^{\top}\wh U^t \|_{F} 
        &\leq \|V^{\top} Z^{\top} U U^{\top}\widehat{U}^t \|_{F} +  \|V^{\top} Z^{\top} U_{\perp} U_{\perp}^{\top}\widehat{U}^t \|_{F} \\
        &\leq \|V^{\top}  Z^{\top} U\|_{F}\| U^{\top}\widehat{U}^t \|_{op} +  \|V^{\top} Z^{\top} U_{\perp}\|_{op}\| U_{\perp}^{\top}\widehat{U}^t \|_{F}\\
         &\leq C K\sqrt{\frac{\log(n)}{N}} +  C \sqrt{\frac{Kn\log(n)}{N}}\|\sin \Theta (U, \wh U^t)\|_{F}
    \end{split}
\end{equation*}

    where the last inequality follows by noting that $\| U^{\top}\widehat{U}^t \|_{F} \leq 1$ and from Lemma~\ref{lemma:concentration_frob_norm_Utz}, which show that with probability at least $1-o(\frac{1}{n})$:
    $$ \| Z^\top {U} \|_{F}  \leq CK\sqrt{\frac{\log(n)}{N}}$$
    and  since $U_{\perp} \in \R^{n \times (n-K)}$:   $$ \ \| Z^\top {U}_{\perp}  \|_{op}  \leq C\sqrt{ K n \frac{\log(n)}{N}}. $$

    Therefore, using the same arguments as in the previous paragraph, using $\tilde{\eta}_n =C K\sqrt{\frac{\log(n)}{N}}$ and $\tilde{\delta}_n = C \sqrt{\frac{Kn\log(n)}{N}}$, we have:
    
$$ f(x) < \frac{a}{c -a} + \frac{bc}{(c - a - b/2)^2}x, \quad \text{for } x \in (0, 1/2).$$

        Therefore,  we have $\tilde{\eta}_n \leq \tilde{\delta}_n$, and letting $\tilde{u} = \frac{ \tilde{\eta}_n}{ \lambda_K(M)/2 - \tilde{\eta}_n}$  and $\tilde{r}=\frac{\lambda_K(M)/2 \tilde{\delta}_n }{(\lambda_K(M)/2 - \tilde{\eta}_n -\tilde{\delta}_n/2)^2}=\frac{2\lambda_K(M) \tilde{\delta}_n }{(\lambda_K(M) -2 \tilde{\eta}_n -\tilde{\delta}_n)^2}\leq \frac{2\lambda_K(M) \tilde{\delta}_n }{(\lambda_K(M) -3\tilde{\delta}_n)^2}$, 
    \begin{equation*}
    \begin{split}
      \|\sin \Theta (V, \wh V^t)\|_{F} &\leq  \tilde{u} + \tilde{r} \|\sin \Theta (U, \wh U^t)\|_{F} \leq \tilde{u}+\tilde{r}L_{t-1}\\
        \end{split}
        \end{equation*}

when $L_t$ decreases with each iteration. Again, we note that $\lambda_K(M) \geq 10\tilde{\delta}_n$ as soon as:
\begin{equation}\label{cond:N_2}
    \begin{split}
     n &\geq \frac{100C^2}{c^2} {\frac{K^2n\log(n)}{N}} \\
     \implies N &\geq \frac{100C^2}{c^2} {{K^2\log(n)}} 
    \end{split}
\end{equation}

which is satisfied under the condition (12) of $N$ in Theorem 2. Then we can show that,
\begin{equation}\label{eq:r2}
    \tilde{r} \leq \frac{2\lambda_K(M) \tilde{\delta}_n }{(\lambda_K(M) -3\tilde{\delta}_n)^2} \leq \frac{2\lambda_K(M) \tilde{\delta}_n}{(\frac{7}{10}\lambda_K(M))^2} \leq \frac{200/49\tilde{\delta}_n}{\lambda_K(M)} \leq \frac{1}{2}
\end{equation}
and
\begin{equation*}\label{eq:u2}
    \tilde{u} \leq \frac{2\tilde{\delta}_n}{\lambda_K(M)-2\tilde{\delta}_n} \leq \frac{5/2\tilde{\delta}_n}{\lambda_K(M)} \leq \frac{5}{20} = \frac{1}{4}
\end{equation*}

Also given that $L_{t-1} \leq \frac{1}{2}$,
\begin{equation*}\label{eq:L_t_bound2}
    \|\sin \Theta (V, \wh V^t)\|_{F} \leq \tilde{u} + \tilde{r}L_{t-1} \leq \frac{5/2\tilde{\delta}_n+100/49\tilde{\delta}_n}{\lambda_K(M)} \leq \frac{1}{2}
\end{equation*}

and 
\begin{equation*}
    \frac{\tilde{u}}{1-\tilde{r}} \leq \frac{3\tilde{\delta}_n}{\lambda_K(M)}\times \frac{\lambda_K(M)}{\lambda_K(M)-4\tilde{\delta}_n} \leq \frac{1}{2}.
\end{equation*}

Therefore, for all $t$,
$$ L_t \leq \frac{1}{2}.$$
\paragraph{ Behavior of $L_t$} $L_t$ is a decreasing function of $t$
 for $t\geq1$, and by Theorem 1, $L_0 \leq \frac{1}{2}$ (We later show in \eqref{eq:proof_init}). From the previous sections,
 \begin{equation*}
     \begin{split}
        \|\sin \Theta (U, \wh U^t)\|_{F} 
        &\leq \frac{5/2\eta_n}{\lambda_K(M)}+ \frac{200/49\delta_n}{\lambda_K(M)}L_{t-1}\\
        &\leq \frac{5/2C}{\lambda_K(M)}\sqrt{\frac{K\log(n)}{N}}\left(\sqrt{n_{\mathcal{C}}}+\rho(\Gamma)\sqrt{s\lambda_{\max}(\Gamma)}\right)\\
        &\qquad +\frac{200/49C}{\lambda_K(M)}\sqrt{\frac{K\log(n)}{N}}\rho(\Gamma)\sqrt{s\lambda_{\max}(\Gamma)}L_{t-1}\\
         \|\sin \Theta (V, \wh V^t)\|_{F} 
         &\leq \frac{5/2\tilde{\eta}_n}{\lambda_K(M)}+ \frac{200/49\tilde{\delta}_n}{\lambda_K(M)}L_{t-1}\\
         &\leq \frac{5/2C}{\lambda_K(M)}K\sqrt{\frac{\log(n)}{N}}+\frac{200/49C}{\lambda_K(M)}\sqrt{\frac{Kn\log(n)}{N}}L_{t-1}
     \end{split}
 \end{equation*}
  Thus,

   \begin{equation}\label{eq:geometric}
    \begin{split}
      L_t  &\leq  u + r L_{t-1} \\
      &\leq   u +r(u + r L_{t-2}) \\
      &\leq  r^t L_0 + u (1+ r +r^2 + \cdots r^{t-1})\\
      &\leq  r^t L_0 + u \frac{1-r^{t}}{1-r}
    \end{split}
    \end{equation}

where 
\begin{equation*}
    \begin{split}
        u & = \frac{5/2C}{\lambda_K(M)}\sqrt{\frac{K\log(n)}{N}}\left(\sqrt{n_{\mathcal{C}}}+\rho(\Gamma)\sqrt{s\lambda_{\max}(\Gamma)}\right) \\
        r & = \frac{200/49C}{\lambda_K(M)}\sqrt{\frac{K\log(n)}{N}}\left(\rho(\Gamma)\sqrt{s\lambda_{\max}(\Gamma)}\vee \sqrt{n}\right)
    \end{split}
\end{equation*}
where $r \leq \frac{1}{2},$ 
In particular, we want to find $t_{\max}$ such that $r^{t_{\max}} L_0$ becomes small enough to satisfy $r^{t_{\max}} L_0 \leq \frac{u}{1-r}$. Using $r \leq \frac{1}{2}$ (as previously shown) and that $L_0 \leq \frac{1}{2}$,
  \begin{equation*}
      \begin{split}
          \frac{r^{t_{\max}}}{2} & \leq \frac{u}{1-r}\\
          \implies t_{\max} &\geq \frac{-\log(2u)+\log(1-r)}{|\log(r)|}      \geq \frac{-2\log(2)-\log(u)}{\log(2)}
      \end{split}
  \end{equation*}
 Combining with the previous inequality (and since $\log(2)\leq \frac{1}{4}$) and the fact that under Assumption 4, we have $\lambda_K(M)\geq c\sqrt{n/K}$, we can choose $t_{\max}$ as,
\begin{equation*}
    t_{\max}=\left(2\log(nN) -4\log(\frac{5/2C}{c}) -4\log(K) -2\log(\log n) -4\log(\sqrt{n_{\mathcal{C}}}+\rho(\Gamma)\sqrt{s\lambda_{\max}(\Gamma)}) -2 \right)\vee 1
\end{equation*}

Thus, it is sufficient to choose $t_{\max}$ as,
\begin{equation}\label{eq:t_max}
    t_{\max}=2\log(\frac{nN}{K^2}) \vee 1
\end{equation}

Lastly, once $t_{\max}$ is chosen as \eqref{eq:t_max}, the bound on $L_{t_{\max}}$ in \eqref{eq:geometric} becomes,
\begin{equation}\label{eq:theorem3_main}
\begin{split}
     L_{t_{\max}} &\leq \frac{2u}{1-r} \leq 4u\\
     &= \frac{10C}{\lambda_K(M)}\sqrt{\frac{K\log(n)}{N}}\left(\sqrt{n_{\mathcal{C}}}+\rho(\Gamma)\sqrt{s\lambda_{\max}(\Gamma)} \right)\\
     &\leq \frac{10C}{c}K\sqrt{\frac{\log(n)}{nN}}\left(\sqrt{n_{\mathcal{C}}}+\rho(\Gamma)\sqrt{s\lambda_{\max}(\Gamma)} \right) 
\end{split}
\end{equation}
This concludes the proof.


\end{proof}

\subsection{Comparison with One-step Graph-Aligned denoising}

\label{sec:one-step}

We also propose a fast one-step graph-aligned denoising of the matrix $X$ that could be an alternative of the iterative graph-aligned SVD in Step 1 of Section 2.3 of the main manuscript. We denoise the frequency matrix $X$ by the following optimization problem,

\begin{equation}\label{eq:init_denoising}
    \widehat{M} = \text{argmin}_{M \in \R^{n \times p}} \| X - M \|_F^2+ \rho \|\Gamma M\|_{21}
\end{equation}

A SVD on the denoised matrix $\wh M$ yields estimates of the singular values $U$ and $V$. 
Through extensive experiments with synthetic data, we find that one-step graph-aligned denoising provides more accurate estimates than pLSI but still falls short compared to the iterative graph-aligned denoising (GpLSI). We provide a theoretical upper bound on its error as well as its comparison to the error of pLSI where there is no graph-aligned denoising.

\begin{algorithm}
\setstretch{1.35}
\caption{One-step Graph-aligned denoising} 
\label{algo:init}
\begin{algorithmic}[1]
\State \textbf{Input:} Observation $X$, incidence matrix $\Gamma$
\State \textbf{Output:} Denoised singular vectors $\wh U$ and $\wh V$.
\State 1. Graph denoising on $X$ with MST-CV: $\tilde{M} = \arg \min_{M\in \R^{n \times p}}\lVert X - M \rVert_F^2 + \hat{\rho}\lVert \Gamma M \rVert_{21}$
\State 2. Perform the rank-$K$ SVD of $\tilde{M}$: $\tilde{M} \approx \wh U \wh \Lambda \wh V$
\end{algorithmic}
\end{algorithm}

We begin by analyzing the one-step graph-aligned denoising, as proposed in Algorithm~\ref{algo:init}. 
We begin by reminding the reader that, in our proposed setting, the observed word frequencies in each document are assumed to follow a  ``signal + noise" model, $X = M  + Z$
where the true probability $M$ is assumed to admit the following SVD decomposition:
\begin{equation*}\label{eq:def_M}
M = \E[X] = U \Lambda V^{\top}.
\end{equation*}

\begin{theorem}
Let the conditions of Theorem 3 hold. Let $\wh U$ and $\wh V$ be given as estimators obtained from Algorithm~\ref{algo:init}. Then, there exists a constant $C>0$, such that with probability at least $1-o(n^{-1})$,
\begin{equation}\label{eq:init_err}
    \max \{\|\sin \Theta (U, \wh U)\|_F, \|\sin \Theta (V, \wh V)\|_F\} \leq CK\sqrt{\frac{\log(n)}{nN}}\left ( \sqrt{n_C} + \rho(\Gamma) \sqrt{s}\sqrt{\lambda_{\max}(\Gamma)} \right )
\end{equation}
\end{theorem}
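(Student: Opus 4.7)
The plan is to mirror the structure of Lemma~\ref{lemma:denoising} and Theorem~\ref{theorem:err_GSVD}, but applied to the one-shot denoising target $M$ in place of the projected target $\tilde U$. Concretely, I will first derive a high-probability Frobenius bound on $\tilde M - M$, then transfer it to the singular subspaces via a Davis-Kahan/Wedin style inequality. Since $M = U\Lambda V^\top$ and $\text{supp}(\Gamma M)\subseteq \text{supp}(\Gamma U\Lambda V^\top) = \text{supp}(\Gamma U)$, the sparsity assumption $|\text{supp}(\Gamma M)|\le s$ is inherited from Assumption~\ref{assumption:smoothness}, so the total-variation penalty on $M$ is well matched to the signal.

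First, I would invoke the KKT conditions for the convex program~\eqref{eq:init_denoising} exactly as in the proof of Lemma~\ref{lemma:denoising}, with $(X,M,Z)$ playing the roles of $(\TY,\TU,\TZ)$. The basic inequality gives
\begin{equation*}
\|\tilde M - M\|_F^2 \;\le\; \langle Z,\tilde M - M\rangle + \tfrac{\rho}{2}\bigl(\|\Gamma M\|_{21} - \|\Gamma \tilde M\|_{21}\bigr),
\end{equation*}
and, after decomposing $I_n = \Pi \oplus^\perp \Gamma^\dagger\Gamma$ and using the row-sparsity of $\Gamma M$ together with the triangle inequality on the $\ell_{21}$ norm, one obtains
\begin{equation*}
\|\tilde M - M\|_F^2 \;\lesssim\; \|\Pi Z\|_F \,\|\tilde M - M\|_F \;+\; \rho\,\sqrt{s}\,\lambda_{\max}(\Gamma)\,\|\tilde M - M\|_F,
\end{equation*}
provided $\rho \ge 4\max_{e\in [m]} \|((\Gamma^\dagger)^\top Z)_{e\cdot}\|_2$.

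The main technical step is to control the two noise functionals $\|\Pi Z\|_F$ and $\max_{e}\|((\Gamma^\dagger)^\top Z)_{e\cdot}\|_2$ now that $Z\in\mathbb R^{n\times p}$ rather than the low-dimensional $\tilde Z\in\mathbb R^{n\times K}$ used in Lemma~\ref{lemma:denoising}. The key observation is that the multinomial structure of $Z$ forces $\sum_{j}\mathrm{Var}(Z_{ij}) \le 1/N$, so $\mathbb E\|\Pi Z\|_F^2 = \mathrm{tr}(\Pi\,\mathbb E[ZZ^\top]) \le n_{\mathcal C}/N$; a Bernstein-type concentration inequality (analogous to the lemma used for $\Pi\TZ$) then yields $\|\Pi Z\|_F^2 \lesssim n_{\mathcal C}K\log(n)/N$ with probability at least $1-o(n^{-1})$. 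A similar argument, leveraging $\sum_j M_{ij}(1-M_{ij}) \le 1$, gives $\mathbb E\|((\Gamma^\dagger)^\top Z)_{e\cdot}\|_2^2 \le \|s_e\|_2^2/N$ and, after a union bound over the $m \le n^2$ edges, $\max_e \|((\Gamma^\dagger)^\top Z)_{e\cdot}\|_2 \lesssim \rho(\Gamma)\sqrt{K\log(n)/N}$. Setting $\rho$ at this threshold and combining yields
\begin{equation*}
\|\tilde M - M\|_F \;\le\; C\sqrt{\tfrac{K\log(n)}{N}}\Bigl(\sqrt{n_{\mathcal C}} + \rho(\Gamma)\sqrt{s\,\lambda_{\max}(\Gamma)}\Bigr).
\end{equation*}

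To finish, I would apply the Davis-Kahan/Wedin $\sin\Theta$ inequality to the rank-$K$ SVD of $\tilde M$, treating $\tilde M - M$ as the perturbation. Using Assumption~\ref{assumption:eigenvalues} together with Lemma~\ref{lemma:singular_value_H} (so that $\lambda_K(M)\gtrsim \sqrt{n/K}$) gives
\begin{equation*}
\max\bigl\{\|\sin\Theta(U,\wh U)\|_F,\|\sin\Theta(V,\wh V)\|_F\bigr\} \;\le\; \frac{\|\tilde M - M\|_F}{\lambda_K(M)} \;\lesssim\; K\sqrt{\tfrac{\log(n)}{nN}}\Bigl(\sqrt{n_{\mathcal C}} + \rho(\Gamma)\sqrt{s\,\lambda_{\max}(\Gamma)}\Bigr),
\end{equation*}
which is exactly~\eqref{eq:init_err}. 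I expect the main obstacle to be the concentration step for $\|\Pi Z\|_F$ and especially $\max_e\|((\Gamma^\dagger)^\top Z)_{e\cdot}\|_2$: unlike the iterative setting, where $\tilde Z$ has only $K$ columns and the noise lives in a low-dimensional subspace, here the $n\times p$ noise $Z$ must be handled without blowing up with $p$, and this only goes through by exploiting the trace identity $\sum_j M_{ij}(1-M_{ij}) \le 1$ inherent to the multinomial model.
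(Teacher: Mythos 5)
Your proposal follows essentially the same route as the paper's proof: the basic inequality from the KKT conditions, the orthogonal decomposition $I_n = \Pi \oplus^{\perp} \Gamma^{\dagger}\Gamma$, concentration of $\|\Pi Z\|_F$ and $\max_e \|((\Gamma^{\dagger})^{\top}Z)_{e\cdot}\|_2$ via Bernstein-type bounds exploiting $\sum_j M_{ij}(1-M_{ij})\le 1$, the same choice of $\rho$, and a final Wedin $\sin\Theta$ step combined with $\lambda_K(M)\gtrsim\sqrt{n/K}$. Your observations that the sparsity of $\Gamma M$ is inherited from $\Gamma U$ and that the noise functionals must be controlled for the full $n\times p$ matrix $Z$ (rather than the $K$-column $\tilde Z$) are exactly the points the paper handles with its Lemmas on $\|\Pi Z\|_F$ and $\|((\Gamma^{\dagger})^{\top}Z)_{e\cdot}\|_2$.
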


\begin{proof}

Let $\widehat M$ be the solution of \eqref{eq:init_denoising}:
\begin{equation*}
        \widehat{M} = \text{argmin}_{M \in \R^{n \times p}} \| M - X \|_F^2+ \rho \|\Gamma M\|_{21}
\end{equation*}

Let $\Delta = \wh M -M$, and $Z = X-M$. By the basic inequality, we have:
\begin{equation}\label{eq:init_BI}
    \begin{split}
        \| \wh M - X\|_{F}^2 + \rho \| \Gamma \widehat{M}\|_{21} &  \leq       \| {M} - X\|_{F}^2 + \rho \| \Gamma {M}\|_{21}\\
             \| \widehat{M} - M\|_{F}^2 &  \leq      2 \<X-M, \widehat{M} - M \>+ \rho \| \Gamma {M}\|_{21} -  
             \rho \| \Gamma \widehat{M}\|_{21} \\
             &  =     2 \<Z,  (\Pi + \Gamma^{\dagger} \Gamma)\Delta\>+ \rho \| \Gamma {M}\|_{21} -  
             \rho \| \Gamma  \Delta + \Gamma M\|_{21}   \\
        &  \leq  2 \underbrace{  \<\Pi Z ,  \Pi\Delta \>}_{(A)} +   \underbrace{2\<(\Gamma^{\dagger})^TZ,   \Gamma \Delta \>+ \rho (\| (\Gamma \Delta)_S\|_{21} -  \| (\Gamma \Delta)_{S^c}\|_{21})}_{(B)} \\
    \end{split}
\end{equation}
where $S = \text{supp}(\Gamma W)$ and in the penultimate line, we have used the decomposition of $\R^n$ on the two orthogonal subspaces: $\R^n = \text{Im}(\Pi) \bigoplus^{\perp} \text{Im}(\Gamma^\dagger \Gamma),$ so that:
\begin{equation*}\label{eq:decompose_gamma}
    \forall x\in \R^n,\qquad x = \Pi x + \Gamma^\dagger \Gamma x
\end{equation*}
We proceed by characterizing the behavior of each of the terms (A) and (B) in the final line of \eqref{eq:init_BI} separately.

\paragraph{ Concentration of (A).}
By Cauchy-Schwarz, it is immediate to see that:
\begin{equation*}
    \<\Pi Z ,  \Pi\Delta \> \leq \|\Pi Z \|_F  \| \Pi\Delta \|_F.
\end{equation*}

By Lemma~\ref{lemma:concentration_pi_z}, with probability at least $1-o(n^{-1})$:
\begin{equation*}
    (A) \leq 2 \sqrt{ C_1Kn_C{ \frac{\log(n) 
    }{ N}}} \| \Delta\|_{F} 
\end{equation*}

\paragraph{ Concentration of (B).}
We have:
\begin{equation*}
    \begin{split}
        2\<(\Gamma^{\dagger})^TZ,   \Gamma \Delta \> &+ \rho (\| (\Gamma \Delta)_{S\cdot}\|_{21} -  \| (\Gamma \Delta)_{S^c\cdot}\|_{21}) \\ 
        &\leq   2\max_{e\in \mathcal{E}}\|[(\Gamma^{\dagger})^TZ]_{e\cdot}\|_2  \times \|  \Gamma \Delta \|_{21}+\rho (\| (\Gamma \Delta)_{S\cdot}\|_{21} -  \| (\Gamma \Delta)_{S^c\cdot}\|_{21})
    \end{split}
\end{equation*}
Let $\mathcal{A}$ denote the event: $ \mathcal{A} = \{\rho \geq 4 \max_{e\in \mathcal{E}}\|[(\Gamma^{\dagger})^TZ]_{e\cdot}\|_2 \}$. 
By Lemma~\ref{lem:concentration_Gamma_dagger_Z},  for a choice of $\rho = 4 C_2 \rho(\Gamma) \sqrt{\frac{K \log(n)}{N}}$, then $\P[\mathcal{A}] \geq 1-o({n}^{-1})$.

Then, on $\mathcal{A}$, we have:
\begin{equation}
    \begin{split}
        2\<(\Gamma^{\dagger})^TZ,   \Gamma \Delta \>+ \rho (\| (\Gamma \Delta)_{S\cdot}\|_{21} -  \|(\Gamma \Delta)_{S^c\cdot}\|_{21}) &\leq \frac{3\rho}{2}\| (\Gamma \Delta)_{S\cdot}\|_{21} -  \frac{\rho}{2}\| (\Gamma \Delta)_{S^c\cdot}\|_{21}
    \end{split}
\end{equation}

\paragraph{Concentration}
We thus have:
\begin{equation*}\label{eq:init_BI2}
    \begin{split}
 \| \Delta\|_{F}^2 &  \leq    4 \sqrt{ C_1Kn_C { \frac{\log(n) 
    }{ N}}}  \| \Delta\|_{F}  +   \frac{3\rho}{2}\| (\Gamma \Delta)_{S\cdot}\|_{21} \\
    &  \leq   4 \sqrt{ C_1Kn_C { \frac{\log(n) 
    }{ N}}} \| \Delta\|_{F}  +   \frac{3\rho}{2} \sqrt{s}\|  \Gamma \Delta\|_{F} \\
     &  \leq   4 \sqrt{ C_1Kn_C { \frac{\log(n) 
    }{ N}}} \| \Delta\|_{F}  +   \frac{3\rho}{2} \sqrt{s} \sqrt{\lambda_{\max}(\Gamma)}\| \Delta\|_{F}
    \end{split}
    \end{equation*}
    \begin{equation*}
        \begin{split}
 \| \Delta\|_{F}&    \leq   4 \sqrt{ C_1Kn_C { \frac{\log(n) 
    }{ N}}}   +   \frac{3\rho}{2} \sqrt{s}\sqrt{\lambda_{\max}(\Gamma)}\\
     \| \Delta\|_{F}&    \leq   4 \sqrt{ C_1Kn_C { \frac{\log(n) 
    }{ N}}}   +  6 \rho(\Gamma)  C_2 \sqrt{\frac{K \log(n)}{N}} \sqrt{s}\sqrt{\lambda_{\max}(\Gamma)}\\
    & \leq C  \left ( \sqrt{n_C} + \rho(\Gamma) \sqrt{s}\sqrt{\lambda_{\max}(\Gamma)} \right )\sqrt{\frac{K \log(n)}{N}}
    \end{split}
\end{equation*}


Then by applying Wedin's sin$\Theta$ theorem \citep{wedin1972perturbation},
\begin{equation*}\label{eq:init_err_U}
\begin{split}
    \|\sin \Theta (U, \wh U)\|_F 
    &\leq \frac{\max\{\|(M-\wh M)V\|_F, \|U^{\top}(M-\wh M)\|_F\}}{\lambda_K(M)}\\
    &\leq \frac{C}{\lambda_K(M)}\sqrt{\frac{K\log(n)}{N}}\left ( \sqrt{n_C} + \rho(\Gamma) \sqrt{s}\sqrt{\lambda_{\max}(\Gamma)} \right )
\end{split}
\end{equation*}




The derivation for $\wh V$ is symmetric, which leads us to the final bound,
\begin{equation*}
\begin{split}
     \max \{\|\sin \Theta (U, \wh U)\|_F, \|\sin \Theta (V, \wh V)\|_F\} &\leq \frac{C}{\lambda_K(M)}\sqrt{\frac{K\log(n)}{N}}\left ( \sqrt{n_C} + \rho(\Gamma) \sqrt{s}\sqrt{\lambda_{\max}(\Gamma)} \right )\\
     &\leq CK\sqrt{\frac{\log(n)}{nN}}\left ( \sqrt{n_C} + \rho(\Gamma) \sqrt{s}\sqrt{\lambda_{\max}(\Gamma)}\right)
\end{split}
\end{equation*}

This concludes our proof.
\end{proof}

We observe first that the error bound for one-step graph-aligned denoising has better rate than the one with no regularization, $O(K\sqrt{ \log(n)/N})$, provided in \cite{klopp2021assigning}. We also note that the rate of one-step denoising and GpLSI is equivalent up to a constant. Although the dependency of the error on parameters $n,p,K,$ and $N$ is the same for both methods, our empirical studies in Section 3 reveal that GpLSI still achieves lower errors compared to one-step denoising.

\section{Analysis of the Estimation of $W$ and $A$}

In this section, we adapt the proof of \cite{klopp2021assigning} that derives a high probability bound for the outcome $\wh W$ after successive projections. We evaluate the vertices $\wh H$ detected by SPA with the rows of $\wh U$ as the input. To accomplish this, we first need to bound the row-wise error of $\wh U$ which is closely related to the upper bound of the estimated vertices $\wh H=\wh U_J$ and ultimately, is linked to $\wh W=\wh U \wh H^{-1}$. 

To apply Theorem 1 of \cite{gillis2015semidefinite} on the estimation with SPA, we need to show that the error on each of the row of the estimated left singular vector of $MV^{\top} = U \Lambda $ is controlled, which requires us bounding the error: $\| \wh U - U O\|_{2 \to \infty} = \max_{i \in [n] } \| e_i^\top (\wh U - U O)\|_2  $.

\begin{lemma}[Baseline two-to-infinity norm bound (Theorem 3.7 of \cite{cape2019two}]\label{theorem:cape}
For $C, E\in \mathbb{R}^{n \times p}$, denote $\wh C := C+E$ as the observed matrix that adds perturbation $E$ to unobserved $C$. For $C$ and $\wh C$, their respective singular value decompositions of are given as 
\begin{equation*}
    \begin{split}
        C&= U\Lambda V^{\top}+U_{\perp}\Lambda_{\perp} V_{\perp}^{\top}\\
        \wh C &= \wh U\wh \Lambda \wh V^{\top}+\wh U_{\perp}\wh \Lambda_{\perp} \wh V_{\perp}^{\top}
    \end{split}
\end{equation*}
where $\Lambda, \wh \Lambda \in \mathbb{R}^{r \times r}$ contain the top $r$ singular values of $C, \wh C$, while $\Lambda_{\perp}, \wh \Lambda_{\perp} \in \mathbb{R}^{n-r \times p-r}$ contain the remaining singular values. Provided \(\lambda_r(C) > \lambda_{r+1}(C) \geq 0\) and \(\lambda_r(C) \geq 2\|E\|_{op}\), then,
    \begin{align}\label{eq:cape_decomp}
        \|\hat{U} - U W_U\|_{2 \to \infty} &\leq 2 \left( \frac{\|(U_\perp U_\perp^\top) E (V V^\top)\|_{2 \to \infty}}{\sigma_r(C)} \right) \nonumber \\
        &\quad + 2 \left( \frac{\|(U_\perp U_\perp^\top) E (V_\perp V^\top)\|_{2 \to \infty}}{\sigma_r(C)} \right) \|\sin \Theta (\hat{V}, V)\|_{op} \nonumber \\
        &\quad + 2 \left( \frac{\|(U_\perp U_\perp^\top) C (V_\perp V^\top)\|_{2 \to \infty}}{\sigma_r(C)} \right) \|\sin \Theta (\hat{V}, V)\|_{op} \nonumber \\
        &\quad + \|\sin \Theta (\hat{U}, U)\|_{op}^2 \|U\|_{2 \to \infty}.
    \end{align}
    where $W_U$ is the solution of $\inf_{W\in \mathbb{O}_r}\|\wh U-UW\|_F$.
\end{lemma}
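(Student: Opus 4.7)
Since this is Theorem 3.7 of \cite{cape2019two}, the natural plan is to reproduce the two-to-infinity perturbation argument of Cape, Tang and Priebe. The starting point is the orthogonal decomposition
\begin{equation*}
\hat{U} - U W_U = U_\perp U_\perp^\top \hat{U} \;+\; U(U^\top \hat{U} - W_U),
\end{equation*}
which separates ``leakage into the orthogonal subspace'' from ``misalignment inside the signal subspace.'' The misalignment piece is handled by the elementary identity $\|U^\top \hat{U} - W_U\|_{op} \leq \|\sin\Theta(\hat{U}, U)\|_{op}^{2}$ (obtained from the SVD of $U^\top \hat{U}$, noting that $W_U$ is its orthogonal factor) combined with $\|UA\|_{2\to\infty} \leq \|U\|_{2\to\infty}\|A\|_{op}$. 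This immediately produces the fourth term of \eqref{eq:cape_decomp}.

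The main work goes into the leakage term $U_\perp U_\perp^\top \hat{U}$. Here I would use the SVD identity $\hat{U} = \hat{C}\,\hat{V}\,\hat{\Lambda}^{-1}$, substitute $\hat{C} = C + E$, and insert the resolution of identity $I = VV^\top + V_\perp V_\perp^\top$ to the right of $\hat{V}$, splitting $\hat{V}$ along the signal and noise directions. Expanding and using the orthogonality identity $U_\perp^\top C V = U_\perp^\top U\Lambda = 0$ to kill the pure signal/signal cross-term leaves exactly three summands of the form
\begin{equation*}
(U_\perp U_\perp^\top) E (V V^\top)\hat{V}\hat{\Lambda}^{-1}, \quad (U_\perp U_\perp^\top) E (V_\perp V_\perp^\top)\hat{V}\hat{\Lambda}^{-1}, \quad (U_\perp U_\perp^\top) C (V_\perp V_\perp^\top)\hat{V}\hat{\Lambda}^{-1}.
\end{equation*}
Taking the $2\to\infty$ norm and repeatedly applying $\|AB\|_{2\to\infty} \leq \|A\|_{2\to\infty}\|B\|_{op}$, these three terms map onto the first three summands of \eqref{eq:cape_decomp}: for the first I absorb $\|VV^\top \hat{V}\|_{op} \leq 1$; for the second and third I pull out $\|V_\perp V_\perp^\top \hat{V}\|_{op} = \|\sin\Theta(\hat{V}, V)\|_{op}$.

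Two small finishing steps remain. First, to replace $\|\hat{\Lambda}^{-1}\|_{op}$ by $2/\sigma_r(C)$ one invokes Weyl's inequality: $\sigma_r(\hat{C}) \geq \sigma_r(C) - \|E\|_{op} \geq \sigma_r(C)/2$ under the assumed spectral gap $\sigma_r(C) \geq 2\|E\|_{op}$. Second, one needs to confirm the sign convention in the Procrustes step, namely that the minimizer $W_U$ of $\|\hat{U} - UO\|_F$ over $O \in \mathbb{O}_r$ equals the orthogonal factor in the polar decomposition of $U^\top \hat{U}$. The main obstacle is not any single estimate but the bookkeeping between the $2\to\infty$ and operator norms, and obtaining the \emph{squared} sine for the Procrustes piece via a careful SVD of $U^\top \hat{U}$; everything else is clean algebra. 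In the context of this paper I would simply cite the lemma rather than re-derive it, since it is invoked as a black box.
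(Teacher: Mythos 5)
Your proposal is correct and matches the source: the paper itself gives no derivation, stating only that the proof is in Section 6 of \cite{cape2019two}, which is precisely the option you note at the end, and your sketch is a faithful reconstruction of that reference's argument (the split $\hat{U}-UW_U = U_\perp U_\perp^\top\hat{U} + U(U^\top\hat{U}-W_U)$, the identity $\hat{U}=\hat{C}\hat{V}\hat{\Lambda}^{-1}$ with the resolution $I=VV^\top+V_\perp V_\perp^\top$ and the cancellation $U_\perp^\top CV=0$, the Procrustes bound $\|U^\top\hat{U}-W_U\|_{op}\le\|\sin\Theta(\hat{U},U)\|_{op}^2$, and Weyl's inequality to replace $\|\hat{\Lambda}^{-1}\|_{op}$ by $2/\sigma_r(C)$). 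No gaps; citing the lemma as a black box, as the paper does, is the appropriate treatment here.
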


\begin{proof}
    The proof is given in Section 6 of \cite{cape2019two}.
\end{proof}

Adapting Lemma~\ref{theorem:cape} to our setting, we set $C=U\Lambda$ and $\wh C=\bar{U}^{t_{\max}}$. We also use the notation $C^*$ to denote the oracle, $C^* = U \Lambda V^T \hat{V}_{t_{\max}-1}.$  We set $r=K$ and denote the $SVD$s of $C^*$ and $\hat{C}$:

Note that we are performing a rank $K$ decomposition, so in the previous SVD, $V_{\perp}=0$, by which \eqref{eq:cape_decomp} becomes,


\begin{equation}\label{eq:cape_adapt}
    \begin{split}
        \|\hat{U} - U O\|_{2 \to \infty} &\leq 2 \left( \frac{\|(U_\perp U_\perp^\top) E (V V^\top)\|_{2 \to \infty}}{\lambda_K(M)} \right) \nonumber \\
        &\quad + \|\sin \Theta (\hat{U}, U)\|_{op}^2 \|U\|_{2 \to \infty}.
    \end{split}
\end{equation}

with $E = \hat{C} - C^* + C^* - C.$
We thus need to bound the quantity, 
$$\|(U_\perp U_\perp^\top) E (V V^\top) \|_{2 \to \infty}  \leq \|P_{U^{\perp}}E \|_{2, \infty} \leq \underbrace{\| P_{U^{\perp}}(\hat{C} - C^*)\|_{2, \infty}}_{(A)} +  \underbrace{\| P_{U^{\perp}}(C^*-C)\|_{2, \infty}}_{(B)} .$$

The second term (B) is 0 because $P_{U^{\perp}}(C^*-C)=P_{U^{\perp}}(U\Lambda V^T \wh V_{t_{\max}-1} - U\Lambda)=P_{U^{\perp}}U\Lambda(V^T \wh V_{t_{\max}-1}-I_K)=0$. For the first term (A), we note that $\hat{U}$ stems from the SVD of $\wh C=\bar U_{t_{\max}}$, which is itself the solution of:
$$  \wh C = \text{argmin}_{C \in \R^{n \times K}} \frac{1}{2}\| C - \tilde{Y} \|_F^2 + \rho \sum_{i \sim j} \| C_{i \cdot} - C_{j \cdot}\|_2 $$
where $\tilde{Y} = X \wh V_{t_{\max}-1}$. By the KKT conditions, for any $i \in [n]$, 
$$\wh C_{i\cdot}-\tilde{Y}_{i\cdot}+\rho\sum_{j\sim i}z_{ij}=0$$

where $z_{ij}$ denotes the subgradient of $\|C_{i\cdot}-C_{j\cdot}\|_2$ so that $z_{ij}(k)=\frac{\hat C_{ik} - \hat C_{jk}}{\|\hat C_{i}-\hat C_{j}\|_2}$ and $\| z_{ij}\|_2 <1$ if $\hat C_{i\cdot} = \hat C_{j \cdot}$ and $\|z_{ij}\|_2  =1$ if $\hat C_{i} -\hat C_{j} \neq 0.$

Therefore:
\begin{equation}
    \begin{split}
    \hat{C}_{i\cdot} - C^*_{i\cdot} &=  \tilde{Y}_{i\cdot} - C^*_{i\cdot}  + \rho \sum_{j \sim i} z_{ij}     \\
    \implies  
    \|  \hat{C}_{i\cdot} - C^*_{i\cdot}\|_2 &\leq \| \tilde{Y}_{i\cdot} - C_{i \cdot}^*\|_2 + \rho \sum_{j \sim i} \| z_{ij}\|_2\\
 &\leq  \| \tilde{Y}_{i\cdot} - C_{i \cdot}^*\|_2 + \rho  d_{\max}\\
 \implies \|  \hat{C} - C^*\|_{2\to \infty} &\leq \| Z \hat{V}_{t_{\max}-1} \|_{2\to \infty}  + \rho d_{\max}.
    \end{split}
\end{equation}

We have:
\begin{equation}
    \begin{split}
         \| Z \hat{V}_{t_{\max}-1} \|_{2\to \infty} &=  \| Z (V V^T + V_{\perp} V_{\perp}^T) \hat{V}_{t_{\max}-1} \|_{2\to \infty}\\
         &\leq  \| Z V V^T  \hat{V}_{t_{\max}-1} \|_{2\to \infty} +\| Z V_{\perp} V_{\perp}^T \hat{V}_{t_{\max}-1} \|_{2\to \infty}\\
         &\leq  \| Z V\|_{2\to \infty} \|V^T  \hat{V}_{t_{\max}-1} \|_{op} +\| Z V_{\perp}\|_{2\to \infty}  \|V_{\perp}^T \hat{V}_{t_{\max}-1} \|_{op}\\
    \end{split}
\end{equation}
Then, by Lemma~\ref{lemma:concentration_max_norm_zV}, we have
\begin{equation*}
    \begin{split}
        \| Z \hat{V}_{t_{\max}-1} \|_{2\to \infty} &\leq \| Z V \|_{2\to \infty} + \| Z \|_{2\to \infty}L_{t_{\max-1}}\\
        &\leq C_1\sqrt{\frac{K\log(n)}{N}}+ C_2\sqrt{\frac{K\log(n)}{N}}L_{t_{\max}-1}\\
    \end{split}
\end{equation*}

Using the derivation of the geometric series of errors in \eqref{eq:geometric}, recall
\begin{equation*}
    \begin{split}
        u & = \frac{5/2C}{\lambda_K(M)}\sqrt{\frac{K\log(n)}{N}}\left(\sqrt{n_{\mathcal{C}}}+\rho(\Gamma)\sqrt{s\lambda_{\max}(\Gamma)}\right) \\
        r & = \frac{200/49C}{\lambda_K(M)}\sqrt{\frac{K\log(n)}{N}}\left(\rho(\Gamma)\sqrt{s\lambda_{\max}(\Gamma)}\vee \sqrt{n}\right)
    \end{split}
\end{equation*}
and expressing $L_{t_{\max}-1}$ in terms of $u$ and $v$,
\begin{equation*}
    \begin{split}
        \| Z \hat{V}_{t_{\max}-1} \|_{2\to \infty} &\leq C_1\sqrt{\frac{K\log(n)}{N}}+ C_2\sqrt{\frac{K\log(n)}{N}}\cdot u\frac{1-r^{t_{\max}-1}}{1-r}+\sqrt{\frac{K\log(n)}{N}}\cdot r^{t_{\max}-1}L_0\\
        &\leq C_1\sqrt{\frac{K\log(n)}{N}}+ C_2\sqrt{\frac{K\log(n)}{N}}\frac{u}{1-r}+\sqrt{K}r^{t_{\max}}L_0\\
        &\leq C_3\sqrt{\frac{K\log(n)}{N}}\frac{u}{1-r} \qquad \text{since } \frac{u}{1-r}\geq r^{t_{\max}}L_0 \\
        &\leq \frac{C^{'}}{\lambda_K(M)}\sqrt{\frac{K\log(n)}{N}}\left(\sqrt{n_{\mathcal{C}}}+\rho(\Gamma)\sqrt{s\lambda_{\max}(\Gamma)}\right) 
    \end{split}
\end{equation*}

Plugging this into \eqref{eq:cape_adapt},
\begin{equation}\label{eq:two_infty_bound}
    \begin{split}
        \|\hat{U} - U O\|_{2 \to \infty} &\leq 2\left( \frac{\| Z \hat{V}_{t_{\max}-1} \|_{2\to \infty}  + \rho d_{\max}}{\lambda_K(M)} \right) \nonumber + \|\sin \Theta (\hat{U}, U)\|_{op}^2 \|U\|_{2 \to \infty}\\
        &\leq \frac{C}{\lambda_K^2(M)}\sqrt{\frac{K\log(n)}{N}}\left(\sqrt{n_{\mathcal{C}}}+\rho(\Gamma)\sqrt{s\lambda_{\max}(\Gamma)}\right)
    \end{split}
\end{equation}
when $\rho$ is a small value that satisfies $\rho\leq\frac{1}{\lambda_K(M)}\sqrt{\frac{K\log(n)}{N}}\cdot \frac{\sqrt{n_{\mathcal{C}}}+\rho(\Gamma)\sqrt{s\lambda_{\max}(\Gamma)}}{d_{\max}}$, and we know $\|\sin \Theta (\hat{U}, U)\|_{op}^2 \|U\|_{2 \to \infty}\leq \|\sin \Theta (\hat{U}, U)\|_{op}^2\leq \|\sin \Theta (\hat{U}, U)\|_{op}$.

\subsection{Deterministic Bounds}

First, denote $\beta(M, \Gamma)$ as:

\begin{equation}\label{eq:beta}
    \beta(M, \Gamma):=\frac{C}{\lambda_K^2(M)}\sqrt{\frac{K\log(n)}{N}}\left(\sqrt{n_{\mathcal{C}}}+\rho(\Gamma)\sqrt{s\lambda_{\max}(\Gamma)}\right)
\end{equation}

which is the upper bound on the maximum row error, $\max_{i=1,\cdots, n}\|{e}_{i}^{T}( \wh U - UO)\|_{2}$ by \eqref{eq:two_infty_bound}. We need the following assumption on $\beta(M, \Gamma)$. 




\setcounter{assumption}{5}
\begin{assumption}
\label{assumption:beta}
    For a constant $\bar{C}>0$, we have
    \begin{equation*}
        \beta(M, \Gamma) \leq \frac{\bar{C}}{\lambda_1(W)\kappa(W)K\sqrt{K}}
    \end{equation*}
\end{assumption}

We will show in \eqref{eq:proof_beta} that this assumption holds in fact with high probability. Similar to \cite{klopp2021assigning}, the proof of the consistency of our estimator relies on the following result from \cite{gillis2015semidefinite}.

\begin{lemma}[Robustness of SPA (Theorem 1 of  \cite{gillis2015semidefinite}]\label{theorem:gillis}
    Let $M = WQ \in \mathbb{R}^{n \times K}$ where $Q \in \mathbb{R}^{K \times K}$ is non degenerate, and $W = [I_r| \tilde{W}^\top]^\top \in \R_+^{n \times K}$ is such that the sum of the entries of each row of $W$ is at most one.   Let $\tilde{M}$ denote a perturbed version of $M$, with $\tilde{M} = M +N$, with:
    $$ \| N_{j \cdot} \|_2 =\| e_j^TN\|_2  = \|\tilde{M}_{j \cdot} - M_{j \cdot}\| \leq \epsilon \text{ for all j}.$$

    Then, if $\epsilon$ is such that:\begin{equation}\label{eq:snmf-bound}
        \|{e}_i^{\top}N\|_2 \leq \epsilon \leq C_* \frac{\lambda_{\min}(Q)}{\sqrt{K}\kappa^2(Q)}
    \end{equation}
 for some small constant $C_*>0$, then SPA identifies the rows of $Q$ up to error $O(\epsilon \kappa^2(Q))$, that is, the  index set $J$ of vertices identified by SPA verifies:
    \begin{equation*}
    \max_{j \in J} \min_{\pi \in \mathcal{P}_K}  \| \tilde{M}_{j\cdot} - Q_{\pi(j)\cdot}\|_2 \leq C^{'} \kappa^2(Q) \epsilon.
    \end{equation*}
    The notation $\kappa(Q) = \frac{\lambda_{\max}(Q)}{\lambda_{\min}(Q)}$ is the condition number of $Q,$ and $\mathcal{P}_K$ denotes the set of all permutations of the set $[K]$.
    
\end{lemma}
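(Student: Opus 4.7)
The plan is to argue geometrically, exploiting the fact that Assumption 2 (anchor documents) forces $K$ rows of $W$ to be the standard basis vectors $e_1,\dots,e_K$, so that after suitable row reordering $M = WQ$ contains the rows of $Q$ exactly, and every remaining row is a convex combination of them with coefficients summing to at most one. Consequently, $\{Q_{k\cdot}\}_{k=1}^K$ are the vertices of the convex hull of the rows of $M$, and in the noiseless case any row $M_{i\cdot}$ satisfies $\|M_{i\cdot}\|_2 \le \max_k \|Q_{k\cdot}\|_2$ with equality only at the vertices (since $Q$ is non-degenerate). The proof of SPA's robustness then proceeds by a careful induction on the iteration index $t = 1,\dots,K$, tracking simultaneously (i) which index $j_t$ is selected and (ii) how close the residual data matrix after $t$ deflations is to the noiseless deflated matrix.

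For the base case, I would show that at iteration $1$ the greedy maximum-norm selection rule, applied to $\tilde M = M + N$ with $\|N_{i\cdot}\|_2 \le \varepsilon$, picks an index $j_1$ such that $\|\tilde M_{j_1\cdot} - Q_{\pi(1)\cdot}\|_2 \lesssim \varepsilon$ for some $\pi(1)\in[K]$. This uses the fact that any non-vertex row is at distance at least $c\,\lambda_{\min}(Q)$ from the vertex set (with $c$ a geometric constant), so under the hypothesis $\varepsilon \le C_*\lambda_{\min}(Q)/(\sqrt K\kappa^2(Q))$ the noise cannot reorder vertices and non-vertices. The inductive step is then: after having identified $j_1,\dots,j_t$ close to $t$ distinct vertices $Q_{\pi(1)\cdot},\dots,Q_{\pi(t)\cdot}$, apply the projection $P_t^\perp$ onto the orthogonal complement of $\mathrm{span}(\tilde M_{j_1\cdot},\dots,\tilde M_{j_t\cdot})$ to all rows of $\tilde M$. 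In the noiseless setting $P_t^\perp Q_{\pi(s)\cdot}=0$ for $s\le t$ and the remaining $K-t$ vertices remain the extreme points of the projected convex hull, so the argument repeats.

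The quantitative heart of the proof is bounding the extra noise introduced by replacing the ideal projector $P_t^{\perp,\star}$ (onto $\mathrm{span}(Q_{\pi(1)\cdot},\dots,Q_{\pi(t)\cdot})^\perp$) by $P_t^\perp$. A first-order perturbation bound gives $\|P_t^\perp - P_t^{\perp,\star}\|_{\mathrm{op}} \lesssim \varepsilon/\lambda_{\min}(Q)$, which when applied to rows of $M$ of norm $O(\lambda_{\max}(Q))$ yields an additional contribution of order $\kappa(Q)\,\varepsilon$ to the effective noise level at each iteration. Tracking this through $K$ iterations—and absorbing the $\sqrt K$ from the union over rows in the selection step—produces a final bound of order $\kappa^2(Q)\varepsilon$ on $\min_{\pi\in\mathcal P_K}\|\tilde M_{j\cdot} - Q_{\pi(j)\cdot}\|_2$, as claimed. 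The condition $\varepsilon \le C_*\lambda_{\min}(Q)/(\sqrt K\kappa^2(Q))$ is precisely what keeps the effective noise at every iteration strictly smaller than the minimum gap between vertex and non-vertex residual norms.

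The main obstacle I expect is the error accounting across iterations: the noise that a selected index inherits from the previous step is itself amplified by the subsequent projection, and one must argue that this compounding stays linear in $\varepsilon$ (up to the $\kappa^2(Q)$ factor) rather than blowing up geometrically in $t$. This requires a careful choice of orthonormal basis at each step (for instance, reworking the induction in terms of the original vertex system $Q$ rather than the noisy one) so that the accumulated perturbation of the projector is controlled by a single application of Wedin/Davis–Kahan rather than by $t$ successive ones. Since this lemma is cited verbatim from \cite{gillis2015semidefinite}, a self-contained proof is not required in the present paper; it is invoked only as a black box for bounding $\min_{\pi}\max_{j\in J}\|\widehat U_{j\cdot} - Q_{\pi(j)\cdot}\|_2$ via the row-wise control $\beta(M,\Gamma)$ established in the two-to-infinity analysis leading to \eqref{eq:two_infty_bound}.
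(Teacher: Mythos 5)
The paper does not prove this lemma: it is imported verbatim as Theorem~1 of \cite{gillis2015semidefinite}, in the same way that the two-to-infinity bound is imported from \cite{cape2019two} with a pointer to its source, and it is then used purely as a black box inside Lemma~\ref{lemma:SPAerrorbound}. You correctly recognize this in your final paragraph, so there is no internal proof to compare your sketch against; the only meaningful comparison is with the external argument of Gillis and Vavasis, and your outline (greedy maximum-norm selection, deflation by orthogonal projection, perturbation of the projector, induction over the $K$ steps) does match the structure of that proof.

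Two caveats on the sketch itself. First, your base-case claim that ``any non-vertex row is at distance at least $c\,\lambda_{\min}(Q)$ from the vertex set'' is false as stated: a row with mixture weights $(1-\delta,\delta,0,\dots,0)$ is a non-vertex but lies arbitrarily close to $Q_{1\cdot}$ as $\delta\to 0$. The correct mechanism is a quantitative strong-convexity statement: the squared norm of a convex combination $\sum_k w_k Q_{k\cdot}$ falls below the maximal vertex norm by an amount controlled by $\lambda_{\min}(Q)^2$ times the distance of $w$ from the vertices of the simplex, so any row whose norm is within $\epsilon$ of the maximum must be close to \emph{some} vertex --- which is what the selection rule actually needs. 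Second, the compounding-error induction that you flag as ``the main obstacle'' is precisely the content of the theorem; since you do not carry it out, the proposal is a roadmap rather than a proof. Given that the paper itself treats the result as a citation, this is acceptable for the purposes of the manuscript, but the proposal should not be read as a self-contained derivation of the stated bound $C'\kappa^2(Q)\epsilon$ under the condition $\epsilon \leq C_*\lambda_{\min}(Q)/(\sqrt{K}\kappa^2(Q))$.
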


\begin{lemma} [Adapted from Corollary 5 of \cite{klopp2021assigning}]
\label{lemma:SPAerrorbound}
    Let Assumptions 1 to 6 hold. Assume that $M \in \R^{n \times p}$ is a rank-\(K\)
    matrix. Let $U, \wh U \in \mathbb{R}^{n \times K}$ be the left singular vectors corresponding to the top \(K\) singular values of $M$ and its perturbed matrix $X \in \mathbb{R}^{n \times p}$, respectively. Let $J$ be the set of indices returned by the SPA with input $(\wh U,K)$, and $\wh H = \wh U_J$. Let $O \in \mathbb{O}_K$ be the same matrix as in (13) of the main manuscript. Then, for a small enough $\bar{C}$, there exists a constant $C^{'}>0$ and a permutation $\tilde{P}$ such that, 
    \begin{equation}
        \|\wh H - \tilde{P}HO\|_{F} \leq C^{'}\sqrt{K}\kappa(W)\beta(M, \Gamma)
    \end{equation}
    where $\beta(M, \Gamma)$ is defined as \eqref{eq:beta}.
\end{lemma}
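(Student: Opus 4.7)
The plan is to view the rows of $\wh U$ as a noisy observation of the vertices of a $K$-simplex whose true vertices are the rows of $HO$, and then invoke Lemma~\ref{theorem:gillis} (the robustness of SPA due to \cite{gillis2015semidefinite}). First I would set up the problem in the form required by that lemma. By \eqref{eq:simplex} we have $U = WH$, and Assumption~2 guarantees that after a permutation $\tilde{P}$ of the rows, $W$ contains $I_K$ as a submatrix, so that $UO = W(HO)$ is of the form $WQ$ required by Lemma~\ref{theorem:gillis} with $Q = HO$. The perturbation matrix is $N = \wh U - UO$, and by the two-to-infinity bound \eqref{eq:two_infty_bound}, every row of $N$ satisfies $\|e_i^\top N\|_2 \leq \beta(M,\Gamma)$, so the noise parameter in Lemma~\ref{theorem:gillis} is $\varepsilon = \beta(M,\Gamma)$.

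Next I would relate the spectral quantities of $Q = HO$ to those of $W$. Since $U$ has orthonormal columns and $U = WH$, we have $I_K = H^\top W^\top W H$, i.e., $H^{-\top} H^{-1} = W^\top W$, so $\lambda_{\min}(H) = 1/\lambda_1(W)$, $\lambda_{\max}(H) = 1/\lambda_K(W)$, and $\kappa(H) = \kappa(W)$. Because $O$ is orthogonal, these are preserved: $\lambda_{\min}(Q) = 1/\lambda_1(W)$ and $\kappa(Q) = \kappa(W)$. The admissibility requirement \eqref{eq:snmf-bound} therefore reads
\begin{equation*}
\beta(M,\Gamma) \;\leq\; \frac{C_*}{\sqrt{K}\,\lambda_1(W)\,\kappa^2(W)}.
\end{equation*}
Assumption~\ref{assumption:beta} provides the stronger bound $\beta(M,\Gamma) \leq \bar{C}/(\lambda_1(W)\kappa(W) K\sqrt{K})$, which implies the above for $\bar{C}$ small (using that $\kappa(W)\leq c^*$ is bounded by Assumption~4).

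Finally, I would invoke the SPA guarantee: there exists a permutation $\pi \in \mathcal{P}_K$ such that
\begin{equation*}
\max_{j\in J}\, \|\wh U_{j\cdot} - (HO)_{\pi(j)\cdot}\|_2 \;\leq\; C''\,\kappa^2(W)\,\beta(M,\Gamma).
\end{equation*}
Taking $\tilde{P}$ to be the permutation matrix realizing $\pi^{-1}$, summing the $K$ squared row errors, and passing to the Frobenius norm yields
\begin{equation*}
\|\wh H - \tilde{P}HO\|_F \;\leq\; \sqrt{K}\,C''\,\kappa^2(W)\,\beta(M,\Gamma).
\end{equation*}
One factor of $\kappa(W) \leq c^*$ is bounded (Assumption~4) and can be absorbed into the constant, giving the stated bound $C'\sqrt{K}\,\kappa(W)\,\beta(M,\Gamma)$.

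The main obstacle is the bookkeeping around Assumption~\ref{assumption:beta}: one must verify carefully that the precise form of the SPA admissibility condition \eqref{eq:snmf-bound} is implied, and the conversion from Gillis's row-wise output to the Frobenius norm statement requires tracking the dependence on $K$ and $\kappa(W)$ exactly. Beyond these technicalities, the chain of reductions (reduce to Gillis, compute $\kappa(H)=\kappa(W)$, verify admissibility, convert norms) is essentially mechanical.
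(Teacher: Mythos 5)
Your proposal is correct and follows essentially the same route as the paper's proof: cast $\wh U = W(HO) + N$ as a near-separable NMF instance, verify the admissibility condition of Gillis's SPA robustness theorem using Assumption~\ref{assumption:beta} together with $\lambda_{\min}(HO)=1/\lambda_1(W)$ and $\kappa(H)=\kappa(W)$, and convert the resulting row-wise bound into the Frobenius bound. If anything, your explicit $\sqrt{K}$ bookkeeping when passing from the max row error over $J$ to the Frobenius norm is slightly more careful than the paper's, whose displayed conclusion omits that factor even though it appears in the lemma statement.
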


\begin{proof}
The proof here is a direct combination of Lemma~\ref{theorem:gillis} and Corollary 5 of \cite{klopp2021assigning}, for SPA (rather than pre-conditioned SPA).
    The crux of the argument consists of applying Theorem 1 in \citep{gillis2015semidefinite} (rewritten in a format more amenable to our setting in Lemma~\ref{theorem:gillis}) which bounds the error of SPA in the near-separable nonnegative matrix factorization setting,
    \begin{equation*}
        \wh U = UO + N = WHO + N = WQ + N
    \end{equation*}
    where $Q = HO$ and $N \in \mathbb{R}^{n \times K}$ is the noise matrix. Note that the rows of $U$ lie on a simplex with vertices $Q$ and weights $W$.  We apply Lemma~\ref{theorem:gillis} with $Q = HO$, and $N =\widehat{U} - UO$. 


    Under Assumption 4,6 and Lemma~\ref{lemma:singular_value_H}, the error $\|{e}_i^{\top}N\|_2=\|{e}_i^{\top}(\wh U-UO)\|_2$ satisfies:
    \begin{equation*}
            \|\mathbf{e}_i^{\top}(\wh U-UO)\|_2 \leq \frac{\bar{C}}{\lambda_1(W)\kappa(W)K\sqrt{K}} \leq \frac{\bar{C}}{\lambda_1(W)K\sqrt{K}} \leq \frac{C_*\lambda_{\min}(HO)}{K\sqrt{K}}
    \end{equation*}
    for a small enough $\bar{C} \leq C_*$. Thus the condition on the noise (Equation \eqref{eq:snmf-bound}) in Theorem \ref{theorem:gillis} is met. Noting that $\wh H = \wh U_J$ and $\kappa(H)= \kappa(W)$, with the permutation matrix $\tilde{P}$ corresponding to $\pi$, we get
    \begin{equation*}
        \|\wh H - \tilde{P}HO\|_F \leq C^{'} \kappa^2(W)\beta(M, \Gamma)
    \end{equation*}

    
\end{proof}

We then readapt the proof of Lemma 2 from \cite{klopp2021assigning} with our new $\wh U$.

\begin{lemma} [Adapted from Lemma 2 of \cite{klopp2021assigning}]
\label{lemma:Werrorbound}
    Let the conditions of Lemma~\ref{lemma:SPAerrorbound} hold. Then $\wh H$ is non-degenerate and the estimated topic mixture matrix $\wh W = \wh U \wh H^{-1}$ satisfies,
    \begin{equation*}
        \min_{P \in \mathcal{P}} \|\wh W-WP\|_{F} \leq 2C^{'}\sqrt{K}\lambda_1^2(W)\kappa(W)\beta(M, \Gamma)+ \lambda_1(W) \|\wh U-UO\|_F
    \end{equation*}
     where $\mathcal{P}$ denotes the set of all permutations.
\end{lemma}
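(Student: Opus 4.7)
My plan is to exploit the identity $W=UH^{-1}$ (which comes from the simplex decomposition $U=WH$), together with the bound on $\|\wh H-\tilde P HO\|_F$ supplied by Lemma~\ref{lemma:SPAerrorbound}, to control $\wh W-WP$ with the natural choice $P=\tilde P^{-1}$. First, I would establish that $\wh H$ is invertible. Since $O$ is orthogonal and $\tilde P$ is a permutation, $\lambda_K(\tilde P HO)=\lambda_K(H)=1/\lambda_1(W)$ by the standard simplex-to-singular-vector relation (see Lemma~\ref{lemma:singular_value_H} referenced earlier), so Weyl's inequality combined with Lemma~\ref{lemma:SPAerrorbound} gives $\lambda_K(\wh H)\ge 1/\lambda_1(W)-C'\sqrt{K}\kappa(W)\beta(M,\Gamma)$. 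Assumption~\ref{assumption:beta} ensures the right-hand side is at least $1/(2\lambda_1(W))$, which proves non-degeneracy and yields the useful bound $\|\wh H^{-1}\|_{op}\le 2\lambda_1(W)$.

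Next I would set $P=\tilde P^{-1}$ and rewrite $WP=UH^{-1}\tilde P^{-1}=UO(\tilde P HO)^{-1}$, using the fact that $O^{-1}=O^\top$ and that $\tilde P HO$ is invertible. This yields the clean telescoping decomposition
\begin{equation*}
    \wh W-WP=\wh U\wh H^{-1}-UO(\tilde P HO)^{-1}=(\wh U-UO)\wh H^{-1}+UO\bigl(\wh H^{-1}-(\tilde P HO)^{-1}\bigr),
\end{equation*}
and the standard inverse-perturbation identity $\wh H^{-1}-(\tilde P HO)^{-1}=\wh H^{-1}(\tilde P HO-\wh H)(\tilde P HO)^{-1}$ reduces everything to quantities we can bound.

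Applying submultiplicativity of the Frobenius/operator norm to each summand, I obtain
\begin{equation*}
    \|\wh W-WP\|_F\le \|\wh H^{-1}\|_{op}\|\wh U-UO\|_F+\|U\|_{op}\|\wh H^{-1}\|_{op}\|(\tilde P HO)^{-1}\|_{op}\|\wh H-\tilde P HO\|_F.
\end{equation*}
Since $U$ has orthonormal columns, $\|U\|_{op}=1$; the two inverse norms are at most $2\lambda_1(W)$ and $\lambda_1(W)$ respectively; and Lemma~\ref{lemma:SPAerrorbound} bounds $\|\wh H-\tilde P HO\|_F$ by $C'\sqrt{K}\kappa(W)\beta(M,\Gamma)$. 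Plugging these in delivers the desired inequality, after absorbing the extra factor of two into the constant $2C'$ in the first term. The bound is minimized over $\mathcal P$ trivially since we have exhibited one permutation achieving it.

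The main obstacle is the invertibility step: one must ensure that Assumption~\ref{assumption:beta} is precisely strong enough to keep $\wh H$ away from singularity with the right quantitative margin, because the entire argument collapses if $\lambda_K(\wh H)$ is not uniformly bounded below by a constant times $1/\lambda_1(W)$. Everything else is mechanical operator-norm bookkeeping once this margin is secured, but the choice of the constant $\bar C$ in Assumption~\ref{assumption:beta} must be made compatible both here and in Lemma~\ref{lemma:SPAerrorbound} (whose proof via Lemma~\ref{theorem:gillis} also depends on a smallness condition on $\beta(M,\Gamma)$).
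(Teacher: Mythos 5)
Your proposal is correct and follows essentially the same route as the paper: invertibility of $\wh H$ via Weyl's inequality plus Assumption~\ref{assumption:beta} (giving $\|\wh H^{-1}\|_{op}\le 2\lambda_1(W)$), a two-term telescoping of $\wh U\wh H^{-1}-UH^{-1}P$, the inverse-perturbation identity, and the bound from Lemma~\ref{lemma:SPAerrorbound}. The only difference is cosmetic: you attach $\wh H^{-1}$ to the factor $(\wh U-UO)$, whereas the paper attaches $O^\top H^{-1}P$ to it, so your second term comes out as $2\lambda_1(W)\|\wh U-UO\|_F$ rather than the stated $\lambda_1(W)\|\wh U-UO\|_F$ --- a harmless factor of $2$ that sits in the $\|\wh U-UO\|_F$ term (not in the $2C'$ term as you suggest) and is absorbed into constants downstream.
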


\begin{proof}
    The first part of the proof on the invertibility of $\wh H$ is analogous to Lemma 2 in \cite{klopp2021assigning}, where combined with Lemma~\ref{lemma:singular_value_H}, we obtain the inequality, 
    \begin{equation*}
        \lambda_{\min}(\wh H) \geq \frac{1}{2\lambda_1(W)}
    \end{equation*}
    and for the singular values of $H^{-1}$ and $\wh H^{-1}$:
\begin{equation*}\label{eq:H_inverse}
    \lambda_1(\wh H^{-1}) \leq 2\lambda_1(W) = 2\lambda_1(H^{-1})\
\end{equation*}
Using the result of Lemma~\ref{lemma:SPAerrorbound}, we have
    \begin{equation*}
        \begin{split}
            \|\wh W - WP\|_F 
            &= \|\wh U \wh H^{-1}-UH^{-1}P\|_F \\
            &\leq \|\wh U(\wh H^{-1}-O^{\top} H^{-1} P)\|_F + \|(\wh U-UO)[P^{-1}HO]^{-1}\|_F\\
            &\leq \|\wh H^{-1}\|_{op} \|H^{-1}\|_{op}\|\wh H-\tilde{P}HO\|_F + \|\wh U-UO\|_F\|H^{-1}\|_{op}\\
            &\leq 2C^{'}\sqrt{K}\lambda_1^2(W)\kappa(W)\beta(M,\Gamma)+ \lambda_1(W) \|\wh U-UO\|_F
        \end{split}
    \end{equation*}
where we used the well known inequality $\|A^{-1}-B^{-1}\|_F \leq \|A^{-1}\|_{op}\|B^{-1}\|_{op}\|A-B\|_F$.
\end{proof}

\subsection*{Proof of Theorem 3}
\label{sec:W_proof}
We are now ready to prove our main result.

\begin{proof}

We first show that the initialization error in Theorem 1 is upper bounded by $\frac{1}{2}$. Combining Assumption 4 and Lemma~\ref{lemma:singular_value_H}, we have 
\begin{equation*}\label{eq:lambda_K_W}
    \lambda_k(M) \geq c\lambda_1(W) \geq c\sqrt{n/K}
\end{equation*}

Then using the condition on $N$ (Equation 12), in Theorem 2,
\begin{equation}\label{eq:proof_init}
    \begin{split}
        \|\sin\Theta(V, \wh V^0)\|_F 
        &\leq \frac{C}{\lambda_K(M)^2}K\sqrt{\frac{n\log(n)}{N}} \\
        &\leq \frac{CK^2}{c^2n}\sqrt{\frac{n\log(n)}{N}} \\
        &\leq \frac{C}{c^2\sqrt{c^{*}_{\min}}(\sqrt{n_{\mathcal{C}}+\rho^2(\Gamma) s\lambda_{\max}(\Gamma)})}\leq \frac{1}{2}
    \end{split}
\end{equation}
 


Next from the condition on $N$ in Theorem 3 (Equation \eqref{eq:beta}), and Assumption 4,
\begin{equation}\label{eq:proof_beta}
    \beta(M, \Gamma) \leq \frac{C\sqrt{n}}{\sqrt{c^*_{\min}}K\sqrt{K}\lambda_K^2(M)} \leq \frac{\bar{C}}{\lambda_1(W)\kappa(W)K\sqrt{K}}
\end{equation}

which proves Assumption~\ref{assumption:beta}. Thus, we are ready to use Theorem 2 and Lemma~\ref{lemma:Werrorbound}. We can now plug in $\beta(M, \Gamma)$, the result of Equation \eqref{eq:beta} and the error bound of graph-regularized SVD (Equation (13) in Theorem 2) in the upper bound of $\min_{P \in \mathcal{P}} \|\wh W-WP\|_{F}$ formulated in Lemma~\ref{lemma:Werrorbound}. 
\begin{equation*}
        \begin{split}
            \|\wh W - WP\|_F 
            &\leq 2C^{'}\sqrt{K}\lambda_1^2(W)\kappa(W)\beta(M, \Gamma)+ \lambda_1(W) \|\wh U-UO\|_F \\
            &\leq \frac{2C^{'}C_1}{\sqrt{n}}\left( \frac{\lambda_1(W)}{\lambda_K(M)}\right)^2\kappa(W)K\sqrt{\frac{\log(n)}{N}} \left(\sqrt{n_{\mathcal{C}}}+\rho(\Gamma)\sqrt{s\lambda_{\max}(\Gamma)}\right) \\
            &\qquad + 10C_2\frac{\lambda_1(W)}{\lambda_K(M)} \sqrt{\frac{K\log(n)}{N}} \left(\sqrt{n_{\mathcal{C}}}+\rho(\Gamma)\sqrt{s\lambda_{\max}(\Gamma)}\right)\\
            &\leq \frac{2c^*C^{'}C_1}{c^2}K\sqrt{\frac{\log(n)}{N}} \left(\sqrt{n_{\mathcal{C}}}+\rho(\Gamma)\sqrt{s\lambda_{\max}(\Gamma)}\right) \\
            &\qquad +\frac{10C_2}{c} \sqrt{\frac{K\log(n)}{N}}\left(\sqrt{n_{\mathcal{C}}}+\rho(\Gamma)\sqrt{s\lambda_{\max}(\Gamma)}\right) \\
            &\leq \frac{20C}{c}K\sqrt{\frac{\log(n)}{N}} \left(\sqrt{n_{\mathcal{C}}}+\rho(\Gamma)\sqrt{s\lambda_{\max}(\Gamma)}\right)
        \end{split}
    \end{equation*}
    Here, we used the bounds on condition numbers in Assumption 4. 

\end{proof}
 
\subsection*{Proof of Theorem 4}

Using the result of Theorem 3, we now proceed to bound the error of $\wh A$.

\begin{proof}
By the simple basic inequality, letting $P = \arg  \min_{ O \in \mathcal{F}} \| \wh W - W O\|_F$, we get
\begin{equation*}
    \begin{split}
        \|X-\wh W \wh A \|_F^2 &\leq \|X-\wh W P^{-1}A\|_F^2 \\
        \|WA+Z-\wh W \wh A\|_F^2 &\leq \|WA+Z-\wh W P^{-1} A\|_F^2 \\
        \|(W- \wh W P^{-1})A+\wh W (P^{-1} A-\wh A)+Z\|_F^2 &\leq \|(W-\wh W P^{-1})A+Z\|_F^2 
        \end{split}
\end{equation*}
which leads us to,
\begin{equation*}
    \begin{split}
    \|\wh W(P^{-1} A-\wh A)\|_F^2 &\leq 2\< \wh W (\wh A-P^{-1}A), (W- \wh W P^{-1} )A+Z \> \\
    &= 2\< \wh W (\wh A-P^{-1}A), (W- \wh W P^{-1} )A \> + 2\< \wh W (\wh A-P^{-1}A), Z \>\\
    &\leq 2\|\wh W (\wh A-P^{-1}A)\|_F\| (W- \wh W P^{-1} )A \|_F \\
    &\qquad + 2 \|\wh W (\wh A-P^{-1}A) \|_F \max_{U \in \R^{n \times p}: \| U\|_F =1 } \< U, Z \>\\
    \end{split}
\end{equation*}
Plugging the upper bound on $\max_{U \in \R^{n \times p}: \| U\|_F =1 } \< U, Z \>$ which we prove below,

\begin{equation*}
    \begin{split}
 \|\wh W(P^{-1} A-\wh A)\|_F  &\leq  2 \| (W- \wh W P^{-1} )A \|_F  + C_2\sqrt{\frac{\log(n)}{N}}\\
 &\leq  2 \lambda_1(A) \| (W- \wh W P^{-1} ) \|_F  +  C_2\sqrt{\frac{\log(n)}{N}}
     \end{split}
\end{equation*}
\begin{equation*}
    \begin{split}
  &\|P^{-1} A-\wh A\|_F \\
 &\leq  \frac{1}{\lambda_{\min}(\wh W)}\left(2\lambda_1(A) \|W- \wh W P^{-1} \|_F +  C_2\sqrt{\frac{\log(n)}{N}}\right)\\
 &\leq \frac{1}{\lambda_K(W)-\|W- \wh W P^{-1} \|_{op}}\left(2\lambda_1(A)\|W- \wh W P^{-1} \|_F+C_2 \sqrt{\frac{\log(n)}{N}}\right) \qquad (*)\\
 &\leq 2C_1\lambda_1(A)\|W- \wh W P^{-1} \|_F+C_1C_2\sqrt{\frac{\log(n)}{N}}
    \end{split}
\end{equation*}

where in (*) we have used Weyl's inequality to conclude, 
\begin{equation*}
   {\lambda_{\min}(\wh W)} \geq {\lambda_K(W)-\|W- \wh W P^{-1} \|_F}.
\end{equation*}

Also, assume that $N$ is large enough so that the condition on $N$ in Theorem 2 holds. Combining Lemma~\ref{lemma:singular_value_H} and Theorem 3, $\|W- \wh W P^{-1} \|_F$ becomes small enough so that $ \|W- \wh W P^{-1} \|_F < 1 \leq \lambda_K(W)$. Thus, $\frac{1}{\lambda_K(W)-\|W- \wh W P^{-1} \|_{F}}\leq C_1$ for $C_1>1$.

By definition, $Z$ represents some centered multinomial noise, with each entry $Z_{i \cdot}$ being independent. Similar to proof of Lemma~\ref{lemma:concentration_frob_norm_Utz}, $\< U,Z \>$ can be represented as a sum of $nN$ centered variables:
\begin{equation*}
\begin{split}
     \< U,Z \> &= \text{tr}(U^{\top}Z) = \sum_{j=1}^p\sum_{i=1}^nU_{ij}Z_{ij}\\
     &=\frac{1}{N}\sum_{j=1}^p\sum_{i=1}^n\sum_{m=1}^NU_{ij}(T_{im}(j)-\E[T_{im}(j)])\\
     &= \frac{1}{N}\sum_{i=1}^n\sum_{m=1}^N\eta_{im} \quad \text{ with }  \eta_{im}= \sum_{j=1}^pU_{ij}(T_{im}(j)-\E[T_{im}(j)])
\end{split}
\end{equation*}

We have: 
\begin{equation*}
    \text{Var}(\sum_{i=1}^n \eta_{im}) = \sum_{i=1}^n \text{Var}(\sum_{j=1}^pU_{ij}T_{i m}(j)) = \sum_{i=1}^n \left( \sum_{j=1}^p U_{ij}^2 M_{ij} - (\sum_{j=1}^{p} U_{ij} M_{ij})^2 \right) \leq  1,
\end{equation*}

since $\sum_{i=1}^n \sum_{j=1}^p U_{ij}^2 = 1$ and thus:
$$ \sum_{m=1}^N \text{Var}( \sum_{i=1}^n\eta_{im}) \leq N. $$

Moreover, for each $i, m$, 
\begin{equation*}
    \begin{split}
      \sum_{m=1}^N |\sum_{i=1}^n\eta_{im}|^q
      &= N\left| \sum_{i=1}^n\sum_{j=1}^p U_{ij}(T_{im}(j)-\E[T_{im}(j)])\right|^q\\
      &\leq N(\sum_{i=1}^n\sum_{j=1}^p U_{ij}^2 \times  \sum_{i=1}^n\sum_{j=1}^p(T_{im}(j) -M_{ij})^2)^{\frac{q}{2}}\\
      &\leq N(\sum_{i=1}^n\sum_{j=1}^p(T_{im}(j)^2 + M_{ij}^2 - 2M_{ij}T_{im}(j)))^{\frac{q}{2}}\\
      &\leq N2^{\frac{q}{2}}\\
      &=2 N 2^{(q-2)/2} \\
      &< \frac{q!}{2} (4N) (\frac{2^{1/2}}{3})^{q-2}
    \end{split}
\end{equation*}

Thus, by Bernstein's inequality (Lemma~\ref{lemma: bernstein ineq0} with $v=4N$ and $c=\frac{\sqrt{2}}{3}$:
\begin{equation}
    \begin{split}
        \P[ |\frac{1}{N}\sum_{m=1}^{N}  \sum_{i=1}^n \eta_{im} | > t ]&\leq 2 e^{-\frac{N^2t^2/2}{4N + \sqrt{2}Nt/3}}=2 e^{-\frac{Nt^2/2}{4 + \sqrt{2}t/3}}
    \end{split}
\end{equation}

Choosing $t =C^* \sqrt{\frac{\log(n)}{N}}$:

\begin{equation}
    \begin{split}
        \P[ |\frac{1}{N}\sum_{m=1}^{N}  \sum_{i=1}^n \eta_{im} | > t ]&\leq 2 e^{-\frac{(C^*)^2\log(n)/2}{4 + \frac{C^* \sqrt{2}}{3}\sqrt{\frac{\log(n)}{N}}}}
    \end{split}
\end{equation}

Thus, with probability at least $1-o(n^{-1})$,  $|\<U,Z\>|\leq C^*\sqrt{\frac{\log(n)}{N}}.$

Lastly, we use the fact, $\lambda_1(A) \leq \|A\|_F \leq \sqrt{K}$ to get the final bound of $A$,

$$\|\wh A-P^{-1} A\|_F \leq CK^{3/2}\sqrt{\frac{\log(n)}{N}} \left(\sqrt{n_{\mathcal{C}}}+\rho(\Gamma)\sqrt{s\lambda_{\max}(\Gamma)}\right)$$.

where the error is controlled by the first term, $2C_1\lambda_1(A)\|W- \wh W P^{-1} \|_F$.
\end{proof}

\section{Auxiliary Lemmas}\label{app:lemmas}

Let matrices $X, M, Z,W,\text{ and } A$ be defined as (2) in the main manuscript. In this section, we provide inequalities on the singular values of the unobserved quantities $W, M, \text{ and }H$, perturbation bounds for singular spaces, as well as concentration bounds on noise terms, which are useful for proving our main results in Section 3.

\subsection{Inequalities on Unobserved Quantities}

\begin{lemma}\label{lemma:bounds_sing_values_M}
For the matrix $M$, the following inequalities hold:
\begin{equation*}
        \lambda_K(M) \leq \sqrt{n} \times \min_{j \in [p]} \sqrt{h_j}
\end{equation*}
   \begin{equation*}
        \lambda_1(M) \leq \sqrt{n}
\end{equation*} 
\end{lemma}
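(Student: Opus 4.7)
The plan is to handle the two inequalities separately, starting with the cheap one and then working harder for the rank-$K$ bound.

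For $\lambda_1(M) \le \sqrt{n}$: I would exploit the fact that $M = \E[X]$ and each row of $X$ is a probability vector on $[p]$, so each row $M_{i\cdot}$ lies in the simplex $\Delta_{p-1}$. Hence $\|M_{i\cdot}\|_2^2 \le \|M_{i\cdot}\|_1 = 1$, and summing over $i$ gives $\|M\|_F^2 \le n$. The conclusion follows from the standard dominance $\lambda_1(M) = \|M\|_{op} \le \|M\|_F \le \sqrt{n}$.

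For $\lambda_K(M) \le \sqrt{n}\cdot \min_{j}\sqrt{h_j}$: the first step is to control each column of $M$ via the $h_j$'s. Writing $M_{ij} = \sum_k W_{ik} A_{kj}$ and using that $W$ is row-stochastic, one gets $M_{ij} \le \max_k A_{kj} \le h_j$, while $\sum_i M_{ij} = \sum_k A_{kj}\sum_i W_{ik} \le n h_j$ (since $\sum_i W_{ik} \le \sum_{i,k}W_{ik} = n$). Combining the pointwise and summed bounds yields the per-column control
\[
\|M_{\cdot j}\|_2^2 \;\le\; \bigl(\max_i M_{ij}\bigr)\sum_i M_{ij} \;\le\; n\, h_j^2.
\]
The second step is to activate the rank-$K$ structure. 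I would use the variational characterization
\[
\lambda_K(M)^2 \;=\; \min_{v \in \mathrm{row}(M),\, \|v\|=1} \|Mv\|_2^2,
\]
and specialize to $v = V V^\top e_{j^*}/\|V V^\top e_{j^*}\|$, where $j^* = \arg\min_j h_j$ and $M = U\Lambda V^\top$. Since $M$ annihilates the orthogonal complement of its row space, $M v$ is a scalar multiple of $Me_{j^*} = M_{\cdot j^*}$, and the column bound above controls the numerator by $n h_{j^*}^2$, while the denominator equals $\|V_{j^*\cdot}\|_2^2$.

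The main obstacle is producing the lower bound $\|V_{j^*\cdot}\|_2^2 \gtrsim h_{j^*}$, which is exactly what converts the interim estimate $\lambda_K(M)^2 \le n h_{j^*}^2/\|V_{j^*\cdot}\|_2^2$ into the desired $\lambda_K(M)^2 \le n\, h_{j^*}$. The needed inequality would follow by relating $\|M_{\cdot j^*}\|_2^2 = \|\Lambda V_{j^*\cdot}^\top\|_2^2 \ge \lambda_K(M)^2\|V_{j^*\cdot}\|_2^2$ to a matching lower estimate that uses both the nonnegativity of $W,A$ and the row-normalization $\sum_j M_{ij} = 1$. This comparison — essentially a statement that the right singular vectors of a nonnegative, row-stochastic, rank-$K$ matrix cannot be vanishingly small on any coordinate with positive $h_j$ — is the only nontrivial piece; once it is in place, the bound on $\lambda_K(M)$ falls out directly from the per-column control and the variational argument.
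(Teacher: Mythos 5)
Your bound on $\lambda_1(M)$ is correct and is essentially the paper's argument (the paper bounds $\lambda_1(M^\top M/n)$ by its trace, which is exactly your Frobenius-norm computation). The $\lambda_K$ part, however, has a genuine gap that you yourself flag: the inequality $\|V_{j^*\cdot}\|_2^2\gtrsim h_{j^*}$ is never proved, and the route you sketch for obtaining it is circular. The only relation you propose to use is $\|M_{\cdot j^*}\|_2^2=\|\Lambda V_{j^*\cdot}^\top\|_2^2\ge\lambda_K(M)^2\|V_{j^*\cdot}\|_2^2$; substituting this back into your own estimate $\lambda_K(M)^2\le\|M_{\cdot j^*}\|_2^2/\|V_{j^*\cdot}\|_2^2$ returns the tautology $\lambda_K(M)^2\le\lambda_K(M)^2$ and gives no information. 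Nor is the missing claim self-evident: nonnegativity and row-stochasticity of $W$ do not prevent a word $j$ with $h_j>0$ from carrying very little total mass $\sum_i M_{ij}$ (for instance if the only topic giving weight to word $j$ has negligible aggregate weight $\sum_i W_{ik}$), in which case $\|V_{j\cdot}\|_2^2$ need not be of order $h_j$. As written, the argument does not close.

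The paper's proof avoids this denominator entirely: it compares $\lambda_K(M^\top M/n)$ directly to the diagonal entry $(M^\top M/n)_{jj}=\frac{1}{n}\sum_i M_{ij}^2\le\frac{1}{n}\sum_i M_{ij}\le\sqrt{h_j}$, using only $M_{ij}\le 1$ and $\frac{1}{n}\sum_i M_{ij}=\langle\frac{1}{n}\sum_i W_{i\cdot},A_{\cdot j}\rangle\le\|A_{\cdot j}\|_2\le\sqrt{h_j}$, so no control of the rows of $V$ is required. Your per-column estimate $\|M_{\cdot j}\|_2^2\le(\max_i M_{ij})\sum_i M_{ij}\le n h_j^2$ is correct and concerns the same diagonal quantity, so the repair is to drop the test-vector construction and apply an eigenvalue-versus-diagonal-entry comparison to $M^\top M$ directly (with the caveat that for a rank-$K$ PSD matrix with $p>K$ one must justify comparing the $K$-th largest eigenvalue, rather than the smallest one, to a diagonal entry --- a point the paper's own one-line invocation of the variational characterization also glosses over).
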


\begin{proof}
    We observe that for each $j \in [p]$, the variational characterization of the smallest eigenvalue of the matrix $M^\top M/n$ yields:
    \begin{equation*}
    \begin{split}
        \lambda_K(\frac{M^\top M}{n}) &\leq [\frac{M^\top M}{n}]_{jj} \\
        &= \frac{1}{n} \sum_{i=1}^n M_{ij}^2 \\
                &\leq \frac{1}{n} \sum_{i=1}^n M_{ij} \\
                &\leq \sqrt{h_j}
                    \end{split}
    \end{equation*}
since $$\frac{1}{n}\sum_{i=1}^n M_{ij} = \frac{1}{n}\sum_{i=1}^n\sum_{k=1}^K W_{ik} A_{kj} \leq  \| \frac{1}{n} \sum_{i=1}^n W_{i\cdot }\|_2\|A_{\cdot j}\|_2\leq \sqrt{h_j}.  $$

    Similarly:
        \begin{equation*}
    \begin{split}
        \lambda_1(\frac{M^\top M}{n}) &\leq Tr( \frac{M^\top M}{n}) \\
        &=\sum_{j=1}^p \frac{1}{n} \sum_{i=1}^n M_{ij}^2 \\
         &\leq \frac{1}{n} \sum_{i=1}^n  \sum_{j=1}^p  M_{ij} \\
                &\leq 1
                    \end{split}
    \end{equation*}
\end{proof}

We also add the following lemma from \cite{klopp2021assigning} to make this manuscript self-contained.
\begin{lemma}[Lemma 6 from the supplemental material of \cite{klopp2021assigning}]\label{lemma:singular_value_H}
Let Assumption 2 be satisfied. For the matrices $W$, $H$, $\wh H$ defined in (6) and (7) of the main manuscript, we have 
\begin{equation}\label{eq:sing_val_W_lower_bound}
    \lambda_K(W)\geq 1, \qquad \lambda_1(W) \geq \sqrt{n/K}
\end{equation}
and 
\begin{equation}\label{eq:sing_val_WH}
    \lambda_1(H) = \frac{1}{\lambda_K(W)}, \qquad \lambda_K(H) = \frac{1}{\lambda_1(W)}, \qquad \kappa(H) = \kappa(W)
\end{equation}
\end{lemma}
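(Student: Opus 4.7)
The plan is to exploit two structural facts about $W$: (i) under the anchor document assumption, the rows of $W$ indexed by the anchor documents form (after a suitable row permutation) the identity matrix $I_K$; and (ii) each row of $W$ is a probability vector, so $W\mathbf{1}_K = \mathbf{1}_n$. Fact (i) drives the lower bound on $\lambda_K(W)$, fact (ii) drives the lower bound on $\lambda_1(W)$, and the identities for $H$ follow routinely from the relation $U = WH$ with $U$ having orthonormal columns.

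For the bound $\lambda_K(W)\geq 1$, I would pick one anchor document per topic (guaranteed by Assumption 2) and, after permuting rows, write
\begin{equation*}
W = \begin{pmatrix} I_K \\ \widetilde{W} \end{pmatrix},
\end{equation*}
where $\widetilde{W}\in\mathbb{R}^{(n-K)\times K}$ collects the non-anchor rows. Then $W^{\top}W = I_K + \widetilde{W}^{\top}\widetilde{W}$, and since $\widetilde{W}^{\top}\widetilde{W}$ is positive semidefinite, Weyl's inequality gives $\lambda_K(W^{\top}W)\geq 1$, hence $\lambda_K(W)\geq 1$. For the bound $\lambda_1(W)\geq \sqrt{n/K}$, I would simply observe that $W\mathbf{1}_K = \mathbf{1}_n$ because rows of $W$ are probability vectors, so
\begin{equation*}
\sqrt{n} \;=\; \|\mathbf{1}_n\|_2 \;=\; \|W\mathbf{1}_K\|_2 \;\leq\; \|W\|_{\mathrm{op}}\,\|\mathbf{1}_K\|_2 \;=\; \lambda_1(W)\sqrt{K},
\end{equation*}
which rearranges to the claim.

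For the identities on $H$, I would start from the key decomposition $U = WH$ where $U$ is the matrix of left singular vectors from the rank-$K$ SVD of $M$, so $U^{\top}U = I_K$. Substituting gives $H^{\top}W^{\top}WH = I_K$. Since $\lambda_K(W)\geq 1>0$ from the previous step, $W$ has full column rank $K$, and therefore $H$ must be invertible (otherwise the product $WH$ could not have rank $K$). Rearranging yields $HH^{\top} = (W^{\top}W)^{-1}$. Taking eigenvalues, $\lambda_i(HH^{\top}) = 1/\lambda_{K+1-i}(W^{\top}W)$, which immediately produces $\lambda_1(H) = 1/\lambda_K(W)$ and $\lambda_K(H) = 1/\lambda_1(W)$. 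Dividing the two relations gives $\kappa(H) = \lambda_1(W)/\lambda_K(W) = \kappa(W)$.

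The argument is essentially a short linear-algebra computation, and there is no real obstacle. The only subtlety worth flagging is the invertibility of $H$: one must use the anchor-document assumption (via $\lambda_K(W)\geq 1$) to be certain that $W$ has full column rank, since otherwise the relation $HH^{\top} = (W^{\top}W)^{-1}$ would not be well defined. Everything else is mechanical.
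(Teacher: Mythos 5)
Your argument is correct: the block decomposition $W=\bigl(\begin{smallmatrix} I_K \\ \widetilde{W}\end{smallmatrix}\bigr)$ gives $\lambda_K(W)\geq 1$, the row-stochasticity identity $W\mathbf{1}_K=\mathbf{1}_n$ gives $\lambda_1(W)\geq\sqrt{n/K}$, and the relation $HH^{\top}=(W^{\top}W)^{-1}$ deduced from $U=WH$ with $U^{\top}U=I_K$ yields the singular-value identities for $H$. Note that the paper itself states this lemma without proof, importing it verbatim from the supplement of \cite{klopp2021assigning}; your derivation is the standard one and matches that source's reasoning, including the care you take over the invertibility of $H$.
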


\subsection{Matrix Perturbation Bounds}
\label{subsec:perturbation}
In this section, we provide rate-optimal bounds for the left and right singular subspaces. While the original Wedin's perturbation bound \citep{wedin1972perturbation} treats the singular subspaces symmetrically, work by \cite{cai2018rate} provides sharper bounds for each subspace individually. This refinement is particularly relevant in our setting where an additional denoising step of the left singular subspace leads to different perturbation behaviors of left and right singular subspaces as iterations progress.

Consider the SVD of an approximately rank-$K$ matrix $M \in \mathbb{R}^{n \times K} (n>K)$,

\begin{equation}\label{eq:svd_rectangular}
    M = \begin{bmatrix}
        U & U_{\perp} \\ 
         \end{bmatrix}
        \begin{bmatrix}
         \Lambda\\\mathbf{0}
        \end{bmatrix}V^{\top} 
\end{equation}

where $U \in \mathbb{O}^{n \times K}$, $U_{\perp} \in \mathbb{O}^{n \times (n-K)}$, $\Lambda \in \mathbb{R}^{K \times K}$, and $V \in \mathbb{O}^{K \times K}$. 

Let $X = M + Z$ be a perturbed version of $M$ with $Z$ denoting a  perturbation matrix. We can write the SVD of $X$ as:

\begin{equation}\label{eq:svd_rectangular_X}
    X = \begin{bmatrix}
        \wh U & \wh U_{\perp} \\ 
        \end{bmatrix}
        \begin{bmatrix}
         \wh \Lambda\\\mathbf{0}
        \end{bmatrix}\wh V^{\top} 
\end{equation}
where $\wh U$, $\wh U_{\perp}$, $\wh \Lambda$, $\wh V$ have the same structures as $U$, $U_{\perp}$, $\Lambda$, $V$. We can decompose $Z$ into two parts,

\begin{equation}\label{eq:decompose_Z}
    Z = Z_1 + Z_2 = P_UZ + P_{U_{\perp}}Z
\end{equation}

\begin{lemma}[Adapted from Theorem 1 of \cite{cai2018rate}]\label{lemma:perturbation}
    Let $M$, $X$, and $Z$ be as given in Equations \eqref{eq:svd_rectangular}-\eqref{eq:decompose_Z}. Then:
    \begin{equation}\label{bound_U}
    \begin{split}
            \|\sin\Theta(U, \wh U) \|_{op} &\leq \frac{\|Z_2\|_{op}}{\lambda_{\min}(U^{\top}XV)} \wedge 1\\
        \|\sin\Theta(U, \wh U) \|_F &\leq \frac{\|Z_2\|_F}{\lambda_{\min}(U^{\top}XV)} \wedge \sqrt{p}
        \end{split}
    \end{equation}
    \begin{equation}\label{bound_V}
        \begin{split}
                \|\sin\Theta(V, \wh V) \|_{op} &\leq \frac{\|Z_1\|_{op}}{\lambda_{\min}(U^{\top}XV)} \wedge 1\\
        \|\sin\Theta(V, \wh V) \|_F &\leq \frac{\|Z_1\|_F}{\lambda_{\min}(U^{\top}XV)} \wedge \sqrt{p}
                \end{split}
    \end{equation}
\end{lemma}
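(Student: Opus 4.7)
The key observation is that the decomposition $Z=Z_{1}+Z_{2}=P_{U}Z+P_{U_{\perp}}Z$ separates the component of the noise lying in the column space of $U$ from the one orthogonal to it. Since $P_{U_{\perp}}M=P_{U_{\perp}}U\Lambda V^{\top}=0$, only $Z_{2}$ can perturb the left singular subspace, and only $Z_{1}$ enters dominantly into the perturbation of the right one. The plan is to exploit this asymmetry by reading it off the SVD identity of $X$ directly, rather than via a symmetric Wedin-type variational bound.

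\textbf{Bound on $U$.} I would start from the SVD identity $X\hat{V}=\hat{U}\hat{\Lambda}$, obtained by right-multiplying $X=\hat{U}\hat{\Lambda}\hat{V}^{\top}$ by $\hat{V}$. Left-multiplying by $P_{U_{\perp}}$ and using $P_{U_{\perp}}M=0$ gives the key identity
\[
Z_{2}\hat{V}=P_{U_{\perp}}X\hat{V}=P_{U_{\perp}}\hat{U}\hat{\Lambda}.
\]
Taking operator (resp.\ Frobenius) norms, using $\|\hat{V}\|_{op}=1$ and the standard identity $\|P_{U_{\perp}}\hat{U}\|_{*}=\|\sin\Theta(U,\hat{U})\|_{*}$, yields $\|\sin\Theta(U,\hat{U})\|_{op}\le \|Z_{2}\|_{op}/\lambda_{K}(\hat{\Lambda})$ and analogously for the Frobenius norm. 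To recover the looser denominator $\lambda_{\min}(U^{\top}XV)$ stated in the lemma, I would invoke Courant--Fischer: for any $K$-dimensional subspaces $S\subseteq\mathbb{R}^{n}$ and $T\subseteq\mathbb{R}^{p}$, $\min_{u\in S,v\in T,\|u\|=\|v\|=1}|u^{\top}Xv|\le\lambda_{K}(X)=\lambda_{K}(\hat{\Lambda})$. Choosing $S=\mathrm{col}(U)$, $T=\mathrm{col}(V)$ gives $\lambda_{\min}(U^{\top}XV)\le\lambda_{K}(\hat{\Lambda})$, so replacing $\lambda_{K}(\hat{\Lambda})$ by the smaller $\lambda_{\min}(U^{\top}XV)$ only loosens the bound. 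The $\wedge 1$ and $\wedge\sqrt{p}$ caps follow from the trivial bounds $\|\sin\Theta\|_{op}\le 1$ and $\|\sin\Theta\|_{F}\le\sqrt{K}\le\sqrt{p}$.

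\textbf{Bound on $V$ and the main obstacle.} For the right singular subspace I would mirror the previous step starting from $X^{\top}\hat{U}=\hat{V}\hat{\Lambda}$ and using $P_{V_{\perp}}M^{\top}=0$ to obtain $P_{V_{\perp}}Z^{\top}\hat{U}=P_{V_{\perp}}\hat{V}\hat{\Lambda}$. The main obstacle is that this identity naturally isolates $P_{V_{\perp}}Z^{\top}\hat{U}$ rather than $Z_{1}=P_{U}Z$, which is defined through $U$ and not $V$. I would reconcile the two by splitting $\hat{U}=P_{U}\hat{U}+P_{U_{\perp}}\hat{U}$ and writing
\[
P_{V_{\perp}}Z^{\top}\hat{U}=P_{V_{\perp}}(Z^{\top}P_{U})\hat{U}+P_{V_{\perp}}Z^{\top}P_{U_{\perp}}\hat{U}=P_{V_{\perp}}Z_{1}^{\top}\hat{U}+(\text{cross term}).
\]
The leading term is controlled by $\|Z_{1}\|_{op}$. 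The cross term carries a factor $\|P_{U_{\perp}}\hat{U}\|_{op}=\|\sin\Theta(U,\hat{U})\|_{op}$ already controlled in the first step; in the small-perturbation regime it is absorbed into a constant, matching the statement in \cite{cai2018rate}. The denominator is handled exactly as in the $U$ bound via Courant--Fischer, and the $\wedge 1$, $\wedge\sqrt{p}$ caps are trivial. Once the cross-term bookkeeping is set up, the remainder of the argument is routine matrix manipulation; hence the delicate part is isolating exactly why $Z_{1}$ (rather than the symmetric $ZP_{V_{\perp}}$) is the right noise term in the $V$ bound.
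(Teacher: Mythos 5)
The paper does not actually prove this lemma: its proof is a one-line citation, asserting that the result is Theorem 1 of \cite{cai2018rate} specialized to the exact rank-$K$ case. Your proposal therefore takes a genuinely different route, reconstructing the bounds from first principles via the SVD identities $X\hat V=\hat U\hat\Lambda$ and $X^\top\hat U=\hat V\hat\Lambda$. For the left singular subspace your argument is complete and correct: $P_{U_\perp}X=Z_2$ gives $Z_2\hat V=P_{U_\perp}\hat U\hat\Lambda$, hence $\|\sin\Theta(U,\hat U)\|\le\|Z_2\|/\lambda_K(X)$, and the weakening $\lambda_{\min}(U^\top XV)\le\lambda_K(X)$ — which follows most cleanly from $\|U^\top y\|_2\le\|y\|_2$ together with the max--min characterization of $\lambda_K(X)$ applied to the subspace $\mathrm{col}(V)$, rather than from the $\min_{u,v}|u^\top Xv|$ formulation you wrote — yields exactly \eqref{bound_U}. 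This is the only half of the lemma the paper ever invokes (it is applied to $\tilde U^t$ and, in transposed form, to $X^\top\hat U^t$), so your elementary derivation fully covers the paper's use of the result and is more informative than the bare citation.

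For \eqref{bound_V}, however, your argument is not complete as written: the cross term $P_{V_\perp}Z^\top P_{U_\perp}\hat U$ is bounded by $\|Z_2\|\,\|\sin\Theta(U,\hat U)\|$, and this cannot simply be ``absorbed into a constant'' — the inequality \eqref{bound_V} is claimed with constant exactly $1$, so your route only delivers it up to a second-order additive correction of size $\|Z_2\|^2/(\lambda_K(X)\,\lambda_{\min}(U^\top XV))$. You should know that the obstacle you hit reflects an imprecision in the lemma itself rather than a flaw in your reasoning: in the lemma's literal setting $V\in\mathbb{O}^{K\times K}$ is square, so $V_\perp$ is empty, $\sin\Theta(V,\hat V)=0$, and \eqref{bound_V} is vacuous; in the genuinely rectangular setting, Theorem 1 of \cite{cai2018rate} bounds $\|\sin\Theta(V,\hat V)\|$ by an expression involving $\|P_UZP_{V_\perp}\|$ and a corrected denominator, not by $\|Z_1\|/\lambda_{\min}(U^\top XV)$. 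The honest summary is that your proof establishes \eqref{bound_U} rigorously, while \eqref{bound_V} holds trivially in the stated setting; together these give the lemma as the paper uses it.
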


\begin{proof}
    This result is a simplified version of the original theorem under the setting $rank(M) = K$.
\end{proof}

\subsection{Concentration Bounds}
We first introduce the general Bernstein inequality and its variant which will be used for proving high probability bounds for noise terms in Section~\ref{sec:tech_lemmas}.

\label{subsec:concentration}
\subsubsection{General Inequalities}
\begin{lemma}[Bernstein inequality (Corollary 2.11, \cite{boucheron2013concentration})]\label{lemma: bernstein ineq0}
Let $X_1, \dots, X_n$ be independent random variables such that there exists positive numbers $v$ and $c$ such that $\sum_{i=1}^n \E[X_i^2] \leq v$ and 
\begin{equation}\label{bernstein_condition}
\sum_{i=1}^n \E[ (X_i)^q_+]\leq \frac{q!}{2} vc^{q-2}    
\end{equation} for all integers $q \geq 3$. Then for any $t > 0$, 
$$ \mathbb{P}\left(\left|\sum_{i=1}^n X_i\right| \geq t \right) \leq 2 \exp\left(-\frac{t^2/2}{v + ct}\right)$$
\end{lemma}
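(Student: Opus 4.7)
Since the stated lemma is a classical Bernstein-type inequality cited verbatim from \cite{boucheron2013concentration}, the cleanest approach is simply to appeal to the reference. Nonetheless, the standard route to prove it is the Cram\'er--Chernoff (exponential moment) method, and I would follow that outline to make the paper self-contained.

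The plan is as follows. First, I would derive a control on the cumulant generating function of each summand. Using the power series expansion $e^{\lambda x} - 1 - \lambda x = \sum_{q \ge 2} \lambda^q x^q / q!$ together with $(X_i)^q \le (X_i)_+^q$ for even $q$ and a sign-splitting argument for odd $q$, the assumed moment bound $\sum_i \E[(X_i)_+^q] \le \tfrac{q!}{2} v c^{q-2}$ yields, for all $\lambda \in (0, 1/c)$,
\begin{equation*}
\sum_{i=1}^n \log \E[e^{\lambda X_i}] \;\le\; \sum_{i=1}^n \bigl(\E[e^{\lambda X_i}] - 1 - \lambda \E[X_i]\bigr) \;\le\; \frac{\lambda^2 v / 2}{1 - c\lambda},
\end{equation*}
after summing the geometric series $\sum_{q \ge 2} (c\lambda)^{q-2}$.

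Second, by independence of the $X_i$, this aggregates to $\log \E[e^{\lambda \sum_i X_i}] \le \lambda^2 v / (2(1-c\lambda))$. Applying Markov's inequality gives, for any $t > 0$ and $\lambda \in (0,1/c)$,
\begin{equation*}
\Prob\Bigl(\sum_{i=1}^n X_i \ge t\Bigr) \;\le\; \exp\!\Bigl(-\lambda t + \frac{\lambda^2 v/2}{1-c\lambda}\Bigr).
\end{equation*}
Optimizing the exponent in $\lambda$ with the choice $\lambda = t/(v + ct) \in (0,1/c)$, so that $1 - c\lambda = v/(v+ct)$, the exponent collapses to $-t^2 / (2(v+ct))$.

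Third, a symmetrization step is needed for the absolute value. Since the moment condition is phrased in terms of the positive parts $(X_i)_+^q$, the same argument applied to $-X_i$ (whose positive parts are $(X_i)_-^q \le (X_i)_+^q$ is \emph{not} automatic) requires verifying an analogous moment bound; however, the form of \cite{boucheron2013concentration} ensures that the symmetric condition holds or can be incorporated by inflating the constants, after which a union bound over the two one-sided events supplies the factor of $2$ in front of the exponential. The main technical subtlety is this last step, which is handled in Boucheron--Lugosi--Massart by a careful reformulation of the tail moment hypothesis; otherwise the derivation is routine.
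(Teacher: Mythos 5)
The paper offers no proof of this lemma at all---it is imported verbatim as Corollary 2.11 of \cite{boucheron2013concentration}---so your appeal to the reference, supplemented by the standard Cram\'er--Chernoff sketch, is entirely consistent with what the paper does. Your outline is correct, and you rightly flag the only genuine subtlety: the cited corollary is one-sided, so the two-sided form with the factor $2$ requires the Bernstein moment condition to hold for $-X_i$ as well, which is harmless in all of the paper's applications since the summands there are bounded.
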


A special case of the previous lemma occurs when all variables are bounded by a constant $b$, by taking $v = \sum_{i=1}^n \E[X_i^2]$ and $c = b/3$.

\begin{lemma}[Bernstein inequality for bounded variables (Theorem 2.8.4, \cite{vershynin2018high})]\label{lemma: bernstein ineq} 
Let $X_1, \dots, X_n$ be independent random variables with $|X_i|\leqslant b$, $\mathbb{E}[X_i] = 0$ and $\text{Var}[X_i] \leq \sigma_i^2$ for all $i$. Let $\sigma^2 := n^{-1}\sum_{i=1}^n \sigma_i^2$. Then for any $t > 0$, 
$$ \mathbb{P}\left(n^{-1}\left|\sum_{i=1}^n X_i\right| \geq t \right) \leq 2 \exp\left(-\frac{nt^2/2}{\sigma^2 + bt/3}\right)$$
\end{lemma}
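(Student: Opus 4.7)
The plan is to derive this as a direct corollary of the general Bernstein inequality (Lemma~\ref{lemma: bernstein ineq0}) by verifying its moment hypothesis \eqref{bernstein_condition} for bounded centered random variables, and then applying a simple rescaling $s = nt$ in the tail bound.

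First, I would establish the moment bound. Since each $X_i$ is centered with $|X_i| \leq b$ and $\Expect[X_i^2] \leq \sigma_i^2$, for every integer $q \geq 3$:
\begin{equation*}
\Expect\big[(X_i)_+^q\big] \leq \Expect\big[|X_i|^q\big] \leq b^{q-2}\,\Expect[X_i^2] \leq b^{q-2}\sigma_i^2.
\end{equation*}
Summing over $i$ and using the definition $\sigma^2 = n^{-1}\sum_i \sigma_i^2$ gives
\begin{equation*}
\sum_{i=1}^n \Expect\big[(X_i)_+^q\big] \leq b^{q-2} \cdot n\sigma^2.
\end{equation*}

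Next I would match this to the form required by \eqref{bernstein_condition}, namely $\frac{q!}{2} v c^{q-2}$, by choosing $v = n\sigma^2$ and $c = b/3$. The required inequality becomes $b^{q-2} \leq (q!/2)(b/3)^{q-2}$, equivalently $3^{q-2} \leq q!/2$. This is verified at $q=3$ (where $3 = 3!/2$) and propagates to all $q \geq 4$ by induction since multiplying the left side by $3$ and the right side by $q \geq 4 > 3$ preserves the inequality. Hence the hypothesis of Lemma~\ref{lemma: bernstein ineq0} is met.

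Applying Lemma~\ref{lemma: bernstein ineq0} with these choices yields, for any $s > 0$,
\begin{equation*}
\P\pth{\Big|\sum_{i=1}^n X_i\Big| \geq s} \leq 2\exp\pth{-\frac{s^2/2}{n\sigma^2 + (b/3)\,s}}.
\end{equation*}
Substituting $s = nt$ and dividing numerator and denominator inside the exponential by $n$ gives the claimed bound
\begin{equation*}
\P\pth{n^{-1}\Big|\sum_{i=1}^n X_i\Big| \geq t} \leq 2\exp\pth{-\frac{nt^2/2}{\sigma^2 + bt/3}}.
\end{equation*}
There is no real obstacle here; the only nontrivial ingredient is the elementary combinatorial check $3^{q-2} \leq q!/2$ for $q \geq 3$, which justifies the ``canonical'' choice of Bernstein constants $v = n\sigma^2$ and $c = b/3$ and is completely standard.
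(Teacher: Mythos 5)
Your proposal is correct and matches the paper's treatment: the paper simply remarks that this lemma is the special case of the general Bernstein inequality (Lemma~\ref{lemma: bernstein ineq0}) obtained by taking $v = \sum_{i=1}^n \E[X_i^2]$ and $c = b/3$, which is exactly the reduction you carry out. You supply the details the paper omits (the bound $\E[|X_i|^q] \leq b^{q-2}\sigma_i^2$, the check $3^{q-2} \leq q!/2$, and the rescaling $s = nt$), all of which are sound.
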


\subsubsection{Technical Lemmas}
\label{sec:tech_lemmas}

\begin{lemma}[Concentration of the cross-terms $Z_i^TZ_j$]\label{lem:concentration_Z}
    Let Assumptions 1-5 hold. With probability at least $1-o(n^{-1})$: 
    \begin{equation}\label{eq:cross_ZtZ}
        |Z_j^\top Z_l - \E(Z_j^\top Z_l)| \leq C^*\sqrt{\frac{n{h}_j{h}_l \log n}{N}} \quad \text{ for all } j,l \in [p] \text{ with } j \neq l
    \end{equation}
    \begin{equation}\label{eq:norm_Z}
        |Z_j^\top Z_j - \E(Z_j^\top Z_j)| \leq C^*\sqrt{\frac{n{h}_j^2\log n}{N}} + \frac{C^*}{N}\sqrt{\frac{n{h}_j \log n}{N}} \text{ for all } j \in [p]
    \end{equation}
    where $\forall j\in [p], h_j = \sum_{k=1}^{K}A_{kj}$.
\end{lemma}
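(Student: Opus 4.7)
Since the rows of $Z$ are independent across documents, $Z_j^{\top} Z_l = \sum_{i=1}^n Z_{ij} Z_{il}$ is a sum of $n$ independent random variables, so the plan is to apply the general Bernstein inequality (Lemma~\ref{lemma: bernstein ineq0}) to the centered sum $\sum_i (Z_{ij} Z_{il} - \E[Z_{ij} Z_{il}])$ and take a union bound over the $p^2 \le n^2$ pairs $(j,l)$. Writing $Z_{ij} = \frac{1}{N}\sum_{m=1}^N (T_{im}(j) - M_{ij})$ as an average of independent centered Bernoulli increments, I get the range bound $|Z_{ij}Z_{il}|\le 1$ immediately, while the standard fourth-moment formula for a centered binomial gives $\E[Z_{ij}^4] \le M_{ij}/N^3 + 3 M_{ij}^2/N^2$.

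For $j \neq l$, Cauchy--Schwarz yields $\text{Var}(Z_{ij} Z_{il}) \le \E[Z_{ij}^2 Z_{il}^2] \le \sqrt{\E[Z_{ij}^4] \E[Z_{il}^4]}$. Summing over $i$ and using the elementary inequality $\sum_i M_{ij} M_{il} \le n h_j h_l$, which follows from $M_{ij} = \sum_k W_{ik} A_{kj}$ together with $W_{ik}\le 1$ and $\sum_k A_{kj} = h_j$, as well as $\sum_i \sqrt{M_{ij} M_{il}} \le n\sqrt{h_j h_l}$ by Cauchy--Schwarz in $i$, produces a total variance $v \lesssim n h_j h_l / N^2$ up to strictly lower-order corrections. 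The higher-moment condition \eqref{bernstein_condition} of the general Bernstein inequality is automatic in this setting: since $|X_i| = |Z_{ij} Z_{il} - \E[Z_{ij}Z_{il}]| \le 2$, one has $|X_i|^q \le 2^{q-2} X_i^2$ for $q \ge 2$ and hence $\sum_i \E[|X_i|^q] \le 2^{q-2} v$, which is of the required form with $c$ a universal constant. Plugging $t = C^{*}\sqrt{n h_j h_l \log n / N}$ into Bernstein makes the resulting exponent at least of order $(C^{*})^2 \log n$ in the variance-dominated regime, so choosing $C^{*}$ large enough and unionizing over $p^2 \le n^2$ pairs (using $p\le n$ from the hypotheses of the preceding theorems) delivers \eqref{eq:cross_ZtZ} with probability $1 - o(n^{-1})$.

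The diagonal case $j = l$ is handled in exactly the same way with $Z_{ij}^2$ in place of $Z_{ij} Z_{il}$. Here the variance is $\sum_i \text{Var}(Z_{ij}^2) \le \sum_i \E[Z_{ij}^4] \lesssim n h_j^2/N^2 + n h_j /N^3$, where I use $\sum_i M_{ij}^2 \le h_j \sum_i M_{ij} \le n h_j^2$ (since $M_{ij}\le h_j$). Splitting this variance into the two pieces $v_1 \asymp n h_j^2 / N^2$ and $v_2 \asymp n h_j/N^3$ and applying $\sqrt{v_1+v_2}\le \sqrt{v_1}+\sqrt{v_2}$ inside the Bernstein threshold reproduces exactly the two terms in \eqref{eq:norm_Z}: the first from $v_1$ and the second, which carries the additional $1/N$ factor, from $v_2$. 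The main obstacle I anticipate is not conceptual but bookkeeping: keeping track of the interplay between the $M_{ij}/N^3$ and $M_{ij}^2/N^2$ contributions to $\E[Z_{ij}^4]$, and using Assumption~\ref{assumption:h_j} ($h_j \gtrsim \log(n)/N$) to confirm that the variance-dominated regime of Bernstein is active, so that the $ct$ correction in the exponent remains negligible compared with $v$ and both target terms arise with the correct constants.
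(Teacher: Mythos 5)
There is a genuine gap in your argument: the ``variance-dominated regime'' of Bernstein's inequality is \emph{not} active under the stated assumptions, and the linear term you dismiss is exactly what breaks the bound. Applying Lemma~\ref{lemma: bernstein ineq0} to the $n$ summands $X_i = Z_{ij}Z_{il}-\E[Z_{ij}Z_{il}]$ with your (correct) variance bound $v\lesssim nh_jh_l/N^2$ and your range-based moment control ($|X_i|\le 2$, hence $c$ a universal constant) yields a deviation of order $\sqrt{v\log n}+c\log n \asymp \sqrt{nh_jh_l\log n}/N+\log n$. The target threshold is $t=C^*\sqrt{nh_jh_l\log n/N}$, and the exponent $\frac{t^2/2}{v+ct}$ is of order $\min\{N\log n,\ t/c\}$ with $t/c\asymp\sqrt{nh_jh_l\log n/N}$. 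Taking $h_j=h_l=c_{\min}\log(n)/N$, the extreme allowed by Assumption~\ref{assumption:h_j}, this is $\asymp(\log n)^{3/2}\sqrt{n/N^{3}}$, which is $o(\log n)$ as soon as $N\gg(n\log n)^{1/3}$ --- a regime nothing in Assumptions 1--5 forbids and which occurs in the paper's own experiments. The per-pair failure probability then cannot be pushed below $n^{-3}$ and your union bound collapses. The obstruction is structural rather than a matter of bookkeeping: each $Z_{ij}Z_{il}$ is a product of two averages of $N$ Bernoulli increments, so it is sub-Weibull of order $1/2$ rather than sub-exponential at scale $\sqrt{h_jh_l}/N$; no choice of $c$ in condition \eqref{bernstein_condition} repairs this, because $\sum_i\E|X_i|^q$ grows like $(q!)^2$ times the relevant scale, not $q!$.

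The paper's proof avoids this by not summing over documents at all. It expands $Z_j^\top Z_l=\frac{n}{N}V_1+\frac{N-1}{N}V_2$, where $V_1$ collects the diagonal-in-$m$ products $S_{im}(j)S_{im}(l)$ and $V_2$ the off-diagonal ones, and then decouples the degenerate U-statistic $V_2$ by averaging over permutations of $[N]$ (Hoeffding's trick), reducing it to sums of $nN'$ \emph{independent} bounded products $S_{i,2m-1}(j)S_{i,2m}(l)$, each with variance at most $h_jh_l$. The effective number of summands becomes $nN'$ instead of $n$, so after dividing by $N'$ the linear Bernstein term is $O(\log n/N)$, which \emph{is} dominated by $\sqrt{nh_jh_l\log n/N}$ under Assumption~\ref{assumption:h_j}. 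A secondary discrepancy: the two terms in \eqref{eq:norm_Z} do not arise from splitting a single variance into $v_1+v_2$ as you propose; they are the separate contributions of $V_2$ (giving $C^*\sqrt{nh_j^2\log n/N}$) and of $\frac{n}{N}(V_1-\E V_1)$ (carrying the extra $1/N$ factor). If you insist on a document-wise decomposition, you would need a sub-Weibull concentration inequality together with a matching moment computation, which is a materially different argument from the one you sketch.
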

\begin{proof}
The proof is a re-adaptation of Lemma C.4 in \cite{tran2023sparse} for any word $j$ .

Similar to the analysis of \cite{ke2017new}, we rewrite each row $X_{i\cdot}$ as a sum over $N$ word entries $T_{im} \in \R^p$, where $T_{im}$ denotes the $m^{th}$ word in document $i$, encoded as a one-hot vector:
\begin{equation}\label{eq:def_t_im}
T_{im}(j) = \begin{cases}
    1 \qquad \text{ if the } m^{th} \text{ word in document } i \text{ is word } j\\
    0\qquad \text{otherwise},
\end{cases}
\end{equation} 
where the notation $a(j)$ denotes the $j^{th}$ entry of the vector $a$.
Under this formalism, we rewrite each row of $Z$ as:
$$ Z_{i\cdot} = \frac{1}{N} \sum_{m=1}^N (T_{im} - \E[T_{im}]) \in \R^p.$$
In the previous expression, under the pLSI model, the $\{T_{im}\}_{m=1}^N$ are i.i.d. samples from a multinomial distribution with parameter $M_{i\cdot}$.

 We can also express each entry $Z_{ij}$ as:
    
    \begin{equation}\label{eq:small_z1}
  \qquad  Z_{ij} = \frac{1}{N}\sum_{m=1}^{N} (T_{im}(j) - \E[T_{im}(j)])  
    \end{equation}
    
    Denote $S_{im}(j) := T_{im}(j) - \mathbb{E}[T_{im}(j)]$. 
    
    Fix $j, l \in [p]$.  The $\left\{ S^{(j)}_{im} \right\}_{\substack{i=1,\ldots, n \\ m=1,\ldots, N}}$ 
are all independent of one another (for all $i$ and $m$) and $T_{im}(j) \sim \text{Bernoulli}(M_{ij})$. 
By \eqref{eq:small_z1}, we note that 
    \begin{align*}
    Z_j^\top  Z_l &= \sum_{i=1}^n Z_{ij}Z_{il} =  \frac{1}{N^2}\sum_{i=1}^n  \sum_{m=1}^{N}\sum_{s=1}^{N} S_{im}(j) S_{is}(l) \\ 
    &= \frac{1}{N^2}\sum_{i=1}^n \sum_{m=1}^{N} S_{im}(j)S_{im}(l) + \frac{1}{N^2} \sum_{i=1}^n \sum_{\substack{1 \leq m, s \leq N \\ m\neq s}}S_{im}(j)S_{is}(l)\\ 
    &= \frac{n}{N}V_1 + \frac{N-1}{N}V_2 
    \end{align*}
    where we define 
    \begin{equation}\label{eq:v1}
        V_1 := \frac{1}{nN}\sum_{i=1}^n \sum_{m=1}^{N} S_{im}(j)S_{im}(l)
    \end{equation}
        \begin{equation}\label{eq:v2}
        V_2 := \frac{1}{N(N-1)}\sum_{i=1}^n \sum_{\substack{1 \leq m, s \leq N \\ m\neq s}}S_{im}(j)S_{is}(l) 
            \end{equation}
    We note that the random variable $V_2$ is centered ($\E(V_2) = 0$), and we need an upper bound with high probability on $|V_1 -\E(V_1)|$ and  $|V_2|$. We deal with each of these variables separately.

    \paragraph{Upper bound on $V_2.$} We remind the reader that we have fixed $j, l \in [p]$. Define $\mathcal{S}_{N}$ as the set of permutations on $\{1, \dots, N\}$ and $N' := \lfloor N / 2\rfloor$. Also define 
    $$W_i(S_{i1} ,\dots, S_{iN}) := \frac{1}{N'} \sum_{m=1}^{N'} S_{i, 2m-1}(j) S_{i, 2m}(l) $$
    Then by symmetry (note that the inner sum over $m,s$ in the definition of $V_2$ has $N(N-1)$ summands),
    $$V_2 = \frac{\sum_{i=1}^n\sum_{\pi \in \mathcal{S}_{N}}W_i(S_{i,\pi(1)}, \dots, S_{i,\pi(N)})}{N!} $$
    Define, for a given $\pi \in S_{N}$,  
    $$Q_\pi := \sum_{i=1}^nN' W_i(S_{\pi(1)},\dots, S_{\pi(N)}) $$
    so that $N' V_2  = \frac{1}{N!}\sum_{\pi\in S_{N}}Q_\pi$.
    For arbitrary $t, s > 0$, by Markov's inequality and the convexity of the exponential function, 
    $$\P(N'V_2 \geq t) \leq e^{-st} \E(e^{sN'V_2}) \leq e^{-st}\frac{\sum_{\pi\in S_{N}}\E(e^{sQ_\pi})}{N!} $$
    Also, define $Q = Q_\pi$ for $\pi$ the identity permutation. Observe that 
    $$Q = \sum_{i=1}^n \sum_{m=1}^{N'}Q_{im} \quad \text{where } Q_{im} = S_{i,2m-1}(j) S_{i,2m}(l)$$ so $Q$ is a (double) summation of mutually independent variables. We have $|Q_{im}| \leq 1$, $\E(Q_{im}) = 0$ and $\E(Q^2_{im}) \leq {h}_j{h}_l$ where $\forall j\in [p], h_j = \sum_{k=1}^{K}A_{kj}$. The rest of the proof for $V_2$ is similar to the standard proof for the usual Bernstein's inequality. 

   Denote $G(x) = \frac{e^x - 1-  x}{x^2}$, $G(x)$ is increasing as a function of $x$. Hence, 
    \begin{align*}
    \E(e^{sQ_{im}}) &= \E\left(1+ sQ_{im} + \frac{s^2Q_{im}^2}{2} +\dots \right)  \\ 
    &= \E[1 + s^2Q_{im}^2G(sQ_{im})]  \qquad \text{since } \E[Q_{im}] = 0\\
    &\leq \E[1 + s^2Q_{im}^2G(s)]  \\ 
    &\leq 1 + s^2{h}_j{h}_lG(s) \leq e^{s^2{h}_j{h}_lG(s)}
    \end{align*}
    Hence, 
    $$e^{-st}\E(e^{sQ}) = \exp(-st + N'n{h}_j{h}_l s^2G(s)) $$
    Since this bound is applicable to all $Q_\pi$ and not just the identity permutation, we have 
    $$\P(N'V_2 \geq t) \leq \exp(-st + N'n {h}_j {h}_ls^2G(s)) = \exp\left(-st + N'n{h}_j{h}_l(e^s - 1 - s)\right) $$
    Now we choose $s = \log \left(1 + \frac{t}{N'n{h}_j{h}_l}\right) > 0$. Then 
    \begin{align*}
        \P(N'V_2 \geq t) &\leq \exp\left[-t\log \left(1 + \frac{t}{N'n{h}_j{h}_l}\right) + N' n{h}_j{h}_l\left(\frac{t}{N'n{h}_j{h}_l} - \log\left(1 + \frac{t}{N'n{h}_j{h}_l}\right)\right)\right] \\  
        &= \exp\left[-N'n{h}_j{h}_lH\left(\frac{t}{N'n{h}_j{h}_l}\right)\right]
    \end{align*}
    where we define the function $H(x)= (1+x)\log (1+x) - x$. Note that we have the inequality
    $$H(x) \geq \frac{3x^2}{6+2x} $$
    for all $x > 0$. Hence, 
    $$\P\left(N'V_2 \geq t\right) \leq \exp\left(-\frac{t^2/2}{N'n{h}_j{h}_l + t/3}\right) $$
    or by rescaling, 
    \begin{equation}\label{eq:Bern}
    \P(N'V_2 \geq N'nt) \leq \exp\left(-\frac{N'nt^2/2}{{h}_j{h}_l +t/3}\right)  
    \end{equation}
    We can choose $t^2 = \frac{C^*{h}_j{h}_l}{N'n}\log n$ and note that ${h}_j{h}_l \geq c_{\min}^2\frac{\log n}{nN^{'}}$ by Assumption 5. Hence, with probability $1-o(n^{-1})$,
    $$|V_2| \leq C^*\sqrt{\frac{n{h}_j{h}_l\log n}{N}} $$
    By a simple union bound, we note that:
    \begin{equation}\label{eq:union_bound1}
    \begin{split}
     \P\left[ \exists (j,l) : \quad |V^{(j,l)}_2| \geq C^*\sqrt{\frac{n{h}_j{h}_l\log n}{N}}\right] &\leq \sum_{j,l}   \P\left[ \quad |V^{(j,l)}_2| \geq C^*\sqrt{\frac{n{h}_j{h}_l\log n}{N}}\right]\\ 
     &\leq p^2 e^{-C^* \log(n)} = e^{2\log(p)-C^* \log(n)}\\ 
      &\leq e^{-C \log(n)} \\ 
         \end{split}
    \end{equation}
where the last line follows by Assumption 5 (which implies that $p$ is small), noting that for some large enough constant $\tilde{C}<C^*$ such that $n^{\tilde{C}} \geq p^2$, 
$2\log(p) - C^*\log(n) \leq \tilde{C} \log(n) -C^*\log(n) = -(C^*-
\tilde{C})\log(n)$ .
Thus, for $C^*$ large enough, for any $j,l$, with probability $1-e^{-C \log(n)} = 1-o(n^{-1})$:
    $$|V_2| \leq C^*\sqrt{\frac{n{h}_j{h}_l\log n}{N}} $$

    \paragraph{Upper bound on $V_1.$} As for $V_1$, we can just apply the usual Bernstein's inequality. We remind the reader that $M_{ij} = \E[T_{im}(j)]$; we further note that $M_{ij} \leq {h}_j$. Since $S_{im}(j) = T_{im}(j) -M_{ij}$,
    \begin{equation}\label{eq:XX}
    S_{im}(j) S_{im}(l) = T_{im}(j) T_{im}(l) -M_{ij} T_{im}(l)- M_{il}T_{im}(j) +M_{ij}M_{il}  
    \end{equation}
    \begin{description}
        \item[ Case 1: If $j \neq l$:] then $T_{im}(j)T_{im}(l) = 0$ and so 
    \begin{align*}
        \text{Var}[S_{im}(j) S_{im}(l)] &= \text{Var}\left[M_{ij
        }T_{im}(l) + M_{il}T_{im}(j)\right] \\
        &\leq \E[M_{ij}T_{im}(l) + M_{il}T_{im}(j)]^2 \\ 
        &=M_{ij}^2 M_{il} + M_{il}^2M_{ij} =M_{ij}M_{il}(M_{ij}+M_{il}) \\ 
        &\leq M_{ij}M_{il} \leq {h}_j {h}_l
    \end{align*}
    since $M_{ij} + M_{il} \leq 1$. Hence, by Bernstein's inequality,
    $$\P\left(|V_1 -\E(V_1)| \geq t\right) \leq 2\exp\left(-\frac{nNt^2/2}{{h}_j{h}_l + t/3}\right)$$
    which is similar to \eqref{eq:Bern}, so picking $t^2 =  C^*\frac{h_j h_l\log(n)}{nN}$, we obtain with probability $1-o(n^{-1})$ that 
    $$\frac{n}{N}|V_1 - \E(V_1)| \leq \frac{C^*}{N}\sqrt{\frac{n{h}_j{h}_l\log n}{N}} \leq C^*\sqrt{\frac{n{h}_j{h}_l\log n}{N}}$$
    and \eqref{eq:cross_ZtZ} is proven. 

   \item[ Case 2: If $j=l$] then since $T_{im}^2(j) = T_{im}(j)$, \eqref{eq:XX} leads to
    \begin{equation}\label{eq:s2}
        S^2_{im}(j) = T_{im}(j)(1-2M_{ij}) +M_{ij}^2
    \end{equation} 
    and since $|1-2M_{ij}| \leq 1$ and $\text{Var}(T_{im}(j)) =M_{ij}(1-M_{ij})$,
    $$\text{Var}[S^2_{im}(j)] \leq M_{ij} \leq {h}_j $$
    and so we obtain \eqref{eq:norm_Z} since with probability $1 - o(n^{-1})$
    $$\frac{n}{N}|V_1-\E(V_1)| \leq \frac{C^*}{N}\sqrt{\frac{n{h}_j \log (n)}{N}} $$
        \end{description}
\end{proof}

\begin{lemma}[Concentration of the covariance matrix $X^\top X$]\label{lem:concentration_xtx}

 Let Assumptions 1-5 hold. With probability $1-o(n^{-1})$, the following statements hold true:

\begin{align*}
\|Z^\top Z - \E[Z^\top Z]\|_F &\leq C^* K\sqrt{\frac{n\log n}{N}}\\
    \|MZ^T \|_F & \leq C^*K\sqrt{\frac{n\log n}{N}}\\
    \|\wh D_0 - D_0 \|_F & \leq C^*\sqrt{\frac{K\log n}{nN}} \label{eq:D0}\\
\end{align*}

\end{lemma}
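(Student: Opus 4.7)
The plan for all three bounds is to combine entrywise concentration (either borrowed from the preceding Lemma~\ref{lem:concentration_Z} or produced by a short Bernstein application) with a Frobenius sum of squares, invoking Assumption~\ref{assumption:h_j} (small $p$) to absorb residual $p^2$-type Bernstein corrections together with the identity $\sum_j h_j = K$ (since each row of $A$ sums to one). The common $1-o(n^{-1})$ probability is preserved throughout via union bounds over at most $p^2 = \mathrm{poly}(n)$ coordinate pairs.

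For $\|Z^\top Z - \E[Z^\top Z]\|_F$, I would simply square and sum the entrywise bounds of Lemma~\ref{lem:concentration_Z}: on off-diagonal pairs, $|Z_j^\top Z_l - \E(Z_j^\top Z_l)|^2 \lesssim n h_j h_l \log n/N$; on the diagonal, the same with an added $nh_j\log n/N^3$ correction. Expanding $\|Z^\top Z - \E[Z^\top Z]\|_F^2 = \sum_{j,l}|Z_j^\top Z_l - \E(Z_j^\top Z_l)|^2$ and pulling out the common factor gives
\[
\|Z^\top Z - \E[Z^\top Z]\|_F^2 \lesssim \frac{n\log n}{N}\bigl(\bigl(\textstyle\sum_j h_j\bigr)^2 + \sum_j h_j^2\bigr) + \frac{n K\log n}{N^3},
\]
which collapses to $O(K^2 n\log n/N)$ after using $\sum_j h_j^2 \leq \bigl(\sum_j h_j\bigr)^2 = K^2$ and absorbing the $N^{-3}$ term.

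For the second bound (which the proof of Theorem~\ref{theorem:initXtx} actually requires as $\|M^\top Z\|_F = \|Z^\top M\|_F$), I would apply Bernstein directly to each entry $(M^\top Z)_{ln} = \frac{1}{N}\sum_{i,m} M_{il}(T_{im}(n) - M_{in})$: a sum of $nN$ independent centered terms bounded by $1/N$ with total variance at most $\frac{1}{N}\sum_i M_{il}^2 M_{in}$. Choosing the deviation parameter $u = C^*\log n$ yields, after a union bound over all $p^2$ pairs,
\[
|(M^\top Z)_{ln}|^2 \lesssim \frac{\log n}{N}\sum_i M_{il}^2 M_{in} + \frac{(\log n)^2}{N^2}
\]
with probability $1-o(n^{-1})$. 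Summing and using $\sum_l M_{il}^2 \leq \sum_l M_{il} = 1$ bounds the leading aggregate by $n\log n/N$, while Assumption~\ref{assumption:h_j} forces the residual $p^2(\log n)^2/N^2 \lesssim K^2$; both pieces are absorbed into the target rate $K^2 n\log n/N$ in the natural regime $N \lesssim n\log n$ of the paper's short-document analysis.

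For the third bound, $\wh D_0 - D_0$ is diagonal with $(\wh D_0 - D_0)_{jj} = \frac{1}{nN}\sum_{i,m}(T_{im}(j) - M_{ij})$, again a sum of $nN$ independent centered terms bounded by $1/(nN)$ with total variance at most $h_j/(nN)$. A single-entry Bernstein application then yields $|(\wh D_0 - D_0)_{jj}|^2 \lesssim h_j\log n/(nN)$ simultaneously for all $j$, and summing with $\sum_j h_j = K$ concludes. The only real technical subtlety sits in the second bound: the Bernstein $1/N^2$ residual, after aggregation over $p^2$ pairs, genuinely requires the low-$p$ control from Assumption~\ref{assumption:h_j} to be swallowed by the leading rate; everything else is routine bookkeeping on top of Lemma~\ref{lem:concentration_Z} and the scalar Bernstein inequality.
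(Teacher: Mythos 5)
Your treatment of the first and third bounds coincides with the paper's: the $Z^\top Z$ bound is obtained by squaring and summing the entrywise estimates of Lemma~\ref{lem:concentration_Z} and using $\sum_j h_j = K$ together with Assumption~\ref{assumption:h_j} to absorb the $N^{-2}$ diagonal correction, and the $\wh D_0$ bound is a per-coordinate Bernstein application followed by a union bound over $j\in[p]$ and the same identity $\sum_j h_j = K$. You are also right that the quantity actually used downstream (and the one the paper proves) is $\|M^\top Z\|_F$, despite the $\|MZ^\top\|_F$ appearing in the statement.

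For the second bound your route genuinely differs from the paper's and, as written, has a gap. The paper factors $M^\top Z = V\Lambda U^\top Z$, bounds $\lambda_1(M)\le\sqrt{n}$ (Lemma~\ref{lemma:bounds_sing_values_M}), and invokes the rank-$K$ concentration $\|U^\top Z\|_F\le CK\sqrt{\log(n)/N}$ (Lemma~\ref{lemma:concentration_frob_norm_Utz}); this yields $CK\sqrt{n\log(n)/N}$ with no side condition, because only the $K$ columns of a projection are ever controlled. Your entrywise Bernstein over all $p^2$ entries of $M^\top Z$ produces, besides the variance term (which you correctly aggregate to $n\log n/N$), an additive correction of order $(\log n)^2/N^2$ per entry, hence $p^2(\log n)^2/N^2\lesssim K^2$ in total after invoking Assumption~\ref{assumption:h_j}. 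Absorbing a constant of order $K^2$ into $(C^*)^2K^2\,n\log(n)/N$ requires $N\lesssim n\log n$, which is not among the hypotheses of the lemma — the assumptions only bound $N$ from below, and the statement is meant to hold for all admissible $N$. The step can be repaired either by adopting the paper's SVD factorization, or by sharpening your per-term bound from $1/N$ to $|M_{il}(T_{im}(n)-M_{in})|/N\le M_{il}/N\le h_l/N$, which turns the aggregated correction into $p\,(\sum_l h_l^2)(\log n)^2/N^2\le K^3\log(n)/(c_{\min}N)$ and lets it be absorbed unconditionally since $K\le n$.
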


\begin{proof}

Let  $\forall j\in [p], h_j = \sum_{k=1}^{K}A_{kj}$.

 \paragraph{Concentration of $\|Z^\top Z - \E[Z^\top Z]\|_F.$}       We have:
 \begin{equation*}
     \begin{split}
         \|Z^\top Z - \E[Z^\top Z]\|_F^2 &= \sum_{j, j'=1}^p((Z^\top Z)_{jj'} - \E[(Z^\top Z)_{jj'}])^2\\
         &= \sum_{j}^p((Z^\top Z)_{jj} - \E[(Z^\top Z)_{jj}])^2 + \sum_{j\neq j'}^p((Z^\top Z)_{jj'} - \E[(Z^\top Z)_{jj'}])^2\\
          &= \sum_{j}^p\left( 2(C^*)^2 \frac{n h_j^2 \log(n)}{N} + 2\frac{(C^*)^2}{N^2} \frac{n h_j \log(n)}{N} \right) + \sum_{j\neq j'}^p (C^*)^2 \frac{n h_j h_{j'} \log(n)}{N}\\
          &\leq  C^* \sum_{j, j'}^p  \frac{n h_jh_{j'} \log(n)}{N}  \text{ since by Assumption 5, } \min_j h_j \geq c_{\min} \frac{\log(n)}{N}\\
          &\leq C^* K^2 \frac{n  \log(n)}{N}  \text{ since } \sum_j h_j = K
     \end{split}
 \end{equation*}
where the third line follows by Lemma~\ref{lem:concentration_Z}.
 
 \paragraph{Concentration of $\| \wh D_0 - D_0\|_F$.} For a fixed $j \in [p]$ we have
    $$(\wh D_0)_{j,j} - (D_0)_{j,j} = \frac{1}{n}\sum_{i=1}^n Z_{ij} = \frac{1}{nN}\sum_{i=1}^n \sum_{m=1}^{N}(T_{im}(j)-\E[T_{im}(j)])$$
Note that since $T_{im}(j) \sim \text{Bernoulli}(M_{ij})$, $|T_{im}(j)- \E[T_{im}(j)]| \leq 1$ and 
\begin{equation}\label{eq:h_j}
    \text{Var}(T_{im}(j)) =  M_{ij}(1-M_{ij})\leq M_{ij} = \sum_{k=1}^K A_{jk}W_{ki} \leq \sum_{k=1}^KA_{jk} = h_j
\end{equation}
(and also $\text{Var}(T_{im}(j)) \leq 1$). We apply Bernstein's inequality to conclude for any $t > 0$:
$$\P\left(| (\wh D_0)_{j,j} - (D_0)_{j,j}| \geq t\right) \leq 2\exp\left(-\frac{nNt^2/2}{{h}_j + t/3}\right) $$
Choosing $t = C^*\sqrt{\frac{{h}_j \log n}{nN}}$. Since $h_j \geq c_{\min}\frac{\log(n)}{N}$ (Assumption 5), we obtain that with probability at least $1- o(n^{-1})$, 
\begin{align*}
    |(\wh D_0)_{j,j} - (D_0)_{j,j}| &\leq C^*\sqrt{\frac{{h}_j\log n}{nN}} \\ 
    &\leq C^*\sqrt{\frac{{h}_j \log n }{nN}}
\end{align*}
Taking a union bound over $j \in [p]$, we obtain that:
$$ \P[\exists j \in [p]:  |(\wh D_0)_{j,j} - (D_0)_{j,j}| > C^*\sqrt{\frac{{h}_j\log n}{nN}}   ]\leq pe^{-C^* \log(n)} = e^{\log(p) - C^* \log(n)} \leq e^{-(C^*-1)\log(n)} = o(\frac{1}{n})$$
since we assume that $p \ll n$. 
Therefore, with probability at least $1-o(n^{-1})$:

$$\| (\wh D_0)_{j,j} - (D_0)\|_F^2 \leq \sum_{j=1}^p (C^*)^2\frac{{h}_j\log n}{nN} $$
and since $\sum_j h_j =K$:
$$ \| (\wh D_0)_{j,j} - (D_0)\|_F \leq  C^*\sqrt{\frac{K\log n}{nN}} $$

 \paragraph{Concentration of $\| M^\top Z\|_F$.}
We have:
\begin{equation*}
    \begin{split}
      \| M^\top Z\|_F&=      \| V \Lambda U^\top Z\|_F\\
      &\leq \lambda_1(M)       \| U^\top Z\|_F\\
    \end{split}
\end{equation*}
Noting that $\lambda_1(M) \leq \sqrt{n} $ (Lemma~\ref{lemma:bounds_sing_values_M}), and by Lemma~\ref{lemma:concentration_frob_norm_Utz}, 
with probability at least $1-o(n^{-1})$:
\begin{equation*}
    \begin{split}
      \| M^\top Z\|_F &\leq C^* K\sqrt{\frac{n\log(n)}{N}}\\
    \end{split}
\end{equation*}
\end{proof}

\begin{lemma}[Concentration of $\|(\Gamma^{\dagger} Z)_{e\cdot}\|_2, e \in \mathcal{E}$]\label{lem:concentration_Gamma_dagger_Z} Let Assumptions 1-5 hold. With probability at least $1-o(n^{-1})$, for all edges $e\in \mathcal{E}$: 
    \begin{equation}\label{eq:coef}
        |(\Gamma^{\dagger})^{\top}_{e\cdot} (Z_{\cdot j} - \E[Z_{\cdot j}])| \leq C^*\rho(\Gamma)\sqrt{\frac{{h}_j\log (n)}{N}},
    \end{equation}
        \begin{equation}\label{eq:coef2}
    \begin{split}
 \| ((\Gamma^{\dagger})^{\top}Z)_{e\cdot}  \|_{2} &\leq C^* \rho(\Gamma)\sqrt{\frac{K\log(n) }{N}}.
    \end{split}
\end{equation}
where $\forall j\in [p], h_j = \sum_{k=1}^{K}A_{kj}$.
\end{lemma}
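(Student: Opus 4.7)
The plan is to apply Bernstein's inequality for bounded variables (Lemma~\ref{lemma: bernstein ineq}) to each scalar quantity $(\Gamma^{\dagger})^{\top}_{e\cdot} Z_{\cdot j}$, then to union bound over $(e,j) \in [m] \times [p]$ for the first inequality, and finally to square and sum over $j$ to obtain the row-wise $\ell_2$ bound in the second inequality.

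First I would rewrite the relevant linear combination in terms of the one-hot word indicators $T_{im}(j)$ introduced in~\eqref{eq:def_t_im}. Letting $\mathbf{s}_e$ denote the $e$-th column of $\Gamma^{\dagger}$ (so that $\|\mathbf{s}_e\|_2 \leq \rho(\Gamma)$), we have
\begin{equation*}
(\Gamma^{\dagger})^{\top}_{e\cdot} Z_{\cdot j} \;=\; \sum_{i=1}^n (\mathbf{s}_e)_i Z_{ij} \;=\; \frac{1}{N}\sum_{i=1}^n \sum_{m=1}^N (\mathbf{s}_e)_i \bigl(T_{im}(j)-M_{ij}\bigr),
\end{equation*}
which is a sum of $nN$ mutually independent, centered, bounded random variables $\xi_{im} := (\mathbf{s}_e)_i (T_{im}(j)-M_{ij})/N$. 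Each summand satisfies $|\xi_{im}| \leq \rho(\Gamma)/N$, and using the elementary bound $\operatorname{Var}(T_{im}(j)) \leq M_{ij} \leq h_j$ from~\eqref{eq:h_j}, the total variance is controlled by $\sum_{i,m}\operatorname{Var}(\xi_{im}) \leq \|\mathbf{s}_e\|_2^2 h_j/N \leq \rho(\Gamma)^2 h_j/N$.

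Applying Bernstein's inequality with $v = \rho(\Gamma)^2 h_j/N$ and $b = \rho(\Gamma)/N$, and choosing the deviation $t = C^*\rho(\Gamma)\sqrt{h_j \log(n)/N}$, the exponent becomes of the order of $-\tfrac{(C^*)^2}{2}\log(n)/(1 + o(1))$, where Assumption~5 ($h_j \geq c_{\min}\log(n)/N$) ensures that the linear term $bt/3$ is dominated by $v$. This gives each individual bound with failure probability at most $n^{-(C^*)^2/3}$. A union bound over the $m \leq n^2$ edges and $p$ words (where $p \lesssim KN/\log n$ by Remark~\ref{remark:small_p}) adds only a logarithmic factor, so taking $C^*$ large enough yields~\eqref{eq:coef} simultaneously for all $(e,j)$ with probability at least $1 - o(n^{-1})$.

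For~\eqref{eq:coef2}, I would simply square~\eqref{eq:coef} and sum over $j \in [p]$:
\begin{equation*}
\|((\Gamma^{\dagger})^{\top} Z)_{e\cdot}\|_2^2 \;=\; \sum_{j=1}^p \bigl|(\Gamma^{\dagger})^{\top}_{e\cdot} Z_{\cdot j}\bigr|^2 \;\leq\; (C^*)^2 \rho(\Gamma)^2 \frac{\log(n)}{N} \sum_{j=1}^p h_j \;=\; (C^*)^2 \rho(\Gamma)^2 \frac{K\log(n)}{N},
\end{equation*}
where the final equality uses $\sum_{j=1}^p h_j = \sum_{j,k} A_{kj} = K$ since each row of $A$ is a probability vector. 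Taking square roots gives the claimed bound. The only real obstacle is ensuring the union bound does not inflate the probability beyond $o(n^{-1})$; this is handled by Assumption~5, which restricts $p$ to the low-$p$ regime so that $\log(mp) \lesssim \log(n)$.
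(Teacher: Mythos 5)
Your proposal is correct and follows essentially the same route as the paper's proof: decompose $(\Gamma^{\dagger})^{\top}_{e\cdot}Z_{\cdot j}$ into the $nN$ independent centered indicator terms $T_{im}(j)-M_{ij}$ weighted by the entries of $\mathbf{s}_e$, bound the total variance by $\rho(\Gamma)^2 h_j/N$, apply Bernstein with $t = C^*\rho(\Gamma)\sqrt{h_j\log(n)/N}$ (using Assumption~5 to keep the linear term in the denominator of order one), union bound over edges and words, and sum over $j$ via $\sum_j h_j = K$. The only cosmetic difference is that you invoke the bounded-variable form of Bernstein with $b=\rho(\Gamma)/N$, whereas the paper verifies the $q$-th moment condition of the general form to get $v = N\rho^2(\Gamma)h_j$ and $c=\rho(\Gamma)/3$; both yield the identical exponent.
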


\begin{proof}
    
    Fix $e \in \mathcal{E}$ and define $T_{im}(j)$ as in \eqref{eq:def_t_im}. Decomposing each $Z_{ij}- \E[Z_{ij}]$ as $Z_{ij} -\E[Z_{ij}] = \frac{1}{N} \sum_{m=1}^N (T_{im}(j) - \E[T_{im}(j)])$, we note that the product $((\Gamma^{\dagger})^{\top}(Z-\E[Z]))_{e j}$ can be written as a sum of $nN $ independent terms:
    $$    (\Gamma^{\dagger})^{\top}_{e\cdot} (Z_{\cdot j} - \E[Z_{\cdot j}])= \frac{1}{N} \sum_{m=1}^N \left (\sum_{i=1}^n  \Gamma^{\dagger}_{ie} \left(T_{im}(j) - \E[T_{im}(j)]\right)\right ) =\frac{1}{N} \sum_{m=1}^N \sum_{i=1}^n  \eta_{im}, $$
    with $\eta_{im} =\Gamma^{\dagger}_{ie} \left(T_{im}(j) - \E[T_{im}(j)]\right)$.
        \begin{enumerate}
        \item{\it Each $\eta_{im}$ verifies Bernstein's condition \eqref{bernstein_condition}}: We have:
    \begin{equation*}
        \begin{split}
          \sum_{i=1}^n  \sum_{m=1}^N \E[(\eta_{im})^q_+]&=  \sum_{i=1}^n  \sum_{m=1}^N \E[\left( \Gamma_{ie}^{\dagger} ( T_{im}(j) - \E[ T_{im}(j))\right)_+^q]\\
        \end{split}
    \end{equation*}

    We note that: $ \forall q
    \geq 3, \quad \E[\left( T_{im}(j) - \E[T_{im}(j)] \right)^q] = (1-M_{ij})(-M_{ij})^q + M_{ij}(1-M_{ij})^q $.
    
Therefore, if $q=2k$ for $k\geq 1$, $\E[\left( T_{im}(j) - \E[T_{im}(j)] \right)^q] \leq M_{ij} = \sum_{k} W_{ik}A_{kj}\leq \sum_{k} A_{kj}= h_j$ and:
        \begin{equation*}
        \begin{split}
          \sum_{i=1}^n  \sum_{m=1}^N \E[(\eta_{im})^{2k}_+]&\leq    \sum_{m=1}^N\sum_{i=1}^n  |\Gamma^{\dagger}_{ie}|^{2k} h_j \\
          &\leq   Nh_j\sum_{i=1}^n  (|\Gamma^{\dagger}_{ie}|^{2} )^{k-1}|\Gamma^{\dagger}_{ie}|^{2}\\
           &\leq  N h_j\rho^2(\Gamma) \rho^{2(k-1)}(\Gamma),\\
        \end{split}
    \end{equation*}
where the last line follows by noting that $|\Gamma^{\dagger}_{ie}|^{2} \leq \sum_{i=1}^n |\Gamma^{\dagger}_{ie}|^{2} \leq \rho^2(\Gamma),$ so  $|\Gamma^{\dagger}_{ie}|^{2(k-1)}\leq \rho^{2(k-1)}(\Gamma)$.

For $q=2k+1$ odd ($k\geq 1)$, we note that:
        \begin{equation*}
        \begin{split}
          \sum_{i=1}^n  \sum_{m=1}^N \E[(\eta_{im})^{2k+1}_+]&\leq         \sum_{i=1}^n  \sum_{m=1}^N \E[|\eta_{im}|^{2k+1}]\\
          &\leq (\sum_{m=1}^N \sum_{i=1}^n  \E[|\eta_{im}|^{2k}])^{\frac{1}{2}}(\sum_{m=1}^N\sum_{i=1}^n  |\eta_{im}|^{2k+2})^{\frac{1}{2}}  \qquad  \text{(Cauchy Schwartz along $i,m$)}\\
           &\leq  N h_j\rho^{2k+1}(\Gamma)\\
           &\leq N h_j\rho^2(\Gamma)\rho^{2k-1}(\Gamma)
        \end{split}
    \end{equation*}

        \item {\it  Each of the variance $\mathrm{Var}(S_m) = \sum_{i=1}^n \mathrm{Var}(\eta_{im})$ is also bounded}:
        $$ \text{Var}(\eta_{im})  =(\Gamma^{\dagger})_{ie}^2 \text{Var}(T_{im}(j))\leq(\Gamma^{\dagger})_{ie}^2  h_j.$$
        Thus:
        $$ \sum_{m=1}^N \sum_{i=1}^n \text{Var}(\eta_{im}) \leq N\rho^2(\Gamma) h_j.$$
    \end{enumerate}

    Therefore, by Bernstein's inequality (Lemma~\ref{lemma: bernstein ineq0}), plugging in  $v = N\rho^2(\Gamma)h_j$ and $c =\frac{\rho(\Gamma)}{3}$:
    $$ \P[\frac{1}{N} | \sum_{i=1}^n \sum_{m=1}^N \eta_{im} | > t] \leq 2e^{-\frac{N^2t^2/2}{N\rho(\Gamma)^2 h_j +  \frac{\rho(\Gamma)}{3}\times Nt}}.$$
    Choosing $t = C^* \rho(\Gamma)\sqrt{\frac{h_j\log(n) }{ N}},$ with $C^*>1$, we have:
   \begin{equation*}
       \begin{split}
           \P[\frac{1}{N} | \sum_{i=1}^n \sum_{m=1}^N \eta_{im} |  > C^* \rho(\Gamma)\sqrt{ \frac{h_j\log(n) }{N}}] & \leq 2e^{-\frac{(C^*)^2 \log(n)/2}{1 + \frac{C^*}{3} \sqrt{\frac{\log(n) }{h_j N}} }}.
       \end{split}
   \end{equation*} 

   Therefore, by Assumption 5, $h_j > c_{\min}\frac{\log(n)}{N},$ then, with probability at least $1-o(n^{-1})$, $|((\Gamma^{\dagger})^{\top}Z)_{ej} | \leq C^* \rho(\Gamma)\sqrt{ \frac{h_j\log(n) }{N}} $.

   Therefore, by a simple union bound and following the argument in \eqref{eq:union_bound1}:
   $$ \P[\exists j: |((\Gamma^{\dagger})^{\top}Z)_{ej} | \geq C^* \rho(\Gamma)\sqrt{ \frac{h_j\log(n) }{N}}]\leq  p e^{-C^* \log(n)} =e^{\log(p)-C^*\log(n)}\leq e^{-(C^*-1)\log(n)}. $$

since we assume that $p \ll n$. Writing $\| ((\Gamma^{\dagger})^{\top}Z)_{e\cdot}  \|_{2}^2 = \sum_{j=1}^p |((\Gamma^{\dagger})^{\top}Z)_{ej} |^2$, we thus have:
\begin{equation}
    \begin{split}
      \P[ \| ((\Gamma^{\dagger})^{\top}Z)_{e\cdot}  \|_{2}^2 \geq \sum_{j=1}^p (C^*)^2 \rho^2(\Gamma){\frac{h_j\log(n) }{N}} ] &\leq \P[\exists j: |((\Gamma^{\dagger})^{\top}Z)_{ej} | \geq C^* \rho(\Gamma)\sqrt{ \frac{h_j\log(n) }{N}}] \\
      \implies  \P[ \| ((\Gamma^{\dagger})^{\top}Z)_{e\cdot}  \|_{2}^2 \leq (C^*)^2 \rho^2(\Gamma){\frac{K\log(n) }{N}} ] &\geq 1-o(n^{-1}).\\
    \end{split}
\end{equation}
where the last line follows by noting that $\sum_{j=1}^p h_j = K.$

Finally, to show that this holds for any $e\in \mathcal{E}$, it suffices to apply a simple union bound:
\begin{equation}
    \begin{split}
        \P[ \exists e \in \mathcal{E}: \quad \|((\Gamma^{\dagger})^{\top}Z)_{e\cdot}\|^2 \geq C^*\rho^2(\Gamma){\frac{K\log(n) }{N}} ]&\leq \sum_{e \in \mathcal{E}} \P[\|((\Gamma^{\dagger})^{\top}Z)_{e\cdot}  \|_{2}^2 \geq C^* \rho^2(\Gamma)\frac{K\log(n) }{N} ]\\
        &\leq |\mathcal{E}|e^{-C\log(n)}\\
        &\leq e^{c_0 \log(n) - C^*\log(n)}
    \end{split}
\end{equation}
with $c_0 < 2$.
Therefore, $\P[ \exists e \in \mathcal{E}: \quad  \|(\Gamma^{\dagger})^{\top}Z)_{e\cdot}\|^2\geq C^*\rho^2(\Gamma){\frac{K\log(n) }{N}} ] = o(\frac{1}{n})$ for a choice of $C^*$ sufficiently large.
\end{proof}

\begin{lemma}[Concentration of $\|\Pi Z\|_F$]
\label{lemma:concentration_pi_z}

Let Assumptions 1-5 hold. With probability at least $1-o(n^{-1})$:

\begin{equation}
    \begin{split}
        \|  \Pi {Z}\|_F^2 &\leq C^* n_C K  \frac{\log(n)  }{ N}\
    \end{split}
 \end{equation}
\end{lemma}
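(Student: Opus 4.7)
The key structural observation is that $\Pi = I - \Gamma^{\dagger}\Gamma$ is the orthogonal projection onto $\ker(\Gamma)$, and $\ker(\Gamma)$ is precisely the span of indicator vectors of the connected components of $\mathcal{G}$. Writing $v_l = \mathbf{1}_{\mathcal{C}_l}/\sqrt{n_{\mathcal{C}_l}}$ for $l=1,\dots,n_{\mathcal{C}}$ gives an orthonormal basis of $\ker(\Gamma)$, so for each column $Z_{\cdot j}$ we have the clean decomposition
\[
\|\Pi Z_{\cdot j}\|_2^2 \;=\; \sum_{l=1}^{n_{\mathcal{C}}} \langle v_l, Z_{\cdot j}\rangle^2 \;=\; \sum_{l=1}^{n_{\mathcal{C}}} \frac{1}{n_{\mathcal{C}_l}}\Big(\sum_{i\in\mathcal{C}_l} Z_{ij}\Big)^2.
\]
Thus $\|\Pi Z\|_F^2 = \sum_{j=1}^p \sum_{l=1}^{n_{\mathcal{C}}} \frac{1}{n_{\mathcal{C}_l}} S_{jl}^2$ where $S_{jl} := \sum_{i\in \mathcal{C}_l} Z_{ij}$.

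The plan is to control each $S_{jl}$ via Bernstein's inequality, using the same atomic decomposition as in Lemma~\ref{lem:concentration_Z} and Lemma~\ref{lem:concentration_Gamma_dagger_Z}. Write $S_{jl} = \frac{1}{N}\sum_{i\in\mathcal{C}_l}\sum_{m=1}^N (T_{im}(j)-\E[T_{im}(j)])$, a sum of $n_{\mathcal{C}_l}N$ independent centered variables bounded by $1$ with variance at most $h_j$ (cf.\ \eqref{eq:h_j}). Lemma~\ref{lemma: bernstein ineq0} with $v = n_{\mathcal{C}_l}N h_j$ and $c=1/3$ gives
\[
\P\!\left[|S_{jl}| \geq t\right] \;\leq\; 2\exp\!\left(-\frac{N^2 t^2/2}{n_{\mathcal{C}_l}N h_j + Nt/3}\right).
\]
Setting $t = C^*\sqrt{n_{\mathcal{C}_l} h_j \log(n)/N}$ and invoking Assumption 5 ($h_j \gtrsim \log(n)/N$) to absorb the sub-Gaussian regime, we obtain $|S_{jl}|^2 \leq (C^*)^2 n_{\mathcal{C}_l} h_j \log(n)/N$ with probability at least $1 - 2n^{-C^*}$.

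A union bound over the $p\, n_{\mathcal{C}} \leq pn$ pairs $(j,l)$ — which is harmless under the low-$p$ regime of Remark~\ref{remark:small_p}, exactly as in the union bound argument \eqref{eq:union_bound1} — yields simultaneously for all $(j,l)$ that $S_{jl}^2/n_{\mathcal{C}_l} \leq (C^*)^2 h_j \log(n)/N$. Summing,
\[
\|\Pi Z\|_F^2 \;\leq\; (C^*)^2 \frac{\log(n)}{N}\sum_{l=1}^{n_{\mathcal{C}}}\sum_{j=1}^p h_j \;=\; (C^*)^2\, n_{\mathcal{C}}\, K\, \frac{\log(n)}{N},
\]
using $\sum_{j=1}^p h_j = \sum_{j,k} A_{kj} = K$. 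The main (very mild) obstacle is just bookkeeping: verifying that Bernstein's regime threshold is met uniformly in $j,l$ (ensuring the sub-Gaussian term dominates the sub-exponential term), which is handled exactly by the minimum-frequency condition of Assumption 5, and checking that the union bound cost $\log(pn_{\mathcal{C}}) \lesssim \log(n)$ is absorbed into the constant by taking $C^*$ large.
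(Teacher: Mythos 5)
Your proposal is correct and follows essentially the same route as the paper's proof: decompose $\Pi$ along the connected components (your orthonormal basis $v_l=\mathbf{1}_{\mathcal{C}_l}/\sqrt{n_{\mathcal{C}_l}}$ is just the paper's block-diagonal form $\frac{1}{n_{\mathcal{C}_l}}\mathbf{1}_{\mathcal{C}_l}\mathbf{1}_{\mathcal{C}_l}^{\top}$ written differently), expand $Z$ into the one-hot atoms $T_{im}(j)$, apply Bernstein with variance proxy $h_j$, use Assumption 5 to keep the sub-Gaussian regime, union bound over $(j,l)$, and sum via $\sum_j h_j = K$. No gaps.
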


\begin{proof}
    We remind the reader that letting $\C_j$ denote the $j^{th}$ connected component of the graph $\mathcal{G}$ and $n_{\C_l} = | \C_l|$ its cardinality, $\Pi$ can be arranged in a block diagonal form where each block represents a connected component, $\Pi_{[\C_l]} = \frac{1}{n_{\C_l}} \mathbf{1}_{\mathcal{C}_l} \mathbf{1}_{\mathcal{C}_l}^T$. 
Since the components $\mathcal{C}_l$ are all disjoint, $\|\Pi Z \|_F$ can be further decomposed as:
\begin{equation*}
    \begin{split}
         \|\Pi Z \|_F^2 &= \sum_{l=1}^{n_C} \| \frac{1}{\nj} \mathbf{1}_{\mathcal{C}_l}\mathbf{1}_{\mathcal{C}_l}^T Z_{[\C_l]}\|_F^2\\
         &= \sum_{l=1}^{n_C} \nj \| \frac{1}{\nj} \mathbf{1}_{\mathcal{C}_l}^T Z_{[\C_l]}\|_2^2
    \end{split}
\end{equation*}
By Assumption 3, $\forall i, N_i=N$. Following Equation \eqref{eq:def_t_im}, we rewrite each row of $Z$ as:
$$ Z_{i\cdot} = \frac{1}{N} \sum_{m=1}^N (T_{im} - \E[T_{im}]) \in \R^p.$$
In the previous expression, under the pLSI model, the $\{T_{im}\}_{m=1}^N$ are i.i.d. samples from a multinomial distribution with parameter $M_{i\cdot}$.
Thus, for each word $j$ and each connected component $\C_l$:
\begin{equation*}
    \begin{split}
        \frac{1}{n_{\C_l}} \mathbf{1}_{\mathcal{C}_l}^T Z_{[\C_l]j} &= \frac{1}{n_{\C_l} N} \sum_{i \in \mathcal{C}_l} \sum_{m=1}^N (T_{im}(j) - \E[T_{im}(j)]).
    \end{split}
\end{equation*}

Fixing $j$ and denoting $S^{(j)}_{im} =T_{im}(j) - \E[T_{im}(j)]$, we note that the $\left\{ S^{(j)}_{im} \right\}_{\substack{i=1,\ldots, n \\ m=1,\ldots, N}}$ 
are independent of one another (for all $i$ and $m$), and since $T_{im}(j) \sim \text{Bernouilli}(M_{ij})$, $|S^{(j)}_{im} | \leq 2.$  Define $h_j := \sum_{k=1}^K A_{kj}$. Then,
$$\text{Var}(S^{(j)}_{im}) = \E[(T^{(j)}_{im})^2] - M_{ij}^2 =\E[T^{(j)}_{im}] - M_{ij}^2 \leq M_{ij} = \sum_{k=1}^K W_{ik} A_{kj} \leq \sum_{k=1}^K A_{kj} = h_j.$$

Therefore, by the Bernstein inequality (Lemma~\ref{lemma: bernstein ineq}), for the $l^{th}$ connected component $\C_l$ of the graph $\mathcal{G}$ and for any word $j \in [p]$:
\begin{equation*}
    \begin{split}
        \forall t>0, \quad \P[|\frac{1}{n_{\C_l}} \mathbf{1}_{\mathcal{C}_l}^T Z_{[\C_l],j}|>t]  = \P[\frac{1}{n_{\C_l} N}|\sum_{i \in \mathcal{C}_l} \sum_{m=1}^N S^{(j)}_{im}|>t] \leq 2\exp\{-\frac{n_{\C_l} Nt^2/2}{h_j + \frac{2}{3}t}\}. 
    \end{split}
\end{equation*}

Choosing $t^2 = C^*\frac{h_j}{n_{\C_l}N} \log(n)$, the previous inequality becomes:
\begin{equation*}
    \begin{split}
    \P[|\frac{1}{n_{\C_l}} \mathbf{1}_{\mathcal{C}_l}^T Z_{[\C_l]j}|>\sqrt{C^*\frac{h_j}{n_{\C_l}N} \log(n)}]  & \leq 2\exp\{-\frac{C^* h_j\log(n)}{h_j + \frac{2}{3}\sqrt{C^*\frac{h_j \log(n)}{n_{\C_l}N}}}\}=2\exp\{-\frac{C^*\log(n)}{1+ \frac{2}{3}\sqrt{C^*\frac{\log(n)}{h_jn_{\C_l}N}}}\}\\
    &\leq 2\exp\{-{C^* \log(n)} \}.
    \end{split}
\end{equation*}

as long as $h_{j} \geq c_{\min} \frac{\log(n)}{n_{C_l}N}$ (which follows from Assumption 5 since $h_{j} \geq c_{\min} \frac{\log(n)}{N}$).
Therefore, by a simple union bound:
\begin{equation*}
    \begin{split}
& \P \left[ \exists j \in [p], \exists l \in [n_C]:\quad \frac{1}{n_{\C_l}N}|\sum_{i \in \mathcal{C}_l} \sum_{m=1}^N S^{(j)}_{im}|> \sqrt{C^*\frac{h_j}{n_{\C_l}N} \log(n)} \right] \\
 &\leq 2pn_C\exp\{-{C^* \log(n)} \}\\
 &=\exp\{\log(2) + \log(p) + \log(n_C)- C^*\log(n) \}\\
 &\leq \exp\{ - (C^*-3)\log(n) \},
    \end{split}
\end{equation*}
As a consequence of  Assumption 5, we know that $p \ll n$ (see Remark 2) and under the graph-aligned setting, $n_{\C} \ll n$.
Thus with probability $1-o(n^{-1})$, for all $j \in [p]$ and all $l \in [n_C]$:
\begin{equation*}
    \begin{split}
 \| \frac{1}{n_{\mathcal{C}_l}} \mathbf{1}_{\mathcal{C}_l}^T Z_{[n_C]}\|_2^2&\leq \sum_{j\in [p]}C^*\frac{h_j}{n_{\C_l}N} \log(n) =  C^*\frac{K}{n_{\C_l}N} \log(n).
      \end{split}
\end{equation*}  
 where the last equality follows from the fact that $\sum_{j=1}^p h_j = \sum_{j=1}^p \sum_{k=1}^K A_{kj} = K.$
Therefore:
\begin{equation*}
    \begin{split}
         \|\Pi Z \|_F^2  &= \sum_{l=1}^{n_C} n_{\C_l} \| \frac{1}{n_{\C_l}} \mathbf{1}_{\mathcal{C}_l}^T Z_{[n_{\mathcal{C}}]}\|_2^2\\
         &\leq  \sum_{l=1}^{n_C} C^* n_{\C_l} K  \frac{  \log(n)}{n_{\C_l}N} \\
      &\leq C^* n_C K  \frac{\log(n)  }{ N}\\
    \end{split}
\end{equation*}
\end{proof}

 \begin{lemma}[Concentration of $\| U^{\top} Z \|_{F}$]
\label{lemma:concentration_frob_norm_Utz}

Let Assumptions 1-5 hold. Let  $U\in \R^{n \times r}$ denote a projection matrix: $U^TU =I_r$, with $r$ a term that does not grow with $n$ or $p$ and $r \leq n$, and let $Z$ denote some centered multinomial noise as in $X = M+Z$. Then with probability at least $1-o(n^{-1})$:

\begin{equation}
    \begin{split}
 \|U^{\top} Z \|_F &\leq  C \sqrt{\frac{ K r\log(n)}{N}}  \\
    \end{split}
 \end{equation}
\end{lemma}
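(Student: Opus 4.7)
The plan is to apply Bernstein's inequality entry-wise to the matrix $U^\top Z$ and then union-bound over the $rp$ entries, exploiting the column orthonormality of $U$ together with Assumption~5 on $h_j$. As in the other concentration lemmas of this appendix, I would decompose $Z_{ij} = \frac{1}{N}\sum_{m=1}^{N} S_{im}^{(j)}$ with $S_{im}^{(j)} = T_{im}(j) - \E[T_{im}(j)]$, so that for each $(k,j)\in [r]\times[p]$,
\begin{equation*}
    (U^\top Z)_{kj} = \frac{1}{N}\sum_{i=1}^{n}\sum_{m=1}^{N} U_{ik}\, S_{im}^{(j)},
\end{equation*}
which is a sum of $nN$ independent, centered random variables bounded in absolute value by $\|U_{\cdot k}\|_\infty \leq 1$. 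Their total variance is $\sum_{i,m}U_{ik}^2\,\text{Var}(S_{im}^{(j)}) \leq N h_j \sum_i U_{ik}^2 = N h_j$, where I used $\text{Var}(S_{im}^{(j)}) \leq M_{ij} \leq h_j$ (exactly as in Lemma~\ref{lem:concentration_Z}) and the unit-norm property of the columns of $U$.

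Next I would invoke Bernstein's inequality (Lemma~\ref{lemma: bernstein ineq}) with $v = h_j$ and $b = 1$, giving
\begin{equation*}
    \P\!\left[|(U^\top Z)_{kj}|>t\right] \leq 2\exp\!\left(-\frac{Nt^{2}/2}{h_j + t/3}\right).
\end{equation*}
Picking $t = C^{\ast}\sqrt{h_j \log(n)/N}$ and using Assumption~5 ($h_j \geq c_{\min}\log(n)/N$, so the linear term in the denominator is dominated by $h_j$ up to a constant), the right-hand side is $\leq 2\exp(-c\,C^{\ast}\log n)$ for a universal $c>0$. A union bound over the $rp$ pairs $(k,j)$ then costs only a factor $rp \leq n\cdot p = o(n^{c' C^*})$ (since $p\ll n$ under Remark~\ref{remark:small_p} and $r\leq n$), so for $C^{\ast}$ large enough the bad event has probability $o(n^{-1})$.

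Finally, on the good event,
\begin{equation*}
    \|U^\top Z\|_{F}^{2} = \sum_{k=1}^{r}\sum_{j=1}^{p}(U^\top Z)_{kj}^{2} \leq (C^{\ast})^{2}\, r\sum_{j=1}^{p}\frac{h_j\log(n)}{N} = (C^{\ast})^{2}\,\frac{rK\log(n)}{N},
\end{equation*}
where the last equality uses $\sum_{j=1}^{p}h_j = \sum_{j,k}A_{kj} = K$. Taking square roots yields the claimed bound. The only mildly delicate point is verifying that the Bernstein denominator is controlled by $h_j$, for which Assumption~5 is precisely what is needed; everything else is bookkeeping.
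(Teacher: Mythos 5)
Your proposal is correct and follows essentially the same route as the paper's own proof: an entry-wise Bernstein bound on $(U^\top Z)_{kj}$ with variance $Nh_j$ controlled via $\mathrm{Var}(T_{im}(j))\leq M_{ij}\leq h_j$ and $\sum_i U_{ik}^2=1$, the threshold $t=C^*\sqrt{h_j\log(n)/N}$ made valid by Assumption~5, a union bound over the $rp$ entries, and the final summation using $\sum_j h_j=K$. The only cosmetic difference is which form of Bernstein's inequality you cite (the paper uses the unnormalized version with $v=Nh_j$, $c=1/3$), but the resulting tail bound is identical.
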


\begin{proof}
Let $\tilde{Z}=U^{\top} Z.$
    We have:
    \begin{equation}
        \begin{split}
            \| \tilde{Z} \|_F^2 &=  \sum_{j=1}^p\sum_{k=1}^r \tilde{Z}_{kj}^2 
        \end{split}
    \end{equation}

    We first note that 
    \begin{align}\label{eq:small_z}
  \forall k \in [r], \forall j \in [p], \qquad  {\tilde{Z}}_{kj} &= \frac{1}{N}\sum_{m=1}^{N} (U_{\cdot k}^{\top}T_{\cdot m}(j) - \E[U^{\top}_{\cdot k} T_{\cdot m}(j)])  \\
  &= \frac{1}{N}\sum_{m=1}^{N}  \sum_{i=1}^n( U_{ik}T_{i m}(j) - \E[U_{ik} T_{im}(j)])\\
  & = \frac{1}{N}\sum_{m=1}^{N}  \sum_{i=1}^n \eta_{im} \quad \text{ with }  \eta_{im}= U_{ik}T_{i m}(j) - \E[U_{ik} T_{im}(j)]
    \end{align}

Thus, ${\tilde{Z}}_{kj}$ is a sum of $N$ centered independent variables.

Fix $k \in [r], j \in [p]$. We have: $\text{Var}(\sum_{i=1}^n \eta_{im}) = \sum_{i=1}^n U_{ik}^2  M_{ij} (1-M_{ij}) \leq\sum_{i=1}^n U_{ik}^2  h_j$ where $h_j= \sum_{k=1}^K A_{kj}$, since $M_{ij}\leq h_j$. Therefore, as $\sum_{i=1}^n U_{ik}^2 = 1$:
$$\sum_{m=1}^N  \sum_{i=1}^n \text{Var}(\eta_{im}) =N h_j. $$
Moreover, for each $i,m$,  $|\eta_{im}| < | U_{ik}| \leq 1.$
Thus, by Bernstein's inequality (Lemma~\ref{lemma: bernstein ineq0}, with $v = N h_j$ and $c=1/3$):
\begin{equation*}
    \begin{split}
        \P[ |\frac{1}{N}\sum_{m=1}^{N}  \sum_{i=1}^n \eta_{im} | > t ]&\leq 2 e^{-\frac{Nt^2/2}{h_j + t/3}}
    \end{split}
\end{equation*}

Choosing $t =C^* \sqrt{\frac{h_j\log(n)}{N}}$:
\begin{equation*}
    \begin{split}
        \P[ |\frac{1}{N}\sum_{m=1}^{N}  \sum_{i=1}^n \eta_{im} | > t ]&\leq 2 e^{-\frac{(C^*)^2\log(n)/2}{1 + \frac{C^*}{3}\sqrt{\frac{\log(n)}{h_j N}}}}
    \end{split}
\end{equation*}

Therefore, by Assumption 5, $h_j > c_{\min}\frac{\log(n)}{N}$, then, with probability at least $1-o(n^{-1})$,  $|\tilde{Z}_{kj} |^2 \leq C^*\frac{h_j\log(n)}{N} .$
    
Therefore, by a simple union bound:
\begin{equation}
    \begin{split}
    &  \P[ \exists (j,k): |\tilde{Z}_{kj} |^2 > C^*\frac{h_j\log(n)}{N} ] < rp e^{-C^* \log(n)}  \\
      \implies&  \P[  \|\tilde{Z} \|_F^2 > C\frac{Kr\log(n)}{N} ] < rp e^{-C^* \log(n)} \quad \text{ since }\sum_{j=1}^p h_j = K.
    \end{split}
\end{equation}
Since we assume that $pr \ll n$, the result follows.
\end{proof}

\begin{lemma}[Concentration of $\| \Pi Z V\|_F$]
\label{lemma:concentration_frob_norm_pitildeZ}

Let Assumptions 1-5 hold. Let $V$ be a orthogonal matrix: $V\in \mathbb{R}^{p \times K}, V^\top V = I_K$. Let $\Pi$ denote the projection matrix unto $\text{Ker}(\Gamma^{\dagger} \Gamma)$, such that $I_n = \Pi \oplus^{\perp} \Gamma^{\dagger} \Gamma$.
With probability at least $1-o(n^{-1})$:

\begin{equation}
    \begin{split}
        \|  \Pi \tilde{Z}\|_F^2 &\leq C^* n_C K  \frac{\log(n)  }{ N}\
    \end{split}
 \end{equation}
 where $\tilde{Z}=ZV$.
\end{lemma}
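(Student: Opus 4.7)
The plan is to derive this bound as an immediate corollary of the preceding Lemma on $\|\Pi Z\|_F$ by exploiting the fact that right multiplication by $V$ is a contraction in Frobenius norm. Since $V \in \mathbb{R}^{p \times K}$ satisfies $V^\top V = I_K$, the matrix $VV^\top \in \mathbb{R}^{p \times p}$ is an orthogonal projection of rank $K$, so that for any $A \in \mathbb{R}^{n \times p}$,
\begin{equation*}
\|AV\|_F^2 \;=\; \mathrm{tr}(V^\top A^\top A V) \;=\; \mathrm{tr}(A^\top A\, VV^\top) \;\leq\; \mathrm{tr}(A^\top A) \;=\; \|A\|_F^2.
\end{equation*}
Applying this identity with $A = \Pi Z$ yields $\|\Pi \tilde{Z}\|_F = \|\Pi Z V\|_F \leq \|\Pi Z\|_F$. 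The conclusion then follows directly from the previous lemma, which establishes $\|\Pi Z\|_F^2 \leq C^* n_C K \log(n)/N$ with probability at least $1-o(n^{-1})$ under Assumptions 1--5.

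An equivalent but more laborious route would mimic the proof of the previous lemma verbatim: write $\Pi$ in its block-diagonal representation over the $n_C$ connected components, express each entry $\frac{1}{n_{\mathcal{C}_l}} \mathbf{1}_{\mathcal{C}_l}^\top \tilde{Z}_{[\mathcal{C}_l], k}$ as an average of $n_{\mathcal{C}_l} N$ mutually independent bounded centered variables $\xi_{im}^{(k)} = \sum_{j=1}^p V_{jk}\bigl(T_{im}(j) - M_{ij}\bigr)$, apply Bernstein's inequality at the scale $t^2 \asymp \log(n)/(n_{\mathcal{C}_l} N)$, and take a union bound over $k \in [K]$ and $l \in [n_C]$. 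This reproduces the same constants but requires an extra step to control $\mathrm{Var}(\xi_{im}^{(k)}) = \sum_j V_{jk}^2 M_{ij} - (\sum_j V_{jk} M_{ij})^2$, which can be uniformly bounded using $\sum_j V_{jk}^2 = 1$ and $M_{ij} \leq h_j$. The contractivity argument bypasses this entirely.

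There is no genuine technical obstacle here: the proof collapses to one line combined with an already-established lemma. The only point worth verifying is that the $K$ factor in the target bound survives: it does automatically, since the $K$ in the previous lemma arose from $\sum_j h_j = K$, and the Frobenius contraction $\|\Pi \tilde{Z}\|_F \leq \|\Pi Z\|_F$ inherits that dependence without change.
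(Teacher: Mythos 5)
Your proof is correct, but it takes a genuinely different route from the paper. The paper does not use the contraction $\|\Pi Z V\|_F \leq \|\Pi Z\|_F$; instead it redoes the Bernstein argument from scratch, exactly along the lines of your ``more laborious route'': it decomposes $\Pi$ over connected components, writes $\frac{1}{n_{\mathcal{C}_l}}\mathbf{1}_{\mathcal{C}_l}^{\top}\tilde{Z}_{[\mathcal{C}_l]k}$ as an average of the centered bounded variables $\sum_{j}V_{jk}(T_{im}(j)-M_{ij})$, bounds their variance by $1$ via $\sum_j V_{jk}^2=1$, and takes a union bound over $k\in[K]$ and $l\in[n_C]$; the factor $K$ there arises from summing over the $K$ columns, whereas in the $\|\Pi Z\|_F$ lemma it arises from $\sum_j h_j = K$ --- the two mechanisms happen to produce the same bound, as you correctly note. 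Your one-line argument buys two things: it avoids duplicating the concentration machinery, and, more substantively, the inequality $\|\Pi Z V\|_F\leq\|\Pi Z\|_F$ is deterministic and holds uniformly over all orthonormal $V$, so it remains valid when the lemma is invoked with $V=\wh V^{t-1}$, which is data-dependent; the paper's direct Bernstein computation implicitly treats $V$ as fixed, which is a (minor) gap your approach sidesteps. The paper's approach, in exchange, is self-contained and does not rely on the previous lemma having been established first.
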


\begin{proof}
We follow the same procedure as the proof of Lemma~\ref{lemma:concentration_pi_z}. Letting $\C_j$ denote the $j^{th}$ connected component of the graph $\mathcal{G}$ and $n_{\C_l} = | \C_l|$ its cardinality, $\|\Pi \tilde{Z} \|_F$ can be decomposed as:
\begin{equation*}
    \begin{split}
         \|\Pi \tilde{Z} \|_F^2 &\leq \sum_{l=1}^{n_C} \| \frac{1}{\nj} \mathbf{1}_{\mathcal{C}_l}\mathbf{1}_{\mathcal{C}_l}^T \tilde{Z}_{[\C_l]}\|_F^2\\
         &= \sum_{l=1}^{n_C} \nj \| \frac{1}{\nj} \mathbf{1}_{\mathcal{C}_l}^T \tilde{Z}_{[\C_l]}\|_2^2
    \end{split}
\end{equation*}
By Assumption 3, $\forall i, N_i=N$. Using the definition of $T_{im}$ provided in \eqref{eq:def_t_im}, for each $k \in [K]$, and each connected component $\C_l$:
\begin{equation*}
    \begin{split}
        \frac{1}{n_{\C_l}} \mathbf{1}_{\mathcal{C}_l}^T \tilde{Z}_{[\C_l]k} &= \frac{1}{n_{\C_l} N} \sum_{i \in \mathcal{C}_l} \sum_{m=1}^N \sum_{j=1}^p (T_{im}(j) - \E[T_{im}(j)])V_{jk}.
    \end{split}
\end{equation*}

Fix $j$ and denote $\eta_{jm} =(T_{im}(j) - \E[T_{im}(j)]) V_{jk}$. We have $|\sum_{j=1}^{p}\eta_{jm}|\leq 2$ and

$$\text{Var}(\sum_{j=1}^{p}\eta_{jm})=\sum_{j=1}^{p}M_{ij}(V_{jk})^2-(\sum_{j=1}^pM_{ij}V_{jk})^2\leq 1$$

Therefore, by Bernstein's inequality (Lemma~\ref{lemma: bernstein ineq}), for the $l^{th}$ connected component $\C_l$ of the graph $\mathcal{G}$ and for any $k \in [K]$:
\begin{equation*}
    \begin{split}
        \forall t>0, \quad \P[|\frac{1}{n_{\C_l}} \mathbf{1}_{\mathcal{C}_l}^T \tilde{Z}_{[\C_l]k}|>t]  = \P[\frac{1}{n_{\C_l} N}|\sum_{i \in \mathcal{C}_l} \sum_{m=1}^N \sum_{j=1}^p \eta_{jm}|>t] \leq 2\exp\{-\frac{n_{\C_l} Nt^2/2}{1 + \frac{2}{3}t}\}. 
    \end{split}
\end{equation*}

Choosing $t^2 = C^*\frac{\log(n)}{n_{\C_l}N}$, the previous inequality becomes:
\begin{equation*}
    \begin{split}
    \P[|\frac{1}{n_{\C_l}} \mathbf{1}_{\mathcal{C}_l}^T \tilde{Z}_{[\C_l]k}|>\sqrt{C^*\frac{\log(n)}{n_{\C_l}N}}]  & \leq 2\exp\{-\frac{C^* \log(n)}{1 + \frac{2}{3}\sqrt{C^*\frac{\log(n)}{n_{\C_l}N}}}\}\leq 2\exp\{-{C \log(n)} \}.
    \end{split}
\end{equation*}

as long as $n_{\C_l}N \gtrsim \log(n)$.
Therefore, by a simple union bound:
\begin{equation*}
    \begin{split}
& \P \left[ \exists k \in [K], \exists l \in [n_C]:\quad \frac{1}{n_{\C_l}N}|\sum_{i \in \mathcal{C}_l} \sum_{m=1}^N \sum_{j=1}^{p}\eta_{jm}|> \sqrt{C^*\frac{\log(n)}{n_{\C_l}N}} \right] \\
 &\leq 2Kn_C\exp\{-{C^* \log(n)} \}\\
 &=\exp\{\log(2) + \log(K) + \log(n_C)- C^*\log(n) \}\\
 &\leq \exp\{ - (C^*-3)\log(n) \},
    \end{split}
\end{equation*}
Thus with probability $1-o(n^{-1})$, for all $k \in [K]$ and all $l \in [n_C]$:
\begin{equation*}
    \begin{split}
 \| \frac{1}{n_{\C_l}} \mathbf{1}_{\mathcal{C}_l}^T \tilde{Z}_{[n_{\C_l}] \cdot}\|_2^2&\leq \sum_{k\in [K]}\frac{C^*\log(n)}{n_{\C_l}N}  =  C^*\frac{K}{n_{\C_l}N} \log(n).
      \end{split}
\end{equation*}  
 and
\begin{equation*}
    \begin{split}
         \|\Pi \tilde{Z} \|_F^2  &= \sum_{l=1}^{n_C} n_{\C_l} \| \frac{1}{n_{\C_l}} \mathbf{1}_{\mathcal{C}_l}^T \tilde{Z}_{[n_{\mathcal{C}}]}\|_2^2\\
         &\leq  \sum_{l=1}^{n_C} C^* n_{\C_l} K  \frac{  \log(n)}{n_{\C_l}N} \\
      &\leq C^* n_C K  \frac{\log(n)  }{ N}\\
    \end{split}
\end{equation*}
\end{proof}

\begin{lemma}[Concentration of $\| e_i^{\top}Z \|_2$ and $\| e_i^{\top}ZV  \|_2$]
\label{lemma:concentration_max_norm_zV}

Let Assumptions~1-5 hold. Let $\tilde{Z} = ZV$, with $V\in \R^{p \times r}$ a projection matrix: $V^TV =I_r$, with $r$ a term that does not grow with $n$ or $p$ and $r \leq p$. Then with probability at least $1-o(n^{-1})$:

\begin{equation}
    \begin{split}
        \max_{i \in [n]}\| e_i^{\top}Z \|_2&\leq C_1 \sqrt{\frac{ K\log(n)}{N}}\\
        \max_{i \in [n]}\|  e_i^{\top}ZV\|_2 &\leq  C_2 \sqrt{\frac{ r\log(n)}{N}}  \\
    \end{split}
 \end{equation}
\end{lemma}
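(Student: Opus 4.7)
The plan is to prove each of the two bounds by an entry-wise Bernstein inequality followed by a union bound, paralleling the strategy already used in Lemmas~\ref{lem:concentration_Z} and~\ref{lemma:concentration_frob_norm_Utz}. In both cases, the key is to expose each relevant entry as a scaled sum of $N$ independent, centered, bounded summands and then sum the squared per-entry bounds over the $r$ (or $p$) coordinates.

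For the first bound, I would expand $Z_{i\cdot} = \frac{1}{N}\sum_{m=1}^N (T_{im} - \E[T_{im}])$ using the one-hot encoding $T_{im}(j)$ from \eqref{eq:def_t_im}. For fixed $(i,j)$, the summands $T_{im}(j) - M_{ij}$ are independent, bounded by $1$ in absolute value, and have variance at most $M_{ij}(1-M_{ij}) \leq h_j$. Bernstein's inequality (Lemma~\ref{lemma: bernstein ineq0}) with $t^2 = C^* h_j \log(n)/N$ gives $|Z_{ij}|^2 \leq C^* h_j \log(n)/N$ with probability $1-2n^{-C^*}$; the choice is legitimate because Assumption~\ref{assumption:h_j} guarantees $h_j \gtrsim \log(n)/N$. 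A union bound over the $np$ pairs $(i,j)$, which costs only $\log(np) \lesssim \log(n)$ under the low-$p$ regime (Remark~\ref{remark:small_p}), preserves probability $1-o(n^{-1})$ for a large enough $C^*$. Summing across coordinates yields $\|e_i^{\top} Z\|_2^2 \leq \sum_{j=1}^p C^* h_j \log(n)/N = C^* K \log(n)/N$ uniformly in $i$, using $\sum_j h_j = K$.

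For the second bound, I would write $(e_i^{\top} ZV)_k = \frac{1}{N}\sum_{m=1}^N \eta_{im}^{(k)}$ with $\eta_{im}^{(k)} := \sum_{j=1}^p V_{jk}(T_{im}(j) - M_{ij})$. Since $T_{im}$ is a one-hot vector and $V$ has orthonormal columns, $|\eta_{im}^{(k)}| \leq |V_{j^*k}| + \|V_{\cdot k}\|_2\|M_{i\cdot}\|_2 \leq 2$, and
\begin{equation*}
    \mathrm{Var}(\eta_{im}^{(k)}) = \sum_{j=1}^p V_{jk}^2 M_{ij} - \Bigl(\sum_{j=1}^p V_{jk} M_{ij}\Bigr)^2 \leq \sum_{j=1}^p V_{jk}^2 M_{ij} \leq 1,
\end{equation*}
where the last inequality uses $\sum_j M_{ij} = 1$ and $\max_j V_{jk}^2 \leq 1$. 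Applying Bernstein's inequality with $t = C_2 \sqrt{\log(n)/N}$ gives $|(e_i^{\top}ZV)_k|^2 \leq C_2 \log(n)/N$ with probability $1-2n^{-C_2}$, and a union bound over the $nr$ pairs $(i,k)$ (with $r$ treated as a constant) preserves $1-o(n^{-1})$. Summing across the $r$ coordinates delivers $\|e_i^{\top} ZV\|_2^2 \leq C_2 r \log(n)/N$ uniformly in $i$.

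The only subtlety I expect to encounter is keeping the variance bound in the second inequality free of any factor of $p$ or $\sum_j h_j = K$; this is where the orthonormality of $V$ combined with the simplex constraint $\sum_j M_{ij} = 1$ is essential, since a naive bound $\mathrm{Var}(\eta_{im}^{(k)}) \lesssim \sum_j V_{jk}^2 h_j$ could otherwise introduce a spurious $K$. Once this variance control is in place, the remainder is a direct application of the concentration machinery already developed in Appendix~\ref{app:lemmas}.
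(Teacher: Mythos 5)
Your proposal is correct and follows essentially the same route as the paper's proof: entry-wise Bernstein bounds on $Z_{ij}$ (with variance proxy $h_j$ and the sum $\sum_j h_j = K$) and on $(ZV)_{ik}$ (with the variance $\sum_j V_{jk}^2 M_{ij} - (\sum_j V_{jk}M_{ij})^2 \le 1$ controlled via orthonormality of $V$ and $\sum_j M_{ij}=1$), followed by union bounds over $np$ and $nr$ entries under the low-$p$ regime. Your variance computation for the second bound is in fact written slightly more carefully than the paper's (which drops the factor $M_{ij}$ in the displayed variance, though it reaches the same bound of $1$), so there is nothing to add.
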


\begin{proof}

We first note that 
    \begin{equation*}
        \begin{split}
            \| e_i^{\top}Z \|_2^2 &= \sum_{j=1}^p Z_{ij}^2 
        \end{split}
    \end{equation*}
\begin{align*}
  \forall j \in [p], \qquad  {Z}_{ij} &= \frac{1}{N}\sum_{m=1}^{N} (T_{im}(j) - \E[T_{i m}(j)])  \\
  & = \frac{1}{N}\sum_{m=1}^{N}  \eta_{im} \quad \text{ with }  \eta_{im}= T_{i m}(j) - \E[T_{im}(j)]
    \end{align*}

Thus ${Z}_{ij}$ is a sum of $N$ centered independent variables.

Fix $j \in [p]$. We have: $\text{Var}(\eta_{im}) = M_{ij}^2-M_{ij}\leq M_{ij}\leq h_j$ and 
$$\sum_{m=1}^N  \text{Var}(\eta_{im}) \leq Nh_j. $$
Moreover,for each $m$,  
$ |\eta_{im}| \leq 1.$ Thus, by Lemma \ref{lemma: bernstein ineq}:
\begin{equation*}
    \begin{split}
        \P[ |\frac{1}{N}\sum_{m=1}^{N}  \eta_{im} | > t ]&\leq 2 e^{-\frac{Nt^2/2}{1 + t/3}}
    \end{split}
\end{equation*}

Choosing $t =C^* \sqrt{\frac{h_j\log(n)}{N}}$:
\begin{equation*}
    \begin{split}
        \P[ |\frac{1}{N}\sum_{m=1}^{N}  \sum_{j=1}^p \eta_{jm} | > t ]&\leq 2 e^{-(C^*)^2\frac{\log(n)/2}{1 + \frac{C^*}{3}\sqrt{\frac{\log(n)}{h_jN}}}}
    \end{split}
\end{equation*}

Therefore, by Assumption 5, $N > c_{\min}\log(n)$, then, with probability at least $1-o(n^{-1})$,  $|Z_{ij} |^2 \leq C^*\frac{h_j\log(n)}{N} .$ By a simple union bound:
\begin{equation*}
    \begin{split}
    &  \P[ \exists j: |Z_{ij} |^2 > C\frac{h_j\log(n)}{N} ] \leq p e^{-C^* \log(n)}  \\
      \implies&  \P[  \max_{i\in[n]}\|e_i^{\top}Z \|_2^2 > C\frac{K\log(n)}{N} ] \leq npe^{-C^* \log(n)}\leq e^{-C^*\log(n) +\log(p) + \log(n)} \leq e^{-(C^*-2) \log(n)} 
    \end{split}
\end{equation*}
since $\sum_{j=1}^ph_j=K$. Also, since we assume that $\max(p, r) \ll n$, the result follows.
    
Similarly, denote $\tilde{Z}=ZV$,
    \begin{align*}
  \forall k \in [r], \qquad  {\tilde{Z}}_{ik} &= \frac{1}{N}\sum_{j=1}^p\sum_{m=1}^{N} (T_{im}(j)V_{jk} - \E[T_{i m}(j)V_{jk}])  \\
  & = \frac{1}{N}\sum_{m=1}^{N}  \sum_{j=1}^p \eta_{jm} \quad \text{ with }  \eta_{jm}= V_{jk}T_{i m}(j) - \E[V_{jk} T_{im}(j)]
    \end{align*}

Note that: $\sum_{j=1}^p(T_{im}(j)V_{jk} - \E[T_{i m}(j)V_{jk}]) = V_{j_0K} - \sum_{j=1}^pM_{ij}V_{jk} $ with probability $M_{ij_0}, j_0 \in [p].$

Thus ${\tilde{Z}}_{ik}$ is a sum of $N$ centered independent variables.

Fix $k \in [r]$. We have: $\text{Var}(\sum_{j=1}^p \eta_{jm}) = \sum_{j=1}^p V_{jk}^2   - ( \sum_{j=1}^p V_{jk}M_{ij})^2,$ and since $\sum_{j=1}^p V_{jk}^2 = 1$:
$$\sum_{m=1}^N  \text{Var}(\sum_{j=1}^p \eta_{jm}) \leq N. $$
Moreover,for each $m$,  
\begin{equation*}\label{eq:bound_eta}
    |\sum_{j=1}^p\eta_{jm}| \leq \max_{j} | V_{jk}| + \sum_{j=1}^p M_{ij} |V_{jk}|\leq  2  \max_{j} | V_{jk}| \leq 2.
\end{equation*}
Thus, by Lemma \ref{lemma: bernstein ineq}:
\begin{equation*}
    \begin{split}
        \P[ |\frac{1}{N}\sum_{m=1}^{N}  \sum_{j=1}^p \eta_{jm} | > t ]&\leq 2 e^{-\frac{Nt^2/2}{1 + 2t/3}}
    \end{split}
\end{equation*}

Choosing $t =C^* \sqrt{\frac{\log(n)}{N}}$:
\begin{equation*}
    \begin{split}
        \P[ |\frac{1}{N}\sum_{m=1}^{N}  \sum_{j=1}^p \eta_{jm} | > t ]&\leq 2 e^{-(C^*)^2\frac{\log(n)/2}{1 + \frac{2}{3}C^*\sqrt{\frac{\log(n)}{N}}}}
    \end{split}
\end{equation*}

Therefore, by Assumption 5, $N > c_{\min}\log(n)$, then, with probability at least $1-o(n^{-1})$,  $|\tilde{Z}_{kj} |^2 \leq C^*\frac{\log(n)}{N} .$ By a simple union bound:
\begin{equation*}
    \begin{split}
    &  \P[ \exists k: |\tilde{Z}_{ik} |^2 > C\frac{\log(n)}{N} ] \leq r e^{-C^* \log(n)}  \\
      \implies&  \P[  \max_{i\in[n]}\|e_i^{\top}\tilde{Z} \|_2^2 > C\frac{r\log(n)}{N} ] \leq rne^{-C^* \log(n)}\leq e^{-C^*\log(n) +\log(r) + \log(n)} \leq e^{-(C^*-2) \log(n)} 
    \end{split}
\end{equation*}
Since we assume that $\max(p, r) \ll n$, the result follows.
\end{proof}

\section{Synthetic Experiments}\label{app:synthetic_exp}

We propose the following procedure for generating synthetic datasets such that the topic mixture matrix $W$ is aligned with respect to a known graph.

\begin{itemize}
    \item [1.] \textbf{Generate spatially coherent documents} Generate $n$ points (documents) over a unit square $[0,1]^2$. Divide the unit square into $n_{grp}=30$ equally spaced grids and get the center for each grid. Apply k-means algorithm to the points with these as initial centers. This will divide the unit square into 30 different clusters. Next, randomly assign these clusters to $K$ different topics. In the end, within the same topic, we will observe some clusters of documents that are not necessarily next to each other (see Figure~\ref{fig:gt}). This is a more challenging setting where the algorithm has to leverage between the spatial information and document-word frequencies to estimate the topic mixture matrix. Based on the coordinates of documents, construct a spatial graph where for each document, edges are set for the $m=5$ closest documents and weights as the inverse of the euclidean distance between two documents. \\
    \item [2. ] \textbf{Generate matrices $W$ and $A$} For each cluster, we generate a topic mixture weight $\mathbf{\alpha} \sim \mathrm{Dirichlet}(\mathbf{u})$ where $u_k \sim \mathrm{Unif}(0.1, 0.5)$ ($k \in [K]$). We order $\mathbf{\alpha}$ so that the biggest element of $\mathbf{\alpha}$ is assigned to the cluster's dominant topic. We also add small Gaussian noise to $\alpha$ so that in the end, for each document in the cluster, $W_{i\cdot}=\mathbf{\alpha}+\mathbf{\epsilon}_{i}$, $\mathbf{\epsilon}_{ik} \sim N(0,0.03)$. We sample $K$ rows of $W$ as \textit{anchor documents} and set them as $\mathbf{e}_k$. The elements of $A$ are generated from $\mathrm{Unif}(0,1)$ and normalized so that each row of $A$ sums up to 1. Similarly to anchor documents, $K$ columns of $A$ are selected as \textit{anchor words} and set to $\mathbf{e}_k$. \\
    \item [3. ] \textbf{Generate frequency matrix $X$} We obtain the ground truth $M = WA$ and sample each row of $D$ from $\mathrm{Multinomial}(N, M_{i\cdot})$. Each row of $X$ is obtained by $X_{i\cdot} = D_{i\cdot}/N$. 
\end{itemize}

Figure~\ref{fig:gt} illustrates the ground truth mixture weights, $W_{\cdot k}$, for each topic generated with parameters $n=1000, N=30, p=30$ and $K=3$. Here, each dot in the unit square represents a document, with lighter colors indicating higher mixture weights. We observe patches of documents that share similar topic mixture weights.

\begin{figure}
    \centering
    \includegraphics[width=0.7\textwidth]{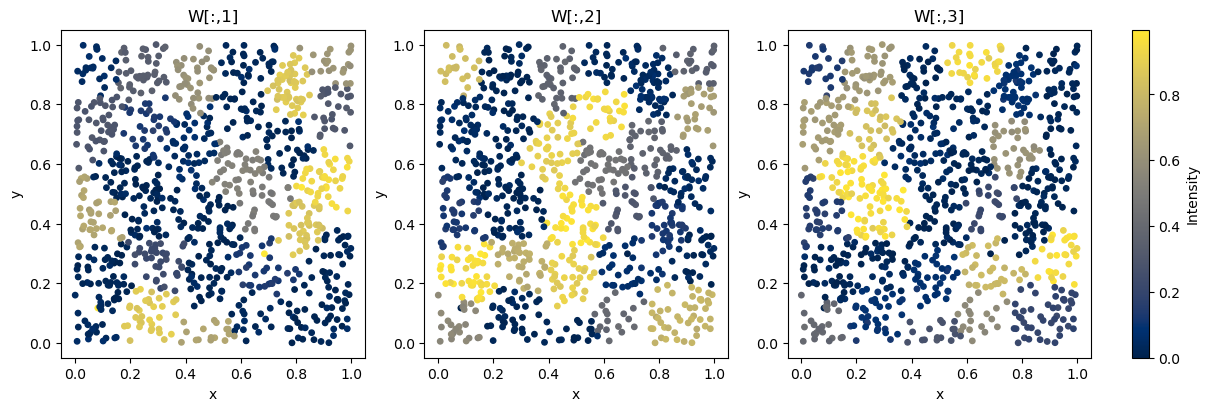}
    \caption{Heatmap of generated ground truth $W_{\cdot1}, W_{\cdot2}, W_{\cdot3}$, representing each topic mixture weight.}
    \label{fig:gt}
\end{figure}

\begin{figure}
    \centering
    \includegraphics[width=0.8\textwidth]{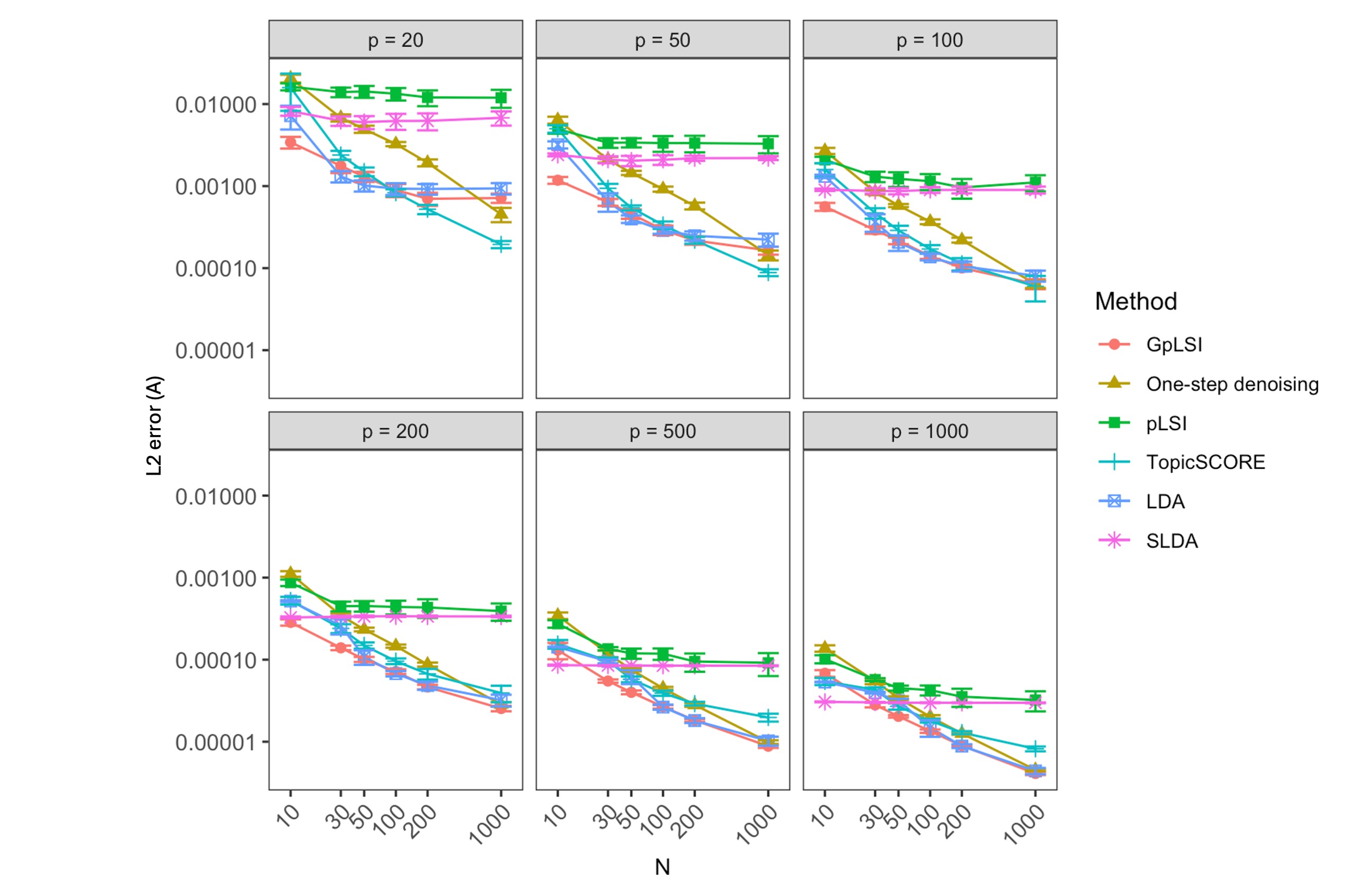}
    \caption{$\ell_2$ error for the estimator $\wh A$ (defined as $\text{min}_{P \in \mathcal{P}}\frac{1}{p}\| \wh A - PA\|_{F}$)  for different combinations of document length $N$ and vocabulary size $p$. Here, $n=1000$ and $K=3$.}
    \label{fig:Al2pN}
\end{figure}

\begin{figure}
    \centering
    \includegraphics[width=0.8\textwidth]{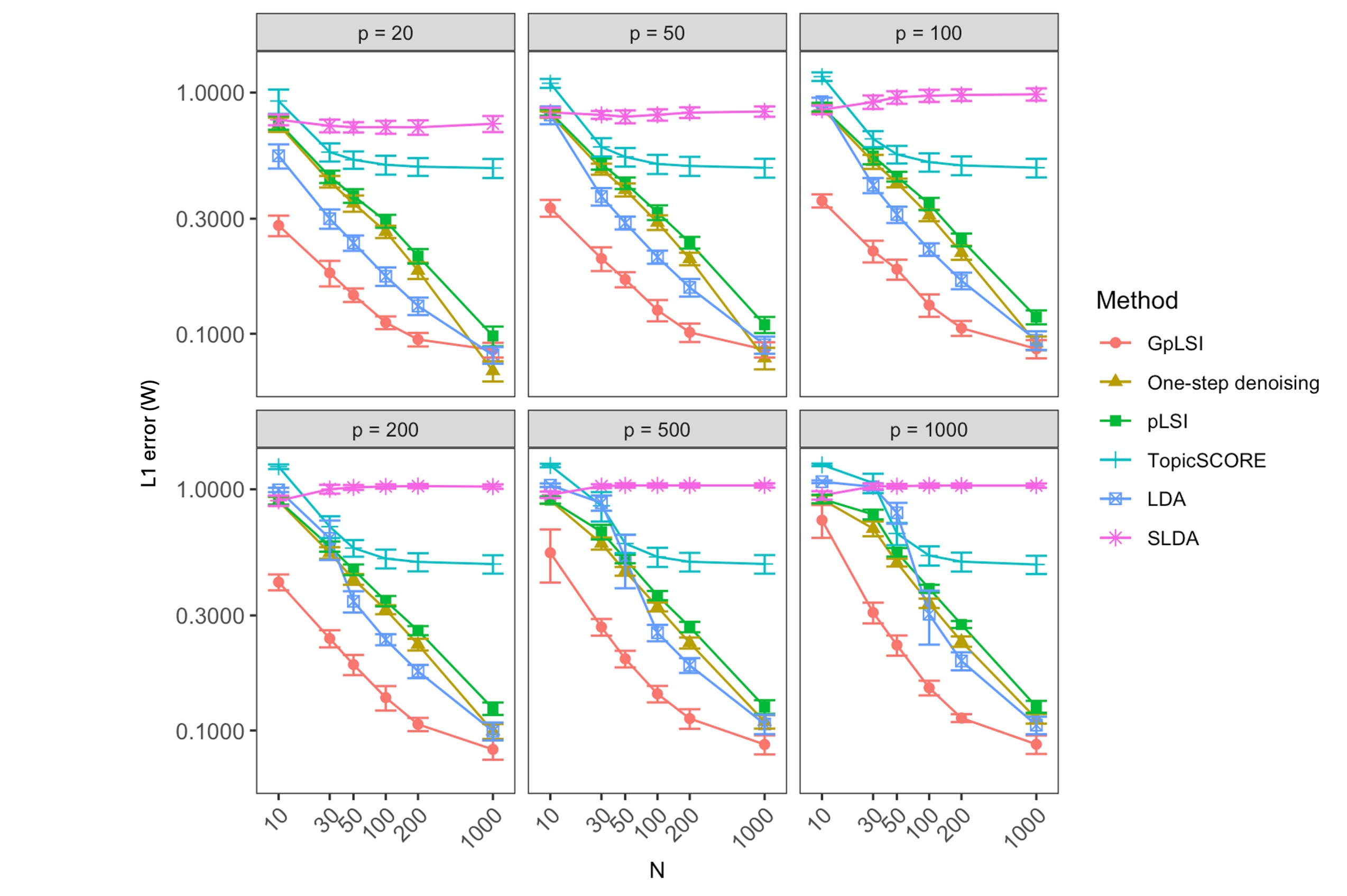}
    \caption{$\ell_1$ error  for the estimator $\wh W$ (defined as $\text{min}_{P \in \mathcal{P}}\frac{1}{n}\| \wh W - WP\|_{11}$) for different combinations of document length $N$ and vocabulary size $p$. Here, $n=1000$ and $K=3$.}
    \label{fig:Wl1pN}
\end{figure}

\begin{figure}
    \centering
    \includegraphics[width=0.8\textwidth]{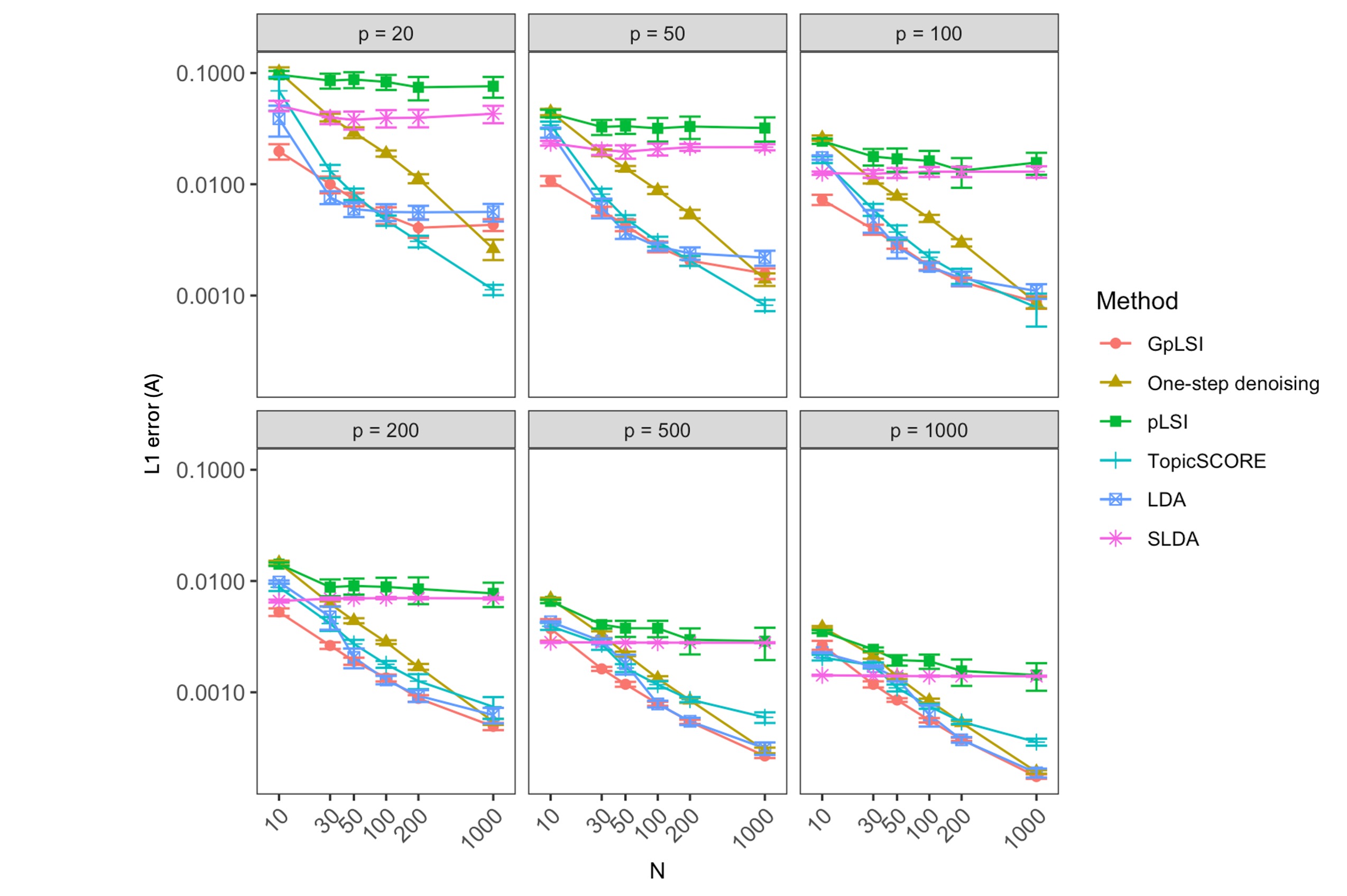}
    \caption{$\ell_1$ error for the estimator $\wh A$ (defined as $\text{min}_{P \in \mathcal{P}}\frac{1}{p}\| \wh A - PA\|_{11}$) for different combinations of document length $N$ and vocabulary size $p$. Here, $n=1000$ and $K=3$.}
    \label{fig:Al1pN}
\end{figure}


\begin{figure}
    \centering
    \includegraphics[width=0.7\textwidth]{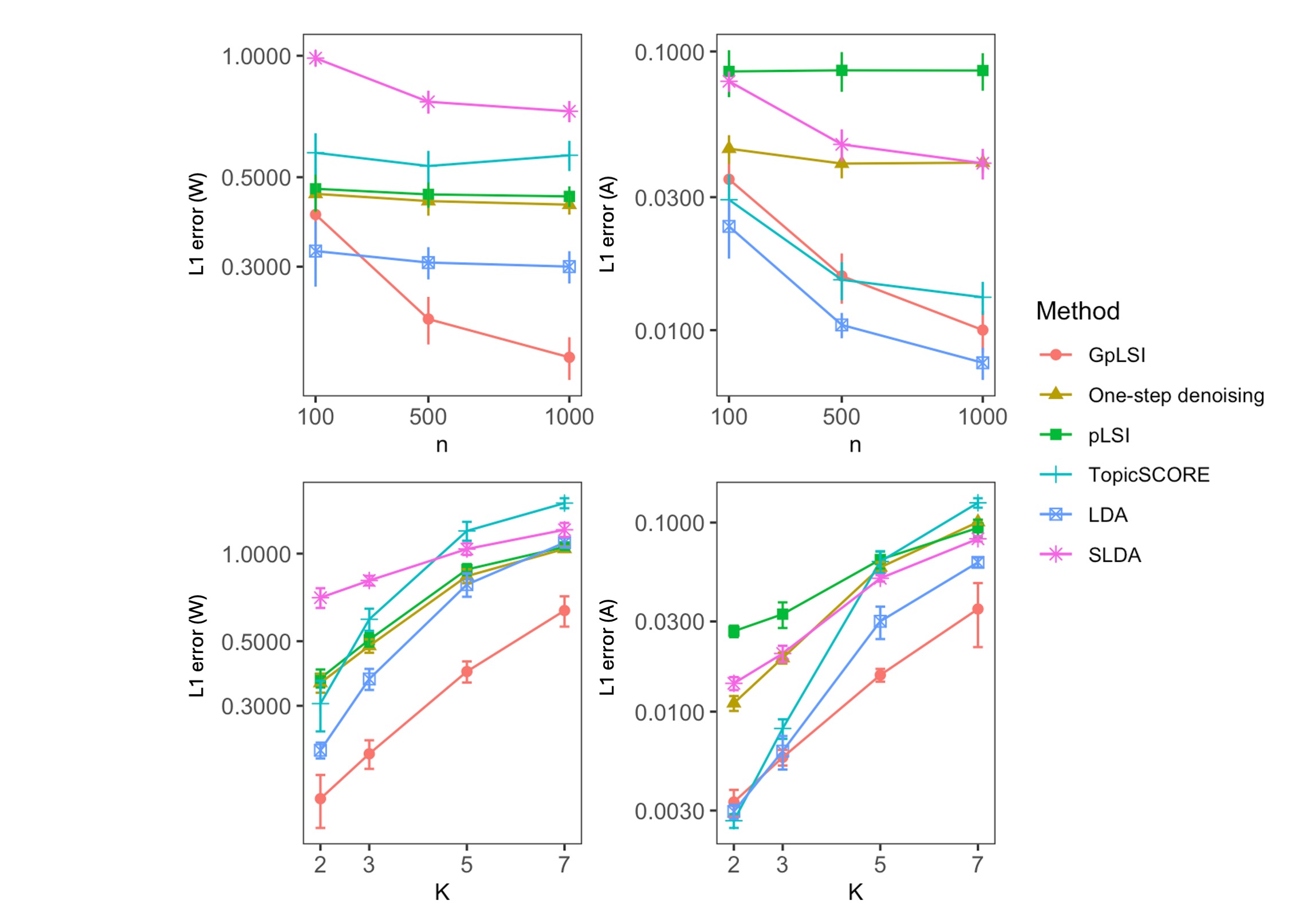}
    \caption{$\ell_1$ error of $W$ (left), $A$ (right) for different corpus size $n$ and number of topics $K$. Here, $N=30$ and $p=30$.}
    \label{fig:nK1}
\end{figure}

Next, we show the errors of estimated $\wh W$ and $\wh A$ under the same parameter settings as Section 3.4 of the main manuscript.  From Figure~\ref{fig:Al2pN}-\ref{fig:Al1pN}, GpLSI achieves the lowest errors of $\wh W$ and $\wh A$ in all parameter settings, followed by LDA. For the estimation of $A$, as highlighted in Remark 4, our rates and procedure is not optimal compared with existing results (see in particular \cite{ke2017new}, which achieves similar results to ours in Figure 2 in the main manuscript). However, compared to the procedure proposed by \cite{klopp2021assigning}, the estimation error is considerably improved.

\section{Real data}\label{app:experiments}

In this section, we provide supplementary plots for our analysis on the real datasets discussed in Section 4 of the main manuscript. 

\subsection{Estimated tumor-immune microenvironment topic weights}

We present the estimated tumor-immune microenvironment topics estimated with GpLSI, pLSI, and LDA for $K=1$ to $6$. The topics are aligned among the methods as well as among different number of topics, $K$. Topics dominated by stroma, granulocyte, and B cells, recur in both GpLSI and LDA.

\begin{figure}
    \centering
    \includegraphics[width=1.0\linewidth]{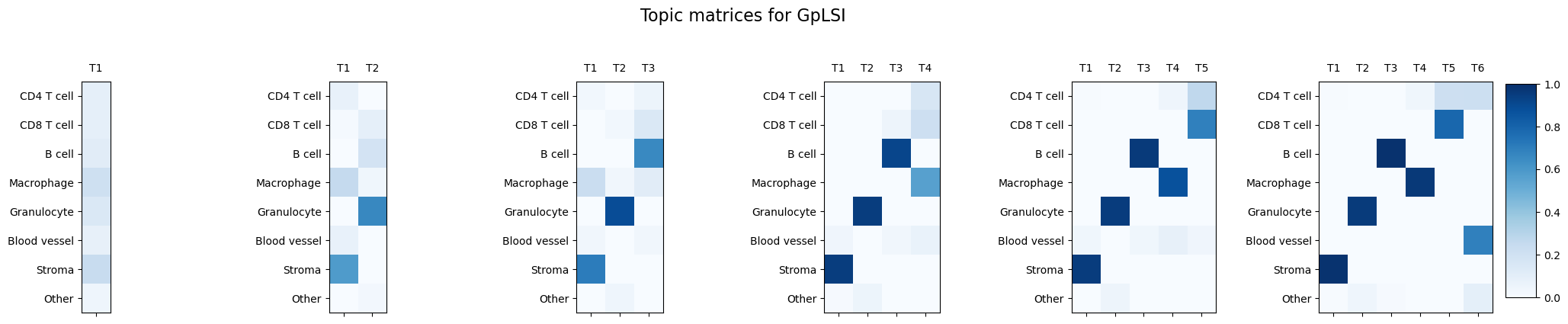}
    \caption{Estimated topic weights of tumor-immune microenvironments using GpLSI.}
    \label{fig:topics6_gplsi}
\end{figure}

\begin{figure}
    \centering
    \includegraphics[width=1.0\linewidth]{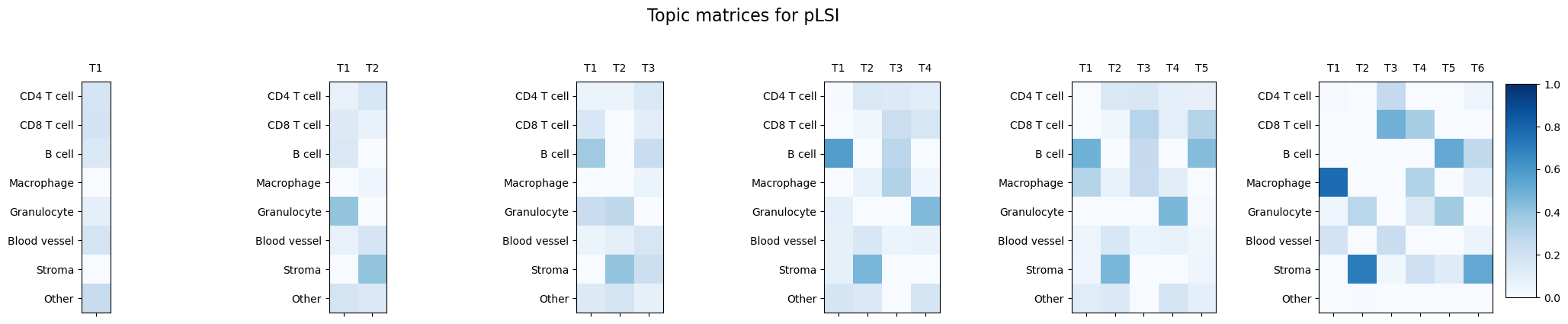}
    \caption{Estimated topic weights of tumor-immune microenvironments using pLSI.}
    \label{fig:topics6_plsi}
\end{figure}

\begin{figure}
    \centering
    \includegraphics[width=1.0\linewidth]{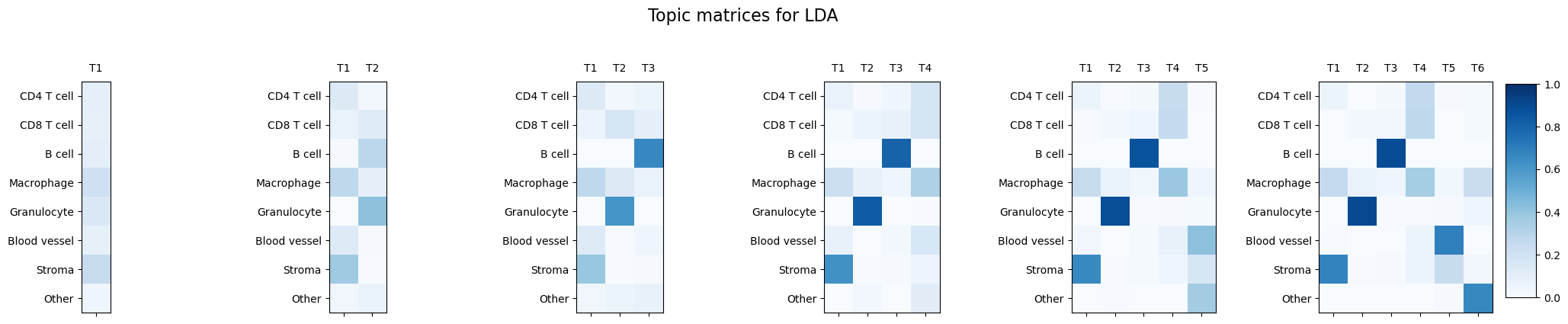}
    \caption{Estimated topic weights of tumor-immune microenvironments using LDA.}
    \label{fig:topics6_lda}
\end{figure}

\subsection{Kaplan-Meier curves of Stanford Colorectal Cancer dataset}

We plot Kaplan-Meier curves for tumor-immune micro-environment topics using the dichotomized topic proportion for each patient. We observe that granulocyte (Topic 2) is associated with lower risk of cancer recurrence across all methods.

\begin{figure}[h]
    \centering
    \includegraphics[width=1.0\linewidth]{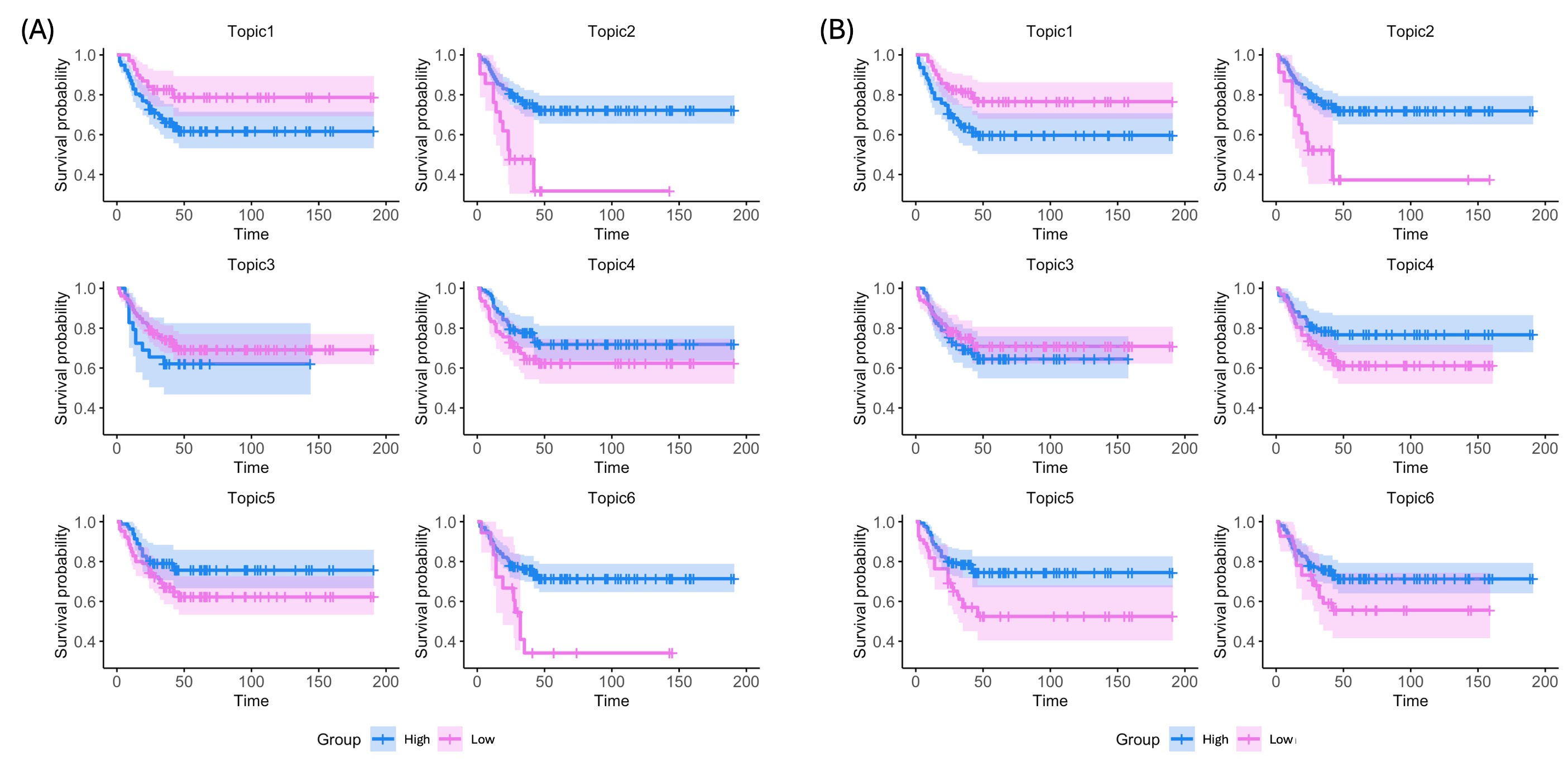}
    \caption{Kaplan-Meier curves of dichotomized topic proportions using pLSI (left) and LDA (right).}
    \label{fig:km_others}
\end{figure}

\subsection{Topics by top common ingredients in What's Cooking dataset}

\begin{figure}
    \centering
    \includegraphics[width=1.0\textwidth]{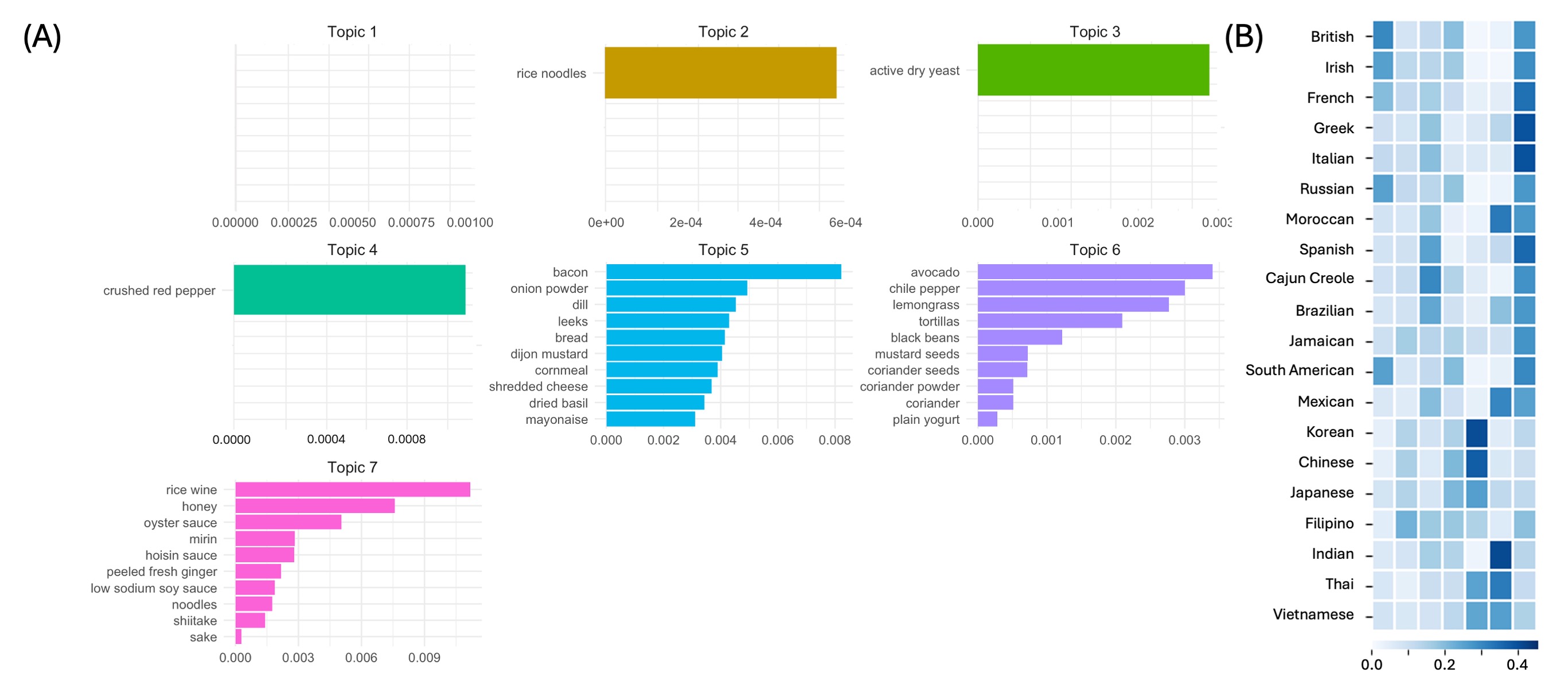}
    \caption{(A) Estimated anchor ingredients for each topic using pLSI. (B) Proportion of topics for each cuisine.}
    \label{fig:anchor_plsi}
\end{figure}

\begin{figure}
    \centering
    \includegraphics[width=1.0\textwidth]{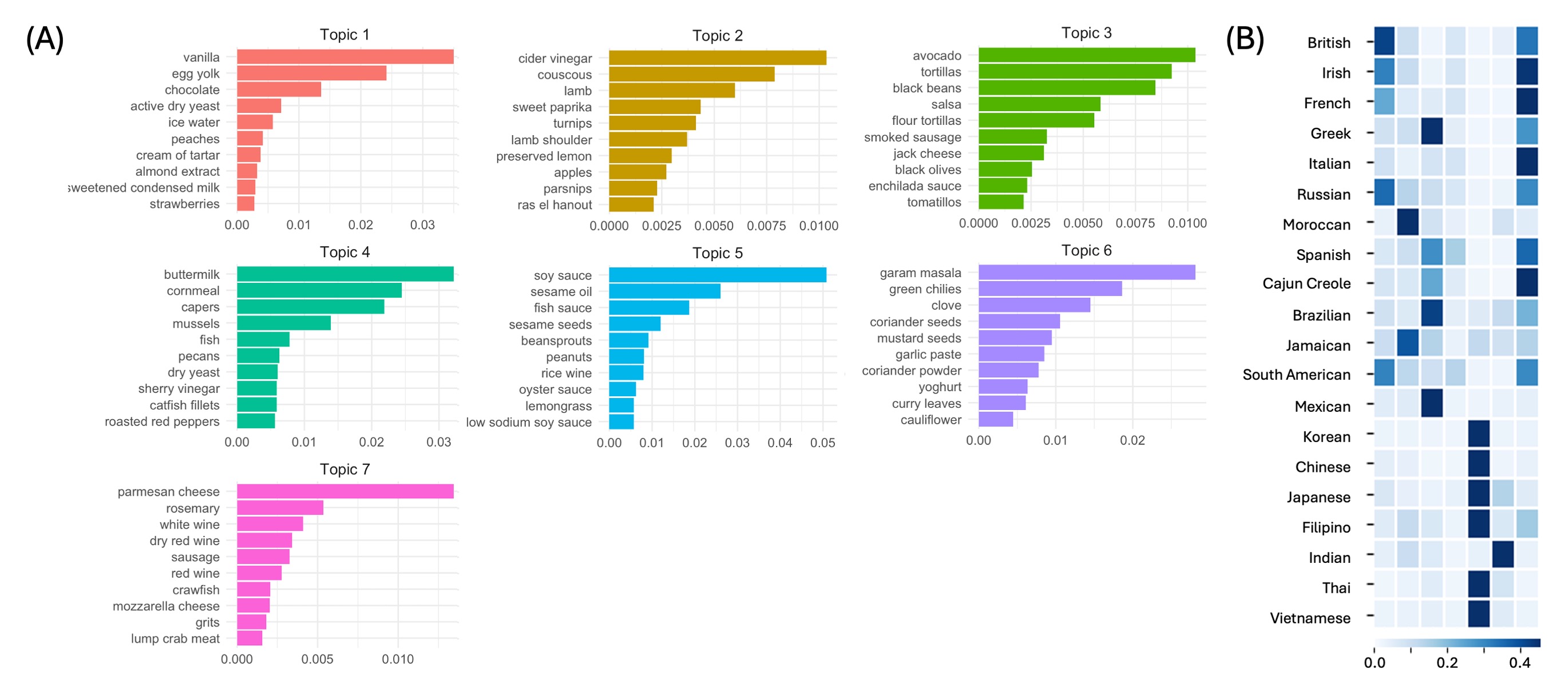}
    \caption{(A) Estimated anchor ingredients for each topic using LDA. (B) Proportion of topics for each cuisine.}
    \label{fig:anchor_lda}
\end{figure}

 \begin{figure}
    \centering
    \includegraphics[width=0.7\textwidth]{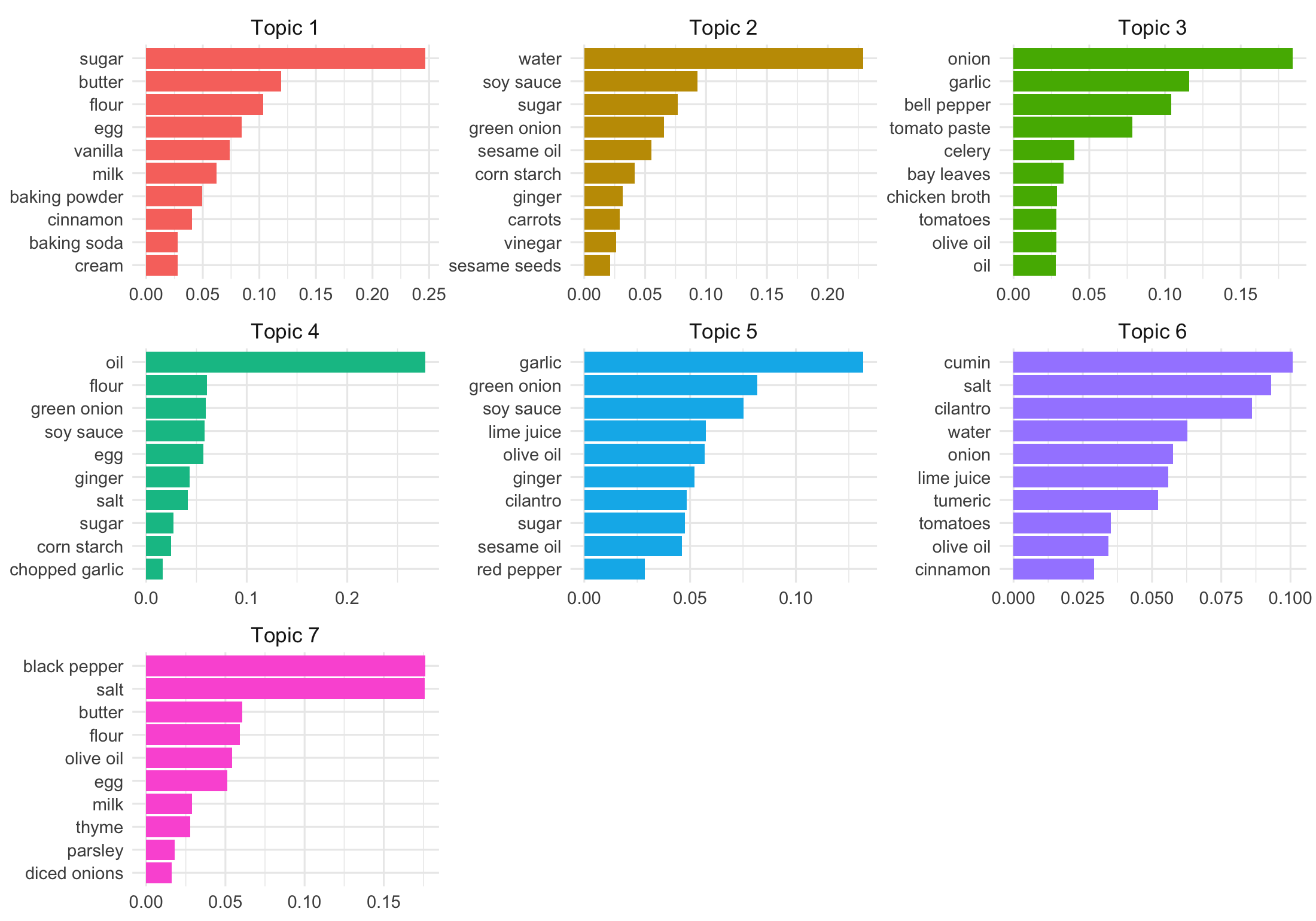}
    \caption{Top ten common words for each topic estimated by GpLSI.}
    \label{fig:top_gplsi}
\end{figure}

\begin{figure}
    \centering
    \includegraphics[width=0.7\textwidth]{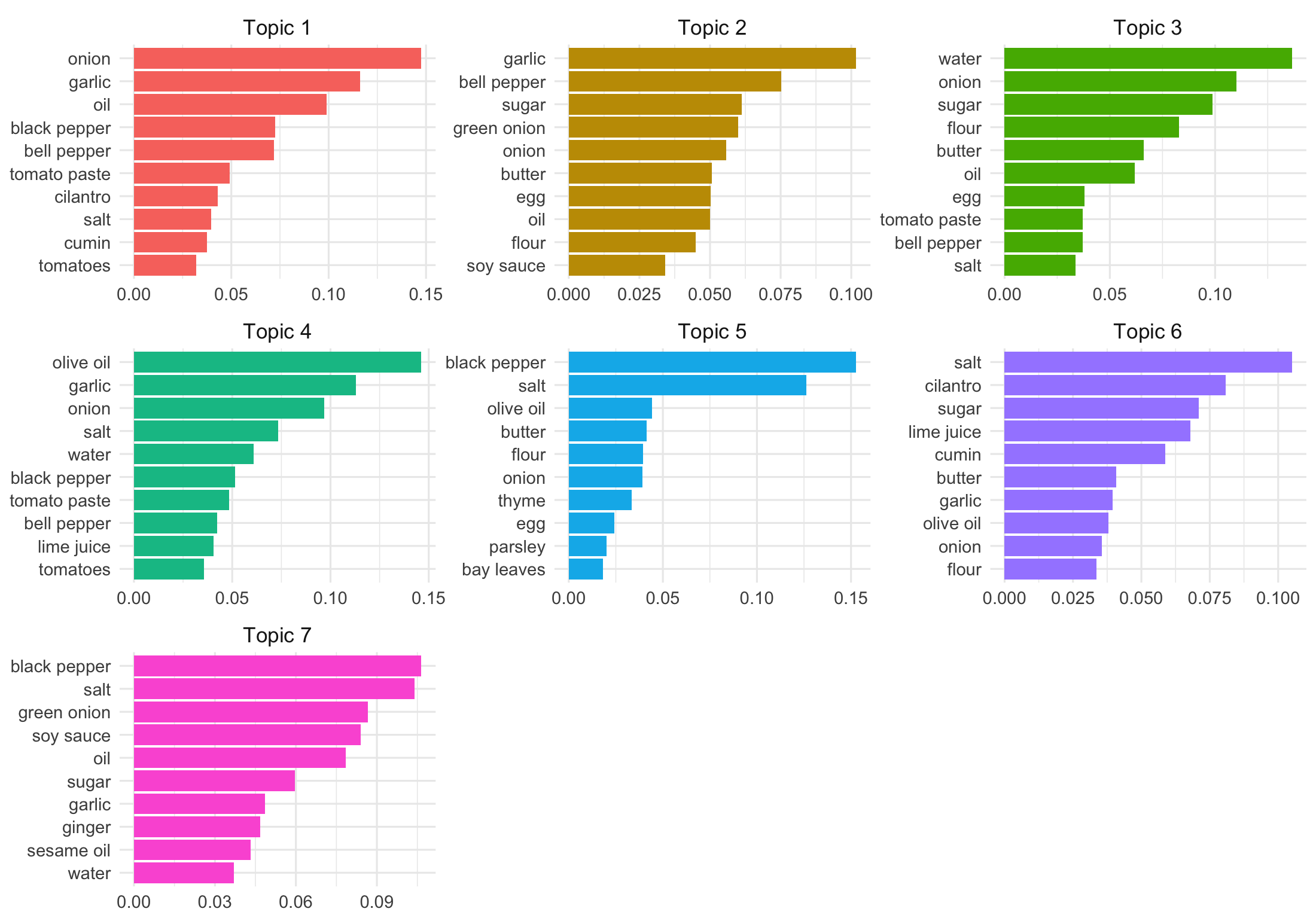}
    \caption{Top ten common words for each topic estimated by pLSI.}
    \label{fig:top_plsi}
\end{figure}

\begin{figure}
    \centering
    \includegraphics[width=0.7\textwidth]{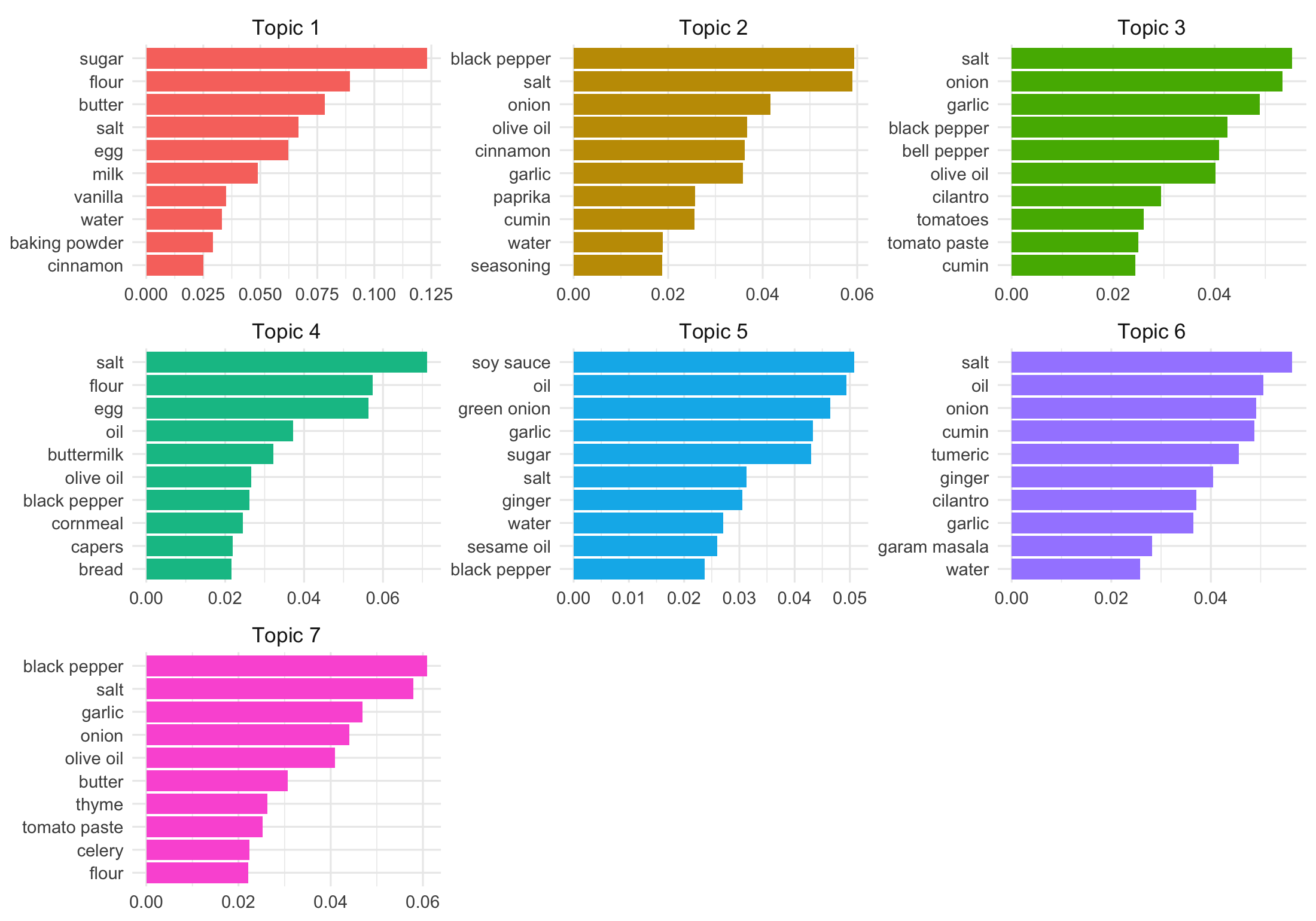}
    \caption{Top ten common words for each topic estimated by LDA.}
    \label{fig:top_lda}
\end{figure}

We illustrate each topic with the top ten ingredients with the highest estimated weights (Figures~\ref{fig:top_gplsi}-\ref{fig:top_lda}) as well as anchor ingredients for pLSI and LDA (Figures~\ref{fig:anchor_plsi}-\ref{fig:anchor_lda}). Compared to anchor ingredients, we observe that there are more overlapping ingredients among topics. While the top ten and anchor ingredients for each topic in GpLSI and LDA reflect similar styles, it is difficult to match anchor ingredients to the top ten ingredients in pLSI because the top ten ingredients are too similar across topics.

\newpage

\bibliography{Reference}
\end{document}
\bibliography{Reference}

\end{document}